\documentclass[lettersize,journal]{IEEEtran}
\usepackage{amsmath,amsfonts}
\usepackage{algorithmic}
\usepackage{array}
\usepackage[caption=false,font=normalsize,labelfont=sf,textfont=sf]{subfig}
\usepackage{textcomp}
\usepackage{stfloats}
\usepackage{url}
\usepackage{verbatim}
\usepackage{graphicx}
\usepackage[numbers,sort&compress]{natbib}

\def\BibTeX{{\rm B\kern-.05em{\sc i\kern-.025em b}\kern-.08em
    T\kern-.1667em\lower.7ex\hbox{E}\kern-.125emX}}
\usepackage{balance}

\usepackage{xcolor}

\usepackage{amsthm}

\newtheorem{theorem}{Theorem}
\newtheorem{proposition}{Proposition}
\newtheorem{lemma}{Lemma}

\newtheorem{assumption}{Assumption}

\theoremstyle{definition}
\newtheorem{definition}{Definition}
\newtheorem{exampleinner}{Example}

\newenvironment{example}
  {\pushQED{\qed}\exampleinner}
  {\popQED\endexampleinner}

\newtheorem{remark}{Remark}

\usepackage{bm}

\newcommand{\E}{\mathbb{E}}

\newcommand{\bx}{\bm{x}}
\newcommand{\by}{\bm{y}}
\newcommand{\bz}{\bm{z}}
\newcommand{\bu}{\bm{u}}

\newcommand{\reals}{\mathbb{R}}

\definecolor{mycyan}{RGB}{0,204,204}

\definecolor{myorange}{RGB}{252, 105, 16}

\begin{document}
\title{Everywhere Learning: Artificial Intelligence with Pointwise Constraints}
\author{Ignacio~Boero*,
        Ignacio~Hounie*,
        Luiz~Chamon,
        and Alejandro~Ribeiro%
\thanks{Ignacio Boero (iboero@seas.upenn.edu) is with the Department of Electrical and Systems Engineering, University of Pennsylvania.}
\thanks{Ignacio Hounie (ihounie@seas.upenn.edu) is with the Department of Electrical and Systems Engineering, University of Pennsylvania.}
\thanks{Luiz Chamon (luiz.chamon@polytechnique.edu) is with the École polytechnique, Institut Polytechnique de Paris}

\thanks{Alejandro  Ribeiro (aribeiro@seas.upenn.edu) is with the Department of Electrical and Systems Engineering, University of Pennsylvania.}

\thanks{* Corresponding authors}}

\markboth{Journal of \LaTeX\ Class Files,~Vol.~18, No.~9, September~2020}%
{How to Use the IEEEtran \LaTeX \ Templates}

\maketitle

\begin{abstract}
Everywhere learning is a new paradigm whereby Artificial Intelligence (AI) systems are trained to satisfy loss constraints with probability one over the data distribution. This is in contrast to the standard paradigm of training AI systems to minimize average losses. We develop an approximate duality theory to substantiate a generalization analysis that establishes the proximity between solutions of empirical and statistical everywhere learning problems. Our results show that dual variables reweigh the data distribution towards points in which loss constraints are more difficult to satisfy and that generalization is controlled by the mismatch between the concentration of mass of the data distribution and the concentration of mass on points where constraints are more difficult to satisfy. We further show that we can control generalization with a sparse L1 penalty on constraint relaxations. We illustrate the merits of everywhere learning with an experiment in agentic classification for language model tasks.
\end{abstract}

\begin{IEEEkeywords}
Constrained Learning, PAC learning, Constrained Optimization, Pointwise Constraints.
\end{IEEEkeywords}

\section{Introduction}
We study the learnability of \emph{everywhere learning} problems, a paradigm in which AI systems are trained to satisfy constraints with probability one over the data distribution. This paradigm departs from standard statistical risk minimization (SRM), which trains models by minimizing the expected value of a loss~\cite{shalev2014understanding}. While SRM is effective for optimizing a single objective, it is not well suited for modern AI systems, whose deployment requires satisfying multiple, often competing, design criteria~\cite{Stewart2008, RobLear, ecommerce-example}. Everywhere learning separates objectives---what we want a model to optimize---from requirements---what the model needs to satisfy. This prevents trade-offs in which violations of one requirement is compensated by improvements in other requirement, as occurs in many extensions of SRM that aggregate multiple losses through weighted sums~\cite{sener2018multi, zhang2021survey, NEURIPS2020_ebea2325, multiobj, multilear}.

 Existing work has studied constrained learning formulations in which requirements are imposed on expectation over the data distribution~\cite{zhang2025alignmentlargelanguagemodels,pmlr-v84-narasimhan18a, NEURIPS2025_1af991de, chamon2022constrained-TIT, elenter2024nearoptimalsolutionsconstrainedlearning}. However, enforcing constraints only in expectation provides no guarantee on how often violations occur, nor on how large they can be at individual data points. A constraint may be satisfied on average while certain regions of the input space systematically fail to meet the requirement. These regions may correspond to specific users, operating regimes, or safety-critical scenarios. The issue becomes even more pronounced with multiple constraints, as different regions may violate different requirements, so all constraints can be satisfied in expectation even though no region satisfies them all simultaneously. This motivates imposing constraints pointwise, enforcing each requirement to hold almost everywhere over the corresponding data distribution.

Everywhere learning, like any statistical learning problem, assumes that data distributions are unknown, and are accessed only through samples. We therefore need to understand when the statistical problem can be approximated, with high probability, from finitely many samples. To this end, we introduce a probably approximately correct constrained (PACC) notion of learnability for everywhere learning (Definition \ref{def:pacc}). This definition extends classical PAC learnability~\cite{ Vapnik1999-VAPTNO} by requiring both approximate optimality of the objective value and approximate feasibility of the pointwise constraints. However, unlike in standard SRM, uniform convergence of the objective and constraint losses is not sufficient to guarantee learnability (Example~\ref{ex:counter-uc-lambda}). This motivates the study of learnability for everywhere learning problems. For that, we turn to the dual domain.

We study when the dual problem can itself be solved from finite samples, a property we call \emph{dual learnability} (Definition \ref{def:dual-pac}). We then show how dual learnability transfers back to PACC learnability of the primal problem. The dual formulation introduces a multiplier \(\lambda(\bz)\) that reweights the constraint violation at each sample \(\bz\). We show that a sufficient condition for dual learnability is that the optimal multiplier has bounded \(L_\infty\) norm (Theorem \ref{thm:dual-learn}). This condition has a natural interpretation. The multiplier \(\lambda(\bz)\) measures the sensitivity of the constrained optimum to perturbations of the constraint at \(\bz\), normalized by the probability of observing that sample. Large values of \(\lambda(\bz)\) therefore identify regions that have high impact on the constrained optimum but are unlikely to appear in a finite sample set (Subsection \ref{sec:ass-disc}).

Although natural, the boundedness condition on \(\lambda^\star\) need not hold in practice. The constrained optimum may be highly sensitive to regions that have small probability under the data distribution. We therefore enforce the condition directly by restricting the dual variables to have bounded \(L_\infty\) norm. This restriction has a simple primal interpretation. It is equivalent to introducing an \(L^1\)-penalized relaxation of the pointwise constraints, which encourages the relaxation to be sparse (Proposition \ref{prop:dual_equiv}). This yields a principled relaxation strategy in which better generalization is obtained by sparsely relaxing the pointwise constraints and modifying the original problem only where necessary. Moreover, this relaxation is simple to implement in practice because it amounts to clipping the empirical dual variables (Theorem~\ref{thm:dual_learnability_res}).

We then relate dual learnability back to primal learnability. In convex problems, strong duality allows dual guarantees to transfer directly to PACC learnability (Theorem~\ref{thm:pacc}), and solving either the empirical primal problem or the empirical dual problem yields the same statistical guarantees (Proposition~\ref{prop:primal-est-same-dual}). Modern AI parametrizations, however, are typically nonconvex. For this setting, we show that primal learnability can still be recovered up to an approximation term controlled by the richness of the hypothesis class. In particular, when \(\mathcal H\) approximates its convex hull well, the duality gap is small and the empirical primal rule is near-PACC learnable (Theorem~\ref{thm:near-pacc}). For the empirical dual rule, the guarantee requires additional care because primal recovery is no longer automatic. We prove that there exists a Lagrangian minimizer satisfying near-PACC guarantees, with an additional term depending on the empirical complementary-slackness error (Proposition~\ref{prop:dual_near_pacc}).

We conclude with numerical experiments that validate the practical benefits of everywhere learning. The results show that enforcing pointwise constraints can outperform weighted-sum and penalty-based methods, that the proposed dual regularization can improve generalization, and that dual methods recover high-quality solutions even in nonconvex settings.




\subsection{Related Work}

\textbf{Constrained Learning}.
The constrained learning literature studies learning frameworks where predictors are trained to satisfy requirements instead of minimizing an objective. Although recent works have demonstrated the practical value of learning under pointwise constraints~\cite{owerko2025learningoptimalpowerflow, ramirez2025feasiblelearning}, the theoretical understanding of such formulations remains limited. In particular, the work in~\cite{chamon2021probablyapproximatelycorrectconstrained} establishes learning guarantees for both expected-value and \(\mathcal D\)-almost everywhere constraints, but under a weaker notion of learnability, whereas~\cite{chamon2022constrained-TIT} strengthens these guarantees only in the case of expected-value constraints. Our work builds on these results by extending the stronger learnability guarantees of~\cite{chamon2022constrained-TIT} to the setting of \(\mathcal D\)-almost everywhere constraints.

\textbf{Robust Optimization}.
Robust optimization studies decision-making under uncertainty by seeking solutions that remain feasible, and often near-optimal, for all realizations of uncertain parameters within a prescribed uncertainty set~\cite{ben2009robust,bertsimas2010theoryapplicationsrobustoptimization, tulabandhula2014robustoptimizationusingmachine}. At a high level, our framework shares with robust optimization the objective of controlling adverse outcomes. However, the distinction is that in robust optimization the uncertainty is exogenous and encoded through a user-specified uncertainty set. In our setting the constraints are required to hold \(\mathcal D\)-almost everywhere for some unknown distribution, from which only samples are available.

\noindent
\textbf{Chance Constraints}. 
Chance-constrained optimization studies statistical optimization problems in which constraints are enforced with high probability over the data distribution~\cite{charnes1959chance, shapiroSP, kalogerias2022strong}. This framework is natural in applications such as portfolio optimization, where feasibility is required at a prescribed confidence level and a small probability of violation is part of the model. A large body of work studies finite-sample approximations of such problems, including scenario approximation~\cite{campibook,Nemirovski2006} and sample-average approximation~\cite{Kim2015AGT,SAA2}. Since the required feasibility level is strictly below one, these methods can exploit combinatorial arguments that control how many sampled constraints may be violated while still providing probabilistic guarantees. Our setting corresponds to the limiting case in which the violation probability is zero. This modeling distinction is natural for modern learning problems, where the goal is to learn representations that satisfy requirements throughout the data distribution, rather than only at a prescribed confidence level. The absence of an allowed violation probability requires turning to the dual domain, where the multipliers quantify the sensitivity of the problem to these regions and determine whether the problem is learnable.

\section{Almost Everywhere Learning}
\label{sec:ae-l}

Let \(\{\mathcal D_i\}_{i=0}^m\) be probability distributions over input-output pairs \((\mathbf{x},\mathbf{y}) \in \mathcal X \times \mathcal Y\). Given loss functions \(\ell_i:\mathcal Y\times\mathcal Y\to[0,B]\) and thresholds \(c_i\in\mathbb R\), for \(i=0,\ldots,m\), we seek a parametric predictor \(f_\theta:\mathcal X\to\mathcal Y\) in the hypothesis class $\mathcal H := \{f_\theta:\theta\in\Theta\subseteq\mathbb R^p\}$
that solves the \emph{everywhere learning problem}:
\begin{equation}\label{P:primal_statistical}\tag{P}
\begin{aligned}
	P^\star = \min_{f_\theta \in \mathcal{H}}&
		&&\E_{(\bx,y) \sim \mathcal D_0} \!\Big[ \ell_0\big( f_{\theta}(\bx),y \big) \Big]
	\\
	\text{s. to}& &&  \ell_i\big( f_{\theta}(\bx),y \big)  \leq c_i \quad \mathcal D_i\text{-a.e.}
\end{aligned}
\end{equation}
The objective in~\eqref{P:primal_statistical} minimizes the loss \(\ell_0\) in expectation under \(\mathcal D_0\), as in standard statistical risk minimization (SRM). Unlike standard SRM, however, problem~\eqref{P:primal_statistical} also imposes constraints on several losses, allowing multiple requirements to be enforced simultaneously. These constraints require each loss to remain below its tolerance \(c_i\), preventing improvements in the objective from compensating for unsatisfied constraints. This differs from extensions of SRM that combine multiple losses through a weighted sum in the objective~\cite{sener2018multi, zhang2021survey}. Moreover, problem~\eqref{P:primal_statistical} imposes the constraints \emph{pointwise}, rather than only in expectation. This prevents large violations on individual samples from being hidden by good average performance, making the formulation appropriate when constraint violations are unacceptable. We next illustrate~\eqref{P:primal_statistical} through several AI applications.

\noindent
\textbf{Reliability}. In critical settings, where mistakes can have severe consequences, performance must be controlled even on rare events. Minimizing only the average loss can lead to large disparities across inputs and arbitrarily poor performance on low-probability regions. We enforce reliability by requiring the objective loss to remain below a maximum acceptable error \(c\) almost everywhere:
\begin{equation}
\begin{aligned}
\min_{\theta}\quad & \E_{(\bx,y)\sim \mathcal D}\!\left[ \ell_0\big(f_\theta(\bx),y\big)\right] \\
\text{s.t.}\quad &  \ell_0\big(f_\theta(\bx),y\big) \leq c
\qquad \mathcal D\text{-a.e.}
\end{aligned}
\end{equation}

\noindent
\textbf{Invariance}. Many AI tasks require predictions to be invariant under specific input perturbations. Examples include rotations in image classification or changes to a person's gender in credit assignment. Let \(g(\bx)\) denote the perturbed input. We enforce invariance by constraining the discrepancy between the predictions at \(\bx\) and \(g(\bx)\):
\begin{equation}
\begin{aligned}
\min_{\theta}\quad & \E_{(\bx,y)\sim \mathcal D}\!\left[\ell_0(f_\theta(\bx),y)\right] \\
\text{s.t.}\quad & \|f_\theta(g(\bx)) - f_\theta(\bx)\|_2^2 \leq c
\qquad \mathcal D\text{-a.e.}
\end{aligned}
\end{equation}

\noindent
\textbf{Finetuning}. Finetuning consists of adapting a reference model \(\pi_{\mathrm{ref}}\) to a new task while remaining as close as possible to the reference model. For language models, we represent learning the new task \(\mathcal D_{\mathrm{new}}\) by requiring the model to assign probability at least \(c\) to a correct response. At the same time, we minimize the KL divergence on a reference distribution \(\mathcal D_{\mathrm{ref}}\) to limit deviation from \(\pi_{\mathrm{ref}}\):
\begin{equation}
\begin{aligned}
\min_{\theta}\quad &
\E_{(\bx,\by)\sim \mathcal D_{\mathrm{ref}}}\!\left[
D_{\mathrm{KL}}\big(\pi_\theta(\by \mid \bx)\,\|\,\pi_{\mathrm{ref}}(\by \mid \bx)\big)
\right] \\
\text{s.t.}\quad &
\pi_\theta(\by\mid \bx) \geq c
\qquad \mathcal D_{\mathrm{new}}\text{-a.e.}
\end{aligned}
\end{equation}

Henceforth, to reduce notational burden, we consider the single-constraint case by setting \(i=1\), \(c_1=0\), and denoting \(\ell_1:=\ell\). All results below extend directly to multiple constraints.


\subsection{Learnability of constrained problems}

Because the distributions in~\eqref{P:primal_statistical} are unknown, the problem must be solved from samples drawn from \(\mathcal D_0\) and \(\mathcal D\). To introduce the relevant learning-theoretic notions, we first consider the unconstrained counterpart of~\eqref{P:primal_statistical}, which corresponds to the standard statistical risk minimization (SRM) problem:
\begin{equation}\label{P:primal_unconstrained}
\begin{aligned}
	Q^\star = \min_{f_\theta \in \mathcal{H}}&
		&&\E_{(\bx,y) \sim \mathcal D} \!\Big[ \ell_0\big( f_{\theta}(\bx),y \big) \Big].
\end{aligned}
\end{equation}
Classical learning theory~\cite{Vapnik1999-VAPTNO} studies when~\eqref{P:primal_unconstrained} is learnable in the \emph{probably approximately correct} sense.

\begin{definition}[PAC learnability]\label{def:pac}
A hypothesis class \(\mathcal{H}\) is \emph{probably approximately correct} (PAC) learnable with respect to a loss function \(\ell_0\) if, for every \(\epsilon,\delta \in (0,1)\) and every distribution \(\mathcal D\), there exists a learning rule such that, given \(N \ge N_{\mathcal{H}}(\epsilon,\delta)\) samples, it returns an estimator \(f_{\theta} \in \mathcal{H}\) satisfying, with probability at least \(1-\delta\),
\begin{equation}\label{eq:pac_obj}
\left|
\mathbb{E}_{(\bx,y)\sim \mathcal D}\big[\ell_0(f_{\theta}(\bx),y)\big]
-
Q^\star
\right|
\le
\epsilon .
\end{equation}
Such a rule is called a \emph{PAC learner}.
\end{definition}

Definition~\ref{def:pac} requires a learning rule whose returned estimator achieves statistical risk close to the optimal value, with high probability over the sampled training set. A natural PAC learner for~\eqref{P:primal_unconstrained} is empirical risk minimization (ERM). Given an i.i.d.\ sample \(S=\{(\bx_i,\by_i)\}_{i=1}^{N}\) from \(\mathcal D\), the ERM problem is
\begin{equation}\label{P:primal_unconstrained_empirical}
\begin{aligned}
	\hat Q^\star = \min_{f_\theta \in \mathcal{H}}&
		&& \frac{1}{N}\sum_{i=1}^N
        \ell_0\big( f_{\theta}(\bx_i),\by_i \big).
\end{aligned}
\end{equation}

Problem~\eqref{P:primal_unconstrained_empirical} replaces the expectation in~\eqref{P:primal_unconstrained} with a sample average. For any fixed \(f_\theta\), the law of large numbers ensures that this empirical average converges to the corresponding statistical risk. However, this pointwise convergence is not sufficient for learnability. For any fixed sample set \(S\), there may exist predictors \(f_\theta\in\mathcal H\) whose empirical risk poorly approximates their statistical risk. Since ERM selects the predictor using the same sample set \(S\), it may choose such pathological predictors. Uniform convergence rules out this behavior by requiring the empirical and statistical risks to be close uniformly over the hypothesis class.

\begin{definition}[Uniform convergence]\label{def:uniform-conv}
A hypothesis class \(\mathcal H\) satisfies the uniform convergence property with respect to a loss function \(\ell\) if, for every distribution \(\mathcal D\), there exists a function \(\zeta(N,\delta)\geq 0\), monotonically decreasing in \(N\), such that, with probability at least \(1-\delta\) over independent samples \(\{(\bx_n,\by_n)\}_{n=1}^N \sim \mathcal D^N\),
\begin{equation}\label{eq:unif-conv-obj}
\left|
\mathbb{E}_{(\bx,y)\sim \mathcal{D}}
\!\left[
\ell\!\big(f_\theta(\bx),y\big)
\right]
-
\frac{1}{N}\sum_{n=1}^N
\ell\!\big(f_\theta(\bx_n),y_n\big)
\right|
\leq
\zeta(N,\delta),
\end{equation}
uniformly over all \(f_\theta \in \mathcal H\).
\end{definition}

The uniform convergence property provides a rate \(\zeta(N,\delta)\) at which the empirical risk approximates the statistical risk uniformly over the hypothesis class. Although this is not explicit in Definition~\ref{def:uniform-conv}, requiring the bound to hold for all \(f_\theta \in \mathcal H\) imposes a complexity restriction on \(\mathcal H\). In Appendix~\ref{apx:radam}, we introduce Rademacher complexity and show that vanishing complexity yields a rate of the form~\eqref{eq:unif-conv-obj}. Henceforth, we assume that the uniform convergence rates are obtained through vanishing Rademacher complexity. A central result in classical learning theory is that uniform convergence is sufficient for PAC learnability of~\eqref{P:primal_unconstrained}, with ERM serving as a PAC learner.

\begin{lemma}[UC implies PAC, Corollary 4.4 of~\cite{shalev2014understanding}]
\label{prop:unif-conv-imply-pac}
If a hypothesis class \(\mathcal H\) satisfies the uniform convergence property, then \(\mathcal H\) is PAC learnable. Moreover, the ERM rule is a PAC learner for \(\mathcal H\).
\end{lemma}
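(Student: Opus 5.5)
The plan is the standard ERM argument: compare the empirical minimizer with a population minimizer, paying one uniform-convergence error at each of the two crossings between empirical and statistical risk. Fix \(\epsilon,\delta\in(0,1)\) and a distribution \(\mathcal D\). Write \(L(f_\theta)=\E_{(\bx,y)\sim\mathcal D}[\ell_0(f_\theta(\bx),y)]\) and \(\hat L(f_\theta)=\frac1N\sum_{n=1}^N \ell_0(f_\theta(\bx_n),y_n)\).

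By Definition~\ref{def:uniform-conv}, with probability at least \(1-\delta\) the event
\[
\sup_{f_\theta\in\mathcal H}\bigl|L(f_\theta)-\hat L(f_\theta)\bigr|\le \zeta(N,\delta)
\]
holds. The first step is to choose the sample size. Since \(\zeta\) is monotonically decreasing in \(N\), and since it comes from vanishing Rademacher complexity (as assumed after Definition~\ref{def:uniform-conv}, cf.\ Appendix~\ref{apx:radam}), it tends to zero. We therefore set \(N_{\mathcal H}(\epsilon,\delta)\) to be the smallest \(N\) with \(\zeta(N,\delta)\le\epsilon/2\). Monotonicity guarantees that every \(N\ge N_{\mathcal H}(\epsilon,\delta)\) also satisfies \(\zeta(N,\delta)\le\epsilon/2\).

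The second step works on this event with \(\hat f\) the ERM output of~\eqref{P:primal_unconstrained_empirical}. For any \(f_\theta\in\mathcal H\) we chain
\[
L(\hat f)\le \hat L(\hat f)+\tfrac{\epsilon}{2}\le \hat L(f_\theta)+\tfrac{\epsilon}{2}\le L(f_\theta)+\epsilon .
\]
The first and third inequalities use uniform convergence, and the middle one is ERM optimality. Taking the infimum over \(f_\theta\) gives \(L(\hat f)\le Q^\star+\epsilon\). The reverse bound \(L(\hat f)\ge Q^\star\) is immediate because \(\hat f\in\mathcal H\). Together these give~\eqref{eq:pac_obj}. This holds for every \(\mathcal D\), so ERM is a PAC learner and \(\mathcal H\) is PAC learnable.

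The main obstacle is technical rather than conceptual. The minima defining \(Q^\star\) and \(\hat Q^\star\) might not be attained. We will handle this by working with an \(\epsilon/4\)-approximate population minimizer and an approximate ERM solution, splitting \(\epsilon\) accordingly. The other point to check is that \(\zeta\to0\), which the bare monotonicity in Definition~\ref{def:uniform-conv} does not imply on its own. Here we rely on the standing assumption of vanishing Rademacher complexity with bounded loss in \([0,B]\), which gives \(\zeta(N,\delta)=O\bigl(\mathfrak R_N+B\sqrt{\log(1/\delta)/N}\bigr)\to0\).
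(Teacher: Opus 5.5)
Your proof is correct and is exactly the standard argument behind the cited Corollary~4.4 of Shalev-Shwartz and Ben-David, which the paper invokes without reproducing: on the uniform-convergence event, the chain $L(\hat f)\le \hat L(\hat f)+\epsilon/2\le \hat L(f_\theta)+\epsilon/2\le L(f_\theta)+\epsilon$ (in your notation), with $N_{\mathcal H}(\epsilon,\delta)$ chosen so that $\zeta(N,\delta)\le\epsilon/2$. Your remark that Definition~\ref{def:uniform-conv} only requires $\zeta$ to be monotone is a correct and necessary patch, since $\zeta\to0$ must then come from the standing vanishing-Rademacher-complexity assumption (the textbook definition builds this into its sample complexity); the approximate population minimizer, however, is unnecessary, because you already take the infimum over $f_\theta$.
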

Extending PAC learnability to the constrained problem~\eqref{P:primal_statistical} requires imposing guarantees not only on the objective value, but also on constraint satisfaction. This motivates the notion of \emph{probably approximately correct constrained} learnability.

\begin{definition}[PACC learnability]\label{def:pacc}
A hypothesis class \(\mathcal{H}\) is \emph{probably approximately correct constrained} (PACC) learnable with respect to an objective loss \(\ell_0\) and a constraint loss \(\ell\) if, for every \(\epsilon,\delta \in (0,1)\) and every pair of distributions \(\mathcal D_0\) and \(\mathcal D\), there exists a learning rule such that, given \(N_0 \ge N_{\mathcal H}(\epsilon,\delta)\) samples from \(\mathcal D_0\) and \(N \ge N_{\mathcal H}(\epsilon,\delta)\) samples from \(\mathcal D\), it returns a predictor \(f_\theta \in \mathcal H\) satisfying, with probability at least \(1-\delta\),
\begin{enumerate}
\item \emph{Probably approximately optimal:}
\begin{equation}\label{eq:pacc_obj}
\left|
\mathbb{E}_{(\bx,y)\sim \mathcal D_0}
\big[
\ell_0(f_\theta(\bx),y)
\big]
-
P^\star
\right|
\leq
\epsilon .
\end{equation}

\item \emph{Probably approximately feasible:}
\begin{equation}\label{eq:pacc_constraint}
\mathbb{P}_{(\bx,y)\sim \mathcal D}
\big[
\ell(f_\theta(\bx),y) \leq 0
\big]
\geq
1-\epsilon .
\end{equation}
\end{enumerate}
\end{definition}

PACC learnability extends classical PAC learnability by adding a feasibility requirement. The optimality condition~\eqref{eq:pacc_obj} mirrors the usual PAC objective guarantee, while~\eqref{eq:pacc_constraint} requires the learned predictor to satisfy the constraint on a set of probability at least \(1-\epsilon\). A related notion of PACC learnability for pointwise constraints was introduced in~\cite{chamon2021probablyapproximatelycorrectconstrained}, where optimality is required only through an upper bound, rather than through the two-sided condition in~\eqref{eq:pacc_obj}. This distinction is relevant because every feasible point of~\eqref{P:primal_statistical} is feasible for~\eqref{P:primal_empirical}, since constraints that hold almost everywhere also hold on any finite sample set. Thus, \eqref{P:primal_empirical} relaxes the statistical constraint set and may attain lower objective values by exploiting this relaxation. The lower bound in~\eqref{eq:pacc_obj} is therefore necessary to certify convergence to the statistical constrained optimum. Further discussion is provided in Appendix~\ref{apx:emp_is_relax}.

A natural extension of the ERM problem to the everywhere learning is imposing the constraints for every sample, while replacing the expectation for the sample average. Let \(S_0=\{(\bx_i,y_i)\}_{i=1}^{N_0}\) and \(S=\{(\bx_i,y_i)\}_{i=1}^{N}\) be i.i.d.\ samples from \(\mathcal D_0\) and \(\mathcal D\), respectively. The constrained empirical problem is,
\begin{equation}\label{P:primal_empirical}\tag{$\hat{\textup{P}}$}
\begin{aligned}
	\hat P^\star
    =
    \min_{f_\theta \in \mathcal{H}}
    \quad
	&
    \frac{1}{N_0}
    \sum_{(\bx_i,y_i)\in S_0}
    \ell_0\big(f_\theta(\bx_i),y_i\big)
	\\
	\text{s.t.}
    \quad
    &
    \ell\big(f_\theta(\bx_i),y_i\big)
    \leq 0
    \quad
    \forall\,(\bx_i,y_i)\in S .
\end{aligned}
\end{equation}

Since \eqref{P:primal_empirical} is the direct analogue of ERM for constrained problems, it is natural to ask whether PACC learnability follows from uniform convergence of the objective loss, the constraint loss, and the indicator of constraint satisfaction \(\mathbb{I}\{\ell(f_\theta(\bx),y)\leq 0\}\).

\begin{assumption}[Uniform convergence]\label{ass:unif_conv}
The uniform convergence property in Definition~\ref{def:uniform-conv} holds for the losses \(\ell_0\), \(\ell\), and \(\mathbb{I}\{\ell \leq 0\}\), with rates
\(\zeta_0(N,\delta)\), \(\zeta(N,\delta)\), and \(\zeta_I(N,\delta)\), respectively.
\end{assumption}

Perhaps surprisingly, this assumption is not sufficient. In Subsection~\ref{sec:ass-disc}, Example~\ref{ex:counter-uc-lambda} shows a case where Assumption~\ref{ass:unif_conv} holds but Definition~\ref{def:pacc} fails. This motivates the need to understand when constrained problems are PACC learnable, and why PACC learnability can fail. To answer these questions, we turn to the dual domain.

\section{Dual learning}\label{sec:dual}
Let \(\bz=(\bx,y)\) be a compact representation of the data pair and let \(\mathcal{Z}=\mathcal{X}\times\mathcal{Y}\) be the range of $\bz$. We define a Lagrange multiplier as a nonnegative function \(\lambda:\mathcal{Z}\to\mathbb{R}_+\), which assigns to each sample \(\bz\) a multiplier value \(\lambda(\bz)\geq 0\). We restrict multipliers to be absolutely integrable with respect to the data distribution, i.e., \(\lambda \in L^1_+(\mathcal D)\). The Lagrangian function \(L:\mathcal{H}\times L^1_+(\mathcal D)\to\mathbb{R}\) takes a predictor \(f_\theta\) and a multiplier \(\lambda\), and adds to the objective loss a multiplier-weighted penalty for constraint violations,
\begin{equation}\label{E:lagrangian_csl_param}
	L(f_{\theta}, \lambda)
	:=
	\E_{\bz \sim \mathcal D_0}
	\!\Big[
	    \ell_0\big(f_{\theta},\bz\big)
	\Big]
	+
	\E_{\bz \sim \mathcal D}
	\!\Big[
	    \lambda(\bz)\ell\big(f_{\theta},\bz\big)
	\Big].
\end{equation}
In \eqref{E:lagrangian_csl_param} we write \(\ell(f_\theta(\bx),y) = \ell(f_\theta,\bz)\) for notation simplicity. 
The minimizer of the Lagrangian function over the hypothesis class is called the primal minimizer,
\begin{equation}\label{eq:primal-min-stat}
f_{\theta}(\lambda) \in \arg \min_{f_\theta \in \mathcal{H}} {L}(f_\theta,\lambda),
\end{equation}
and its value is known as the dual function $g: L^1_+(\mathcal{D}) \to \reals$,
\begin{equation}
    g(\lambda) := L(f_{\theta}(\lambda), \lambda) = \min_{f_{\theta} \in \mathcal{H}} L(f_{\theta}, \lambda).
\end{equation}
The dual problem maximizes the dual function over all admissible nonnegative multipliers. When the supremum is attained, we denote by \(\lambda^\star\) an optimal multiplier, so that
\begin{equation}\label{P:dual_statistical}
\tag{D}
	D^\star
	:=
	\sup_{\lambda \in L^1_+(\mathcal{D})} g(\lambda)
	=
	g(\lambda^\star).
\end{equation}

In convex settings, mild conditions guarantee that the dual and primal problems are equivalent. In our setting, standard parametrization choices for \(f_\theta\) (e.g., neural networks) render the learning problem~\eqref{P:primal_statistical} not convex. In this section, we focus on the learnability of the dual problem itself, whereas in Section~\ref{sec:dual-imply-pac} we show that rich parametrization classes \(\mathcal H\) can control the gap induced by working with the dual problem.

\subsection{Dual learnability}

We provide a PAC-style definition for learnability of the dual problem, as Definition~\ref{def:pac} do not handle the hierarchical structure of a problem with a nested optimization.
\begin{definition}[Dual PAC-learnability]\label{def:dual-pac}
A hypothesis class \(\mathcal{H}\) is \emph{probably approximately correct} dual learnable if, for every \(\epsilon,\delta \in (0,1)\) and every distribution \(\mathcal D\), there exists a learning rule such that, given \(N \ge N_{\mathcal H}(\epsilon,\delta)\) samples, it returns estimators \(\hat \lambda \in L^1_+(\mathcal{D})\) and \(\hat f_\theta \in \mathcal{H}\) satisfying, with probability at least \(1-\delta\),
\begin{equation}\label{eq:dual_pac_obj}
\big| D^* - L(\hat f_\theta, \hat \lambda) \big| \le \epsilon.
\end{equation}
\end{definition}

Definition \ref{def:dual-pac} parallels the standard PAC definition, as it requires the statistical Lagrangian evaluated at the returned primal-dual pair to be close to the optimal value $D^*$. The definition is motivated by related notions for saddle-point and minimax learning problems~\citep{zhang2020generalizationboundsstochasticsaddle,ozdaglar2022goodmetricstudygeneralization,farnia2020trainsimultaneouslygeneralizebetter}, but adapted to the dual formulation considered here. 

We formulate the empirical dual rule by taking the dual of the empirical primal problem~\eqref{P:primal_empirical}. This corresponds to replacing the expectation in~\eqref{E:lagrangian_csl_param} with a sample average and assigning a nonnegative multiplier to each training sample. Explicitly, let \(S=\{\bz_n\}_{n=1}^{N}\) be a collection of \(N\) i.i.d.\ samples drawn from \(\mathcal{D}\), and define the empirical Lagrangian \(\hat L(f_\theta,\lambda):\mathcal{H}\times\mathbb{R}_+^N \to \mathbb{R}\) by
\begin{equation}
\label{E:empirical_lagrangian}
\hat{L}(f_\theta, \lambda) =
\frac{1}{N} \sum_{\bz_n \in S} \ell_0\big( f_{\theta},\bz_n \big)
+ \lambda_n \ell\big( f_{\theta},\bz_n \big).
\end{equation}
As in the statistical case, we define the empirical primal minimizer,
\begin{equation}\label{eq:primal-min-stat}
\hat f_{\theta}(\lambda) \in \arg \min_{f_\theta \in \mathcal{H}} \hat L (f_\theta,\lambda),
\end{equation}
whose value yields the empirical dual function
\begin{equation}
    \hat g(\lambda) := \hat  L(\hat  f_{\theta}(\lambda), \lambda) = \min_{f_{\theta} \in \mathcal{H}} \hat L(f_{\theta}, \lambda).
\end{equation}
Maximizing over the set of admissible multipliers results in the empirical dual problem. When a maximizer exists, we denote it by \(\hat\lambda^\star\), so that
\begin{equation}\label{P:dual_empirical}
\tag{$\hat{\textup{D}}$}
	\hat{D}^\star = \max_{\lambda \in \reals_+^N} \hat g(\lambda) = \hat g(\hat\lambda^\star).
\end{equation}

We assume the existence of optimal multipliers for both the statistical and empirical dual problems. We state this condition explicitly so that it can be invoked throughout the following results.
\begin{assumption} \label{ass:dual_exis}
    The statistical and empirical dual optimal solutions \(\lambda^\star\) and
    \(\hat\lambda^\star\) exist.
\end{assumption}
Assumption~\ref{ass:dual_exis} is imposed here to avoid technical measure-theoretic arguments. In Appendix~\ref{apx:distr_space}, we show that by working directly in the space of measures, this assumption can be replaced with a strict feasibility condition.

While Definition~\ref{def:dual-pac} gives a general notion of dual learnability, the empirical rule~\eqref{P:dual_empirical} admits a simpler equivalent criterion.
\begin{lemma}[Equivalence for dual learnability]\label{prop:dual_equiv}
Let Assumptions~\ref{ass:unif_conv} and~\ref{ass:dual_exis} hold. Suppose that for every \(\epsilon,\delta \in (0,1)\) and every distribution \(\mathcal D\), whenever \(N \ge N_{\mathcal H}(\epsilon,\delta)\), it holds with probability at least \(1-\delta\) that
\begin{equation}
\big| D^* - \hat D^* \big| \le \epsilon .
\end{equation}    
Then~\eqref{P:dual_statistical} is dual PAC learnable (Definition~\ref{def:dual-pac}).
\end{lemma}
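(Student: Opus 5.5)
The plan is to build, from the solution of the empirical dual problem~\eqref{P:dual_empirical}, a pair $(\hat f_\theta,\hat\lambda)$ whose statistical Lagrangian is close to $\hat D^\star$. We then chain this with the hypothesis $|D^\star-\hat D^\star|\le\epsilon$ through the triangle inequality
\begin{equation*}
\big|D^\star - L(\hat f_\theta,\hat\lambda)\big| \le \big|D^\star-\hat D^\star\big| + \big|\hat D^\star - L(\hat f_\theta,\hat\lambda)\big| .
\end{equation*}
The empirical multiplier $\hat\lambda^\star\in\mathbb{R}_+^N$ lives only on the sample points and is not itself an element of $L^1_+(\mathcal D)$. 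I would therefore take $\hat\lambda\equiv 0$ as the returned multiplier, so that $L(\hat f_\theta,0)=\E_{\mathcal D_0}[\ell_0(\hat f_\theta,\bz)]$. The task then reduces to choosing $\hat f_\theta$ so that its empirical objective equals (or nearly equals) $\hat D^\star$, and invoking Assumption~\ref{ass:unif_conv} for $\ell_0$.

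\textbf{Choice of $\hat f_\theta$.} Since $\ell\in[0,B]$ and the threshold is $0$, every penalty term $\lambda_n\ell(f_\theta,\bz_n)$ is nonnegative. Hence $\hat g$ is nondecreasing in each coordinate, and at the maximizer $\hat\lambda^\star$ (Assumption~\ref{ass:dual_exis}) we have $\hat g(\hat\lambda^\star+t\mathbf 1)=\hat D^\star$ for all $t\ge 0$. Let $f_t$ minimize $\hat L(\cdot,\hat\lambda^\star+t\mathbf 1)$. Then
\begin{equation*}
\frac{t}{N}\sum_n \ell(f_t,\bz_n) \le \hat D^\star - \frac1N\sum_n \ell_0(f_t,\bz_n) \le B ,
\end{equation*}
so the constraint losses on $S$ are $O(1/t)$. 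Moreover, the empirical objective of $f_t$ is at most $\hat D^\star$. It is also at least $\hat D^\star - \frac{1}{N}\sum_n(\hat\lambda^\star_n+t)\ell(f_t,\bz_n)$. I would take $\hat f_\theta=f_t$ for $t$ large, or pass to a limit point if $\Theta$ is compact and the losses are continuous. This gives a predictor with $\frac1N\sum_n\ell_0(\hat f_\theta,\bz_n)$ within any prescribed $\eta>0$ of $\hat D^\star$.

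\textbf{Concluding.} By Assumption~\ref{ass:unif_conv} for $\ell_0$, with probability at least $1-\delta/2$,
\begin{equation*}
\Big|\E_{\mathcal D_0}[\ell_0(\hat f_\theta,\bz)] - \tfrac1{N_0}\textstyle\sum\ell_0(\hat f_\theta,\bz_i)\Big|\le\zeta_0(N_0,\delta/2) .
\end{equation*}
Combining this with the closeness to $\hat D^\star$ and with the hypothesis, applied at $(\epsilon/2,\delta/2)$, and taking a union bound gives $|D^\star-L(\hat f_\theta,0)|\le\epsilon/2+\zeta_0+\eta$. Choosing $N$ large enough that $\zeta_0\le\epsilon/4$, and $\eta\le\epsilon/4$, yields Definition~\ref{def:dual-pac}. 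Note that the objective sum runs over $S_0$, so I would take $N$ in the definition to control both $N_0$ and $N$.

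\textbf{Main obstacle.} The main obstacle is the step showing that the penalty term of the chosen Lagrangian minimizer can be made negligible. In the nonconvex setting, complementary slackness at $\hat\lambda^\star$ is not automatic. The argument above replaces it by monotonicity of $\hat g$ together with pushing the multipliers to infinity. One must check carefully that the product $(\hat\lambda^\star_n+t)\ell(f_t,\bz_n)$, and not merely $\ell(f_t,\bz_n)$, vanishes. If the $1/t$ rate is too weak, I would fall back on a limit argument: extract a convergent subsequence of $f_t$, whose limit is feasible on $S$ and attains empirical objective $\le\hat D^\star$. This uses that $\hat D^\star\le\hat P^\star$, which holds by weak duality.
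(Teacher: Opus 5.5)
There is a genuine gap at the step your construction rests on. Monotonicity of $\hat g$ comes from reading $\ell\in[0,B]$ with threshold $0$ literally. Under that reading the constraint $\ell\le 0$ is the degenerate equality $\ell=0$. The paper's single-constraint convention stands for $\ell_1-c_1$, which takes negative values: Example~\ref{ex:counter-uc-lambda} uses $\ell=\theta-x$, and Appendix~\ref{apx:distr_space} assumes $\ell(\bar f_\theta,\bz)\le -c$.

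Once $\ell$ can be negative, $\hat g$ need not be monotone, and returning $\hat\lambda\equiv 0$ fails outright. It requires some $f_\theta\in\mathcal H$ whose empirical objective is close to $\hat D^\star$, which is essentially empirical strong duality. The lemma assumes only Assumptions~\ref{ass:unif_conv} and~\ref{ass:dual_exis}, with no convexity.

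For instance, take the following setup:
\begin{itemize}
\item $\mathcal D$ is a point mass at $\bz_0$ and $\ell_0$ does not depend on $\bz$;
\item $\mathcal H=\{f_1,f_2\}$ with $\ell_0(f_1)=0$, $\ell(f_1,\bz_0)=1$, $\ell_0(f_2)=1$, $\ell(f_2,\bz_0)=-1$ (e.g.\ $\ell_1\in\{2,0\}$, $c_1=1$).
\end{itemize}
Empirical averages are exact, so both assumptions hold. Both $g(\lambda)$ and $\hat g(\lambda)$ equal $\min\{\bar\lambda,1-\bar\lambda\}$, with $\bar\lambda=\lambda(\bz_0)$, resp.\ $\frac1N\sum_n\lambda_n$. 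Hence $D^\star=\hat D^\star=\tfrac12$ and the lemma's hypothesis holds. Yet $L(f,0)=\ell_0(f)\in\{0,1\}$ for both $f\in\mathcal H$, so your pair misses $D^\star$ by $\tfrac12$.

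In this example $\hat g(\hat\lambda^\star+t\mathbf 1)=\tfrac12-t$ strictly decreases along your ray. Your fallback also fails: a limit of $f_t$ that is feasible on $S$ with empirical objective at most $\hat D^\star$ would force $\hat P^\star\le\hat D^\star$, whereas here $\hat P^\star=1$.

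In the literal $\ell\ge 0$ case your ``main obstacle'' actually disappears. For any $t>0$,
$\hat D^\star=\hat g(\hat\lambda^\star+\tfrac t2\mathbf 1)\le\hat L(f_t,\hat\lambda^\star+t\mathbf 1)-\tfrac{t}{2N}\sum_n\ell(f_t,\bz_n)=\hat D^\star-\tfrac{t}{2N}\sum_n\ell(f_t,\bz_n)$,
which forces $\ell(f_t,\bz_n)=0$ for all $n$. This only covers the degenerate case, though.

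You correctly noticed that $\hat\lambda^\star\in\mathbb R^N_+$ is not an element of $L^1_+(\mathcal D)$. The missing idea is to lift it rather than discard it, because in the nonconvex case the penalty term carries part of the value.

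The paper keeps $\hat f_\theta^\star\in\arg\min_{f_\theta\in\mathcal H}\hat L(f_\theta,\hat\lambda^\star)$. It returns $\hat\lambda_\rho(\bz)=\frac1N\sum_n\hat\lambda^\star_n\,\mathbb I\{\bz\in B_\rho(\bz_n)\}/\mathcal D(B_\rho(\bz_n))$, a sum of $\mathcal D$-normalized bumps around the samples. As $\rho\to 0$, $\E_{\bz\sim\mathcal D}[\hat\lambda_\rho(\bz)\ell(\hat f_\theta^\star,\bz)]\to\frac1N\sum_n\hat\lambda^\star_n\ell(\hat f_\theta^\star,\bz_n)$. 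So $L(\hat f_\theta^\star,\hat\lambda_\rho)-\hat D^\star$ reduces to the generalization gap of $\ell_0$ at $\hat f_\theta^\star$, which is controlled by $\zeta_0$. Feasibility, complementary slackness and the duality gap never enter. In the example above this returns $\hat\lambda\equiv\tfrac12$ on the support, and $L(f_1,\tfrac12)=\tfrac12=D^\star$.
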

\begin{proof}
    See Appendix~\ref{apx:dual_equiv}.
\end{proof}

Lemma~\ref{prop:dual_equiv} shows that, for the empirical rule~\eqref{P:dual_empirical}, dual learnability can be certified by bounding the difference between the statistical and empirical optimal dual values. We use this characterization next to derive sufficient conditions for dual learnability.
\subsection{Sufficient condition for dual learnability}\label{sec:suff-cond}
To prove dual learnability, we require that the optimal dual multiplier \(\lambda^*\) has a bounded \(L_\infty\) norm.
\begin{assumption}\label{ass:bounded-inf-norm}
The optimal dual multiplier \(\lambda^*\) satisfies
\begin{equation}\label{eq:ass-prop-6}
\|\lambda^*\|_\infty \leq \gamma < \infty .
\end{equation}
\end{assumption}
Assumption~\ref{ass:bounded-inf-norm} results natural from the sensitivity interpretation of dual variables. In Subsection~\ref{sec:ass-disc} we show that the multiplier \(\lambda^*(\bz)\) measures the impact that relaxing a constraint at sample \(\bz\) has on the optimal dual value relative to the probability of the sample. Therefore, the assumption prevents the dual problem from being arbitrarily sensitive to low-probability samples. Although natural, the assumption is not necessarily mild. This is why in Subsection~\ref{subsec:sparse} we show that, even when the assumption does not hold for the original problem, it can be enforced through a sparse relaxation of the constraints.

We now state the resulting learnability guarantee.

\begin{theorem}[Sufficient condition for dual learnability]\label{thm:dual-learn}
Suppose Assumptions~\ref{ass:unif_conv}--\ref{ass:bounded-inf-norm} hold. Then, with probability at least \(1-2\delta\),
\begin{equation}\label{eq:dual-lear-rate}
    |D^* - \hat D^*|
    \leq
    2\zeta_0(N,\delta) + \gamma\,\zeta(N,\delta),
\end{equation}
and, consequently, \eqref{P:dual_statistical} is dual learnable.
\end{theorem}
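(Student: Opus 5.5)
We will bound $D^\star-\hat D^\star$ and $\hat D^\star-D^\star$ separately, each by comparing the dual functions at a well-chosen multiplier. We then combine them on the intersection of two uniform-convergence events, one for $\ell_0$ and one for $\ell$, each of probability at least $1-\delta$; a union bound gives probability at least $1-2\delta$. Dual learnability will then follow from Lemma~\ref{prop:dual_equiv}, since the right-hand side of \eqref{eq:dual-lear-rate} vanishes as $N$ grows.

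\emph{Upper bound on $D^\star-\hat D^\star$.} Restrict the statistical optimal multiplier to the sample, $\tilde\lambda_n:=\lambda^\star(\bz_n)$, which is feasible for \eqref{P:dual_empirical}, so $\hat D^\star\ge \hat g(\tilde\lambda)=\hat L(\hat f_\theta(\tilde\lambda),\tilde\lambda)$. Also $D^\star=g(\lambda^\star)\le L(\hat f_\theta(\tilde\lambda),\lambda^\star)$. Hence
\begin{equation*}
D^\star-\hat D^\star\le \sup_{f_\theta\in\mathcal H}\big|L(f_\theta,\lambda^\star)-\hat L(f_\theta,\tilde\lambda)\big| .
\end{equation*}
The $\ell_0$ part contributes at most $\zeta_0(N,\delta)$. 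The constraint part involves the composite loss $\lambda^\star(\bz)\ell(f_\theta,\bz)$. Because $0\le\lambda^\star\le\gamma$ by Assumption~\ref{ass:bounded-inf-norm} and $\lambda^\star$ is a fixed function (not data dependent), the class $\{\lambda^\star\ell(f_\theta,\cdot)\}$ has Rademacher complexity at most $\gamma$ times that of $\{\ell(f_\theta,\cdot)\}$. This follows by contraction, pointwise multiplication by a fixed function bounded by $\gamma$ (via the Rademacher route of Appendix~\ref{apx:radam}). The constraint part therefore contributes at most $\gamma\zeta(N,\delta)$.

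\emph{Upper bound on $\hat D^\star-D^\star$.} This is the delicate direction, since $\hat\lambda^\star$ is data dependent and need not be bounded. I would avoid it by working at the primal level. Because the constraints are pointwise, $g(\lambda)\le D^\star$ for every $\lambda$. Take $\hat f:=f_\theta(\lambda^\star)$, a fixed predictor. The cleanest route is to test $\hat g(\hat\lambda^\star)\le \hat L(f,\hat\lambda^\star)$ for some $f$ with $\ell(f,\bz_n)\le 0$ on the sample, which makes the multiplier term nonpositive and gives $\hat D^\star\le \frac1N\sum\ell_0(f,\bz_n)$.

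The difficulty is to produce such an $f$ whose empirical objective is close to $D^\star$. If $\mathcal H$ contains a primal feasible point with value near $D^\star$ (strong duality), a.e. feasibility gives sample feasibility, and the bound $\hat D^\star\le D^\star+\zeta_0$ follows. Without convexity, I would instead try a symmetric argument using $\hat\lambda^\star$ extended as a step function. This is where I expect the main obstacle. The stated rate has coefficient $2$ on $\zeta_0$ but only $\gamma$ on $\zeta$, which suggests this direction is paid only with $\zeta_0$: one $\zeta_0$ from each direction, and the $\gamma\zeta$ term from the first direction.

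Concretely, I would write $\hat D^\star\le \hat L(f_\theta(\lambda^\star),\hat\lambda^\star)$ and handle the term $\frac1N\sum\hat\lambda^\star_n\ell(f_\theta(\lambda^\star),\bz_n)$. I would show it is $\le 0$ using complementary slackness, and tolerate an additional $\zeta_0$ from the objective. Summing the two directions yields \eqref{eq:dual-lear-rate}.
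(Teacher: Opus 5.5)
Your first direction, $D^\star-\hat D^\star\le\zeta_0+\gamma\zeta$, matches the paper's argument (the upper-bound half of Lemma~\ref{lemma:suff-cond-lags} combined with Lemma~\ref{lemma:dual-learn}). You sample $\lambda^\star$ to get $\tilde\lambda$, use $D^\star\le L(\hat f_\theta(\tilde\lambda),\lambda^\star)$, and control the class $\{\lambda^\star\ell(f_\theta,\cdot)\}$ by contraction. The gap is in the reverse direction, $\hat D^\star-D^\star\le\zeta_0$. Your concrete plan needs two things: $\frac1N\sum_n\hat\lambda^\star_n\ell(f_\theta(\lambda^\star),\bz_n)\le0$ and $\E[\ell_0(f_\theta(\lambda^\star),\bz)]\approx D^\star$. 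Because $\hat\lambda^\star$ is an arbitrary data-dependent multiplier, the first requires $f_\theta(\lambda^\star)$ to be feasible on the sample. Complementary slackness concerns $\lambda^\star$ paired with a primal optimum and says nothing about $\hat\lambda^\star$. Moreover, a minimizer of $L(\cdot,\lambda^\star)$ that is feasible and satisfies complementary slackness gives $P^\star\le\E[\ell_0(f_\theta(\lambda^\star),\bz)]=D^\star$, i.e., zero duality gap. Your ``cleanest route'' has the same defect: it yields $\hat D^\star\le\hat P^\star\le P^\star+\zeta_0$, which misses the claim by $P^\star-D^\star$. Strong duality is not among Assumptions~\ref{ass:unif_conv}--\ref{ass:bounded-inf-norm}, and the theorem is needed precisely without it. Theorem~\ref{thm:near-pacc} and Proposition~\ref{prop:dual_near_pacc} invoke it for nonconvex $\mathcal H$, where the gap is handled separately by Lemma~\ref{lemma:dual-gap}.

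The missing idea is to stay on the dual side and embed $\hat\lambda^\star$ into the statistical dual domain. The paper defines $\hat\lambda_\rho(\bz)=\frac1N\sum_n\hat\lambda^\star_n\kappa_{n,\rho}(\bz)$, where $\kappa_{n,\rho}=\mathbb{I}\{\bz\in B_\rho(\bz_n)\}/\mathcal D(B_\rho(\bz_n))$ is a $\mathcal D$-normalized bump at each sample. Since $\hat\lambda_\rho\in L^1_+(\mathcal D)$, we have $D^\star\ge g(\hat\lambda_\rho)=L(\hat f_{\theta,\rho},\hat\lambda_\rho)$, while $\hat D^\star\le\hat L(\hat f_{\theta,\rho},\hat\lambda^\star)$. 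In the difference, the constraint part $\frac1N\sum_n\hat\lambda^\star_n\big(\E[\kappa_{n,\rho}(\bz)\ell(\hat f_{\theta,\rho},\bz)]-\ell(\hat f_{\theta,\rho},\bz_n)\big)$ vanishes deterministically as $\rho\to0$. Only the objective contributes, giving $-\zeta_0$. This is why the unboundedness of $\hat\lambda^\star$ is harmless: the statistical dual ranges over all of $L^1_+(\mathcal D)$, so arbitrarily concentrated multipliers are admissible, and no concentration bound is ever applied to the constraint term. A fixed-width step-function extension analyzed ``symmetrically'' by uniform convergence would indeed break. What makes the argument work is normalizing by $\mathcal D(B_\rho(\bz_n))$ and letting $\rho\to0$. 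To make this limit rigorous you need it uniformly over $f\in\mathcal H$, since $\hat f_{\theta,\rho}$ moves with $\rho$; equicontinuity of $\ell(f,\cdot)$ at the samples would suffice. The paper passes over this point.
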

\begin{proof} We provide the proof of Theorem~\ref{thm:dual-learn} as the combination of two lemmas.

\begin{lemma}[Uniform convergence of the Lagrangian]\label{lemma:suff-cond-lags}
Let Assumptions~\ref{ass:unif_conv} and~\ref{ass:dual_exis} hold. Suppose there exists a sample-dependent multiplier \(\tilde \lambda \in \mathbb{R}_+^N\) and a function \(\zeta_L(N,\delta)\), monotonically decreasing in \(N\), such that, with probability at least \(1-\delta\),
\begin{equation}\label{eq:unif-conv-lagrangians}
    |L(f_\theta,\lambda^*) - \hat L(f_\theta, \tilde \lambda)|
    \leq
    \zeta_L(N,\delta),
\end{equation}
uniformly over \(f_\theta \in \mathcal{H}\). Then the estimator
\[
\hat f_\theta^* \in \arg \min_{f_\theta \in \mathcal{H}}
\hat L(f_\theta,\hat \lambda^*)
\]
returned by rule~\eqref{P:dual_empirical} satisfies, with probability at least \(1-2\delta\),
\begin{equation}
    |D^* -  \hat D^*|
    \leq
    \zeta_0(N,\delta) + \zeta_L(N,\delta).
\end{equation}
Consequently, \eqref{P:dual_statistical} is dual learnable.
\end{lemma}

The condition in~\eqref{eq:unif-conv-lagrangians} parallels the uniform convergence condition~\eqref{eq:unif-conv-obj} used in standard PAC learning. It requires the statistical and empirical Lagrangians to become arbitrarily close as the number of samples increases, uniformly over the hypothesis class \(\mathcal H\). We refer to~\eqref{eq:unif-conv-lagrangians} as the uniform convergence of the Lagrangian property. Importantly, this condition only needs to hold for the statistical Lagrangian evaluated at the optimal multiplier \(\lambda^*\), rather than uniformly over all multipliers \(\lambda\). Thus, Lemma~\ref{lemma:suff-cond-lags} simplifies the characterization of dual learnability in the same way that Proposition~\ref{prop:unif-conv-imply-pac} does for standard PAC learning. The next lemma shows that boundedness of \(\lambda^*\) is sufficient for~\eqref{eq:unif-conv-lagrangians} to hold.

\begin{lemma}[Bounded multipliers imply Lagrangian uniform convergence]
\label{lemma:dual-learn}
Suppose Assumptions~\ref{ass:unif_conv}-\ref{ass:bounded-inf-norm} hold. Let the sample-dependent multiplier \(\tilde{\lambda}\in\mathbb{R}_+^N\) be defined by
\begin{equation}
    \tilde{\lambda}_n = \lambda^*(\bz_n),
    \qquad n=1,\ldots,N .
\end{equation}
Then, with probability at least \(1-2\delta\),
\begin{equation}
    \big|L(f_\theta,\lambda^*) - \hat L(f_\theta,\tilde{\lambda})\big|
    \leq
    \zeta_0(N,\delta) + \gamma\,\zeta(N,\delta)
\end{equation}
uniformly over \(f_\theta \in \mathcal{H}\).
\end{lemma}

Lemma~\ref{lemma:dual-learn} explains that the rate in~\eqref{eq:dual-lear-rate} comes from comparing the statistical Lagrangian evaluated at \(\lambda^*\) with the empirical Lagrangian evaluated at the sampled multiplier \(\tilde\lambda_n=\lambda^*(\bz_n)\). In particular, the lemma verifies the uniform convergence condition required by Lemma~\ref{lemma:suff-cond-lags}, and therefore immediately yields Theorem~\ref{thm:dual-learn}.

The proof of Lemma~\ref{lemma:suff-cond-lags} is in Appendix~\ref{apx:suff-cond} and the proof of Lemma~\ref{lemma:dual-learn} is in Appendix~\ref{apx:dual_learning_2}.
\end{proof}

Theorem~\ref{thm:dual-learn} shows that \eqref{P:dual_statistical} is learnable whenever the optimal multiplier has bounded \(L_\infty\) norm. The resulting rate is governed by the slower of the objective and constraint uniform-convergence rates, with the constraint term scaled by the multiplier bound \(\gamma\). Thus, under this boundedness condition, dual learnability is no harder than standard PAC learnability from a statistical standpoint. 

\subsection{Sparse Constraint Relaxation}\label{subsec:sparse}

\begin{figure*}[t]
\centering
    \includegraphics[width=0.6\linewidth]{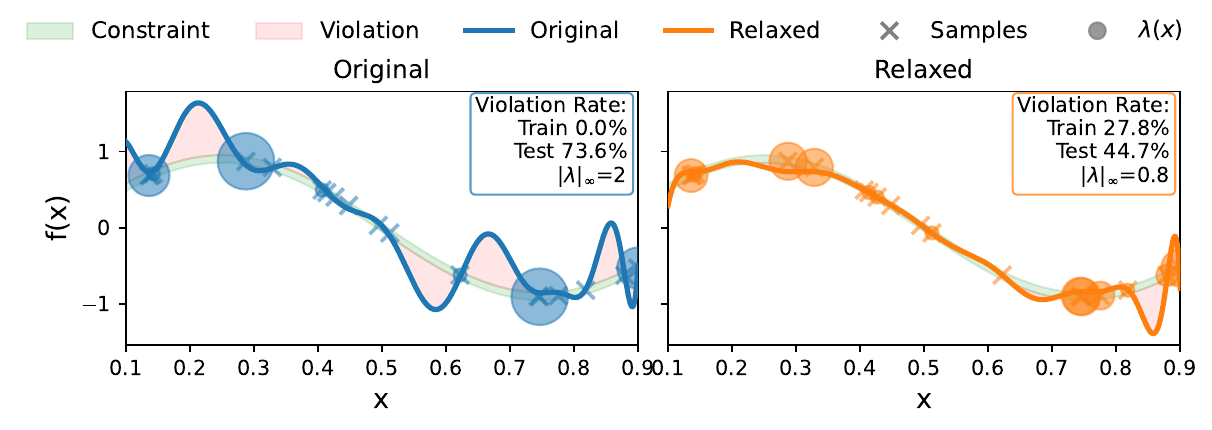}
\caption{Fitting polynomials to data generated by adding uniform noise to a sine wave. The relaxed fit corresponds to clipping $\|\lambda \|_\infty$. The size of the circles is proportional to the value of dual variables. Experiment details can be found in Appendix~\ref{apx:expsettings}. \emph{The clipped dual relaxes the constraints in a fraction of training samples, recovering smoother solutions that generalize better.} }
\label{fig:polyreg}
\end{figure*}

Subsection~\ref{sec:suff-cond} concluded that a sufficient condition for dual learnability is boundedness of the optimal multiplier $\lambda^*$ (Assumption~\ref{ass:bounded-inf-norm}). As this may or may not hold in practice, we consider the enforcement of it. A direct is to restrict the dual domain. To this end, we define the set of bounded multipliers
\begin{equation}
\Lambda_\gamma
=
\left\{
\lambda \in L^1_+(\mathcal{D}) :
\|\lambda\|_\infty \leq \gamma < \infty
\right\}.
\end{equation}
We then consider the restricted dual problem, where by construction the solution satisfies the boundedness condition in Assumption~\ref{ass:bounded-inf-norm}.
\begin{equation}\tag{$\textup{D}_\gamma$}\label{P:dual_res}
    D_{\gamma}^*
    =
    \sup_{\lambda \in \Lambda_\gamma} g(\lambda).
\end{equation}

An alternative approach comes from the sensitivity interpretation of dual multipliers. Large values of the optimal multiplier indicate that the problem is highly sensitive to the corresponding constraint. Thus, a natural way to reduce large multipliers is to relax such constraints. Ideally, this relaxation should be minimal, modifying the original constrained problem only where necessary. This motivates the following relaxed primal formulation:
\begin{equation}\label{P:primal_res}\tag{$\textup{P}_\gamma$}
\begin{aligned}
	P^\star_{\gamma}
	=
	\min_{f_\theta \in \mathcal{H},\, u\in L_1(\mathcal D)}
	\quad
	&
	\E_{\bz \sim \mathcal D}
	\!\Big[
	    \ell_0\big( f_{\theta}, \bz \big)
	\Big]
        + \gamma \E_{\bz\sim\mathcal D}\!\Big[|u(\bz)|\Big]
	\\
	\text{s.t.}
	\quad
	&
	\ell\big( f_{\theta}, \bz \big)
	\leq
	c + u(\bz)
    \quad \mathcal D\text{-a.e.}
\end{aligned}
\end{equation}
Problem~\eqref{P:primal_res} relaxes~\eqref{P:primal_statistical} by introducing the slack variable \(u(\bz)\), which permits pointwise violations of the original constraints. The norm one penalty discourages relaxations. In particular, because norm one penalties promote sparsity, the formulation favors relaxing the constraints only on a limited subset of samples; see, e.g.,~\cite{ramirez2013l1,chamon2020functional}.

Although the restricted dual problem~\eqref{P:dual_res} and the relaxed primal problem~\eqref{P:primal_res} were introduced from different perspectives, the next proposition shows that they are tightly linked.
\begin{proposition}\label{prop:duality_res}
   Problem~\eqref{P:dual_res} is the dual of problem~\eqref{P:primal_res}.
\end{proposition}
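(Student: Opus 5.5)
We form the Lagrangian of \eqref{P:primal_res} with a multiplier \(\lambda\in L^1_+(\mathcal D)\) for the pointwise constraint \(\ell(f_\theta,\bz)-c-u(\bz)\le 0\), and then compute the resulting dual function by minimizing jointly over \(f_\theta\) and \(u\). Throughout we take \(c=0\), following the convention fixed in Section~\ref{sec:ae-l}. We also read the objective of \eqref{P:primal_res} as the \(\mathcal D_0\)-expectation of \(\ell_0\), so that it matches \eqref{E:lagrangian_csl_param}. The relaxed Lagrangian is
\begin{equation*}
L_\gamma(f_\theta,u,\lambda)
=
L(f_\theta,\lambda)
+
\E_{\bz\sim\mathcal D}\!\Big[\gamma|u(\bz)|-\lambda(\bz)u(\bz)\Big].
\end{equation*}
Here \(L\) is the Lagrangian in \eqref{E:lagrangian_csl_param}. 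The key structural observation is that \(L_\gamma\) separates: the first term depends only on \(f_\theta\), and the second depends only on \(u\). Hence the dual function factors as
\begin{equation*}
g_\gamma(\lambda)
=
g(\lambda)
+
\inf_{u\in L_1(\mathcal D)}
\E_{\bz\sim\mathcal D}\!\big[\gamma|u(\bz)|-\lambda(\bz)u(\bz)\big].
\end{equation*}

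The second step evaluates the infimum over \(u\). This is the convex conjugate of \(\gamma\|\cdot\|_{L_1(\mathcal D)}\) evaluated at \(\lambda\). We argue it equals \(0\) when \(\|\lambda\|_\infty\le\gamma\) and \(-\infty\) otherwise.

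\emph{Case \(\|\lambda\|_\infty\le\gamma\).} We have \(\gamma|u|-\lambda u\ge(\gamma-\lambda)|u|\ge 0\) pointwise, using \(\lambda\ge 0\). The infimum is therefore \(0\), attained at \(u\equiv 0\).

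\emph{Case \(\|\lambda\|_\infty>\gamma\).} There exist \(\eta>0\) and a set \(A\) with \(\mathcal D(A)>0\) on which \(\lambda\ge\gamma+\eta\). Take \(u=t\,\mathbb{I}_A\) with \(t>0\). This \(u\) lies in \(L_1(\mathcal D)\), and it gives a value of at most \(-t\eta\,\mathcal D(A)\), which tends to \(-\infty\) as \(t\to\infty\).

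Consequently \(g_\gamma(\lambda)=g(\lambda)\) on \(\Lambda_\gamma\) and \(g_\gamma(\lambda)=-\infty\) off it. Maximizing \(g_\gamma\) over \(L^1_+(\mathcal D)\) is thus exactly \eqref{P:dual_res}. This establishes the proposition in the sense of Lagrangian duality; it makes no claim of strong duality.

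The main obstacle is the measure-theoretic care in the second step. Two points need attention. First, the essential supremum must be used, so that exceeding \(\gamma\) on a \(\mathcal D\)-null set is harmless; we choose \(A\) of positive measure via the definition of \(\|\cdot\|_\infty\). Second, the perturbation \(u\) must lie in \(L_1(\mathcal D)\) and make \(\lambda u\) integrable. Since \(\lambda\in L^1\), \(\lambda u\) may fail to be integrable if \(\lambda\) is unbounded on \(A\). We handle this by intersecting \(A\) with a set \(\{\lambda\le M\}\) for some \(M\) large enough that the intersection still has positive measure, which exists because \(\lambda\) is finite \(\mathcal D\)-a.e. If the intended reading uses \(\mathcal D\) rather than \(\mathcal D_0\) in the objective, the same computation goes through verbatim.
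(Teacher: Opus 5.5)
Your proposal is correct and follows the paper's proof: you form the Lagrangian of \eqref{P:primal_res}, decouple the minimizations over \(f_\theta\) and \(u\), and show that the \(u\)-infimum is \(0\) on \(\Lambda_\gamma\) and \(-\infty\) off it. Your explicit essential-supremum construction with \(u=t\,\mathbb{I}_A\) is a more careful substitute for the paper's pointwise minimization over \(u(\bz)\). The truncation to \(\{\lambda\le M\}\) is unnecessary, because \(u=t\,\mathbb{I}_A\) is bounded and \(\lambda\in L^1(\mathcal D)\), so \(\lambda u\) is already integrable.
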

\begin{proof}
    See Appendix~\ref{apx:dual_res}.
\end{proof}
Proposition~\ref{prop:duality_res} provides an interpretation of constraining the domain of the dual multipliers to the set $\Lambda_\gamma$, which corresponds to a sparse relaxation of the constraints in the primal domain.

Since every solution of~\eqref{P:dual_res} has bounded multipliers by construction, the learnability analysis can be extended naturally to this restricted problem. The empirical counterpart must enforce the same \(L_\infty\) bound on the multipliers, leading to the following result.
\begin{theorem}\label{thm:dual_learnability_res}
Let Assumptions~\ref{ass:unif_conv} and~\ref{ass:dual_exis} hold, and define
\begin{equation}\label{P:dual_res_emp}\tag{$\hat{\textup{D}}_\gamma$}
    \hat{D}_{\gamma}^\star = \max_{\lambda \in [0,\gamma]^N}\ \hat g(\lambda).
\end{equation}
Then the solutions \(\lambda^*_{\gamma}\) and \(\hat \lambda^*_{\gamma}\) of~\eqref{P:dual_res} and~\eqref{P:dual_res_emp} exist, and the estimator
\[
\hat f_\theta^* \in \arg \min_{f_\theta \in \mathcal{H}} \hat L(f_\theta, \hat \lambda^*_{\gamma})
\]
satisfies, with probability at least \(1-4\delta\),
\begin{equation}
|D_{\gamma}^* - \hat L(\hat f_\theta^*, \hat \lambda^*_{\gamma})|
\leq
2\zeta_0(N,\delta) + \gamma \zeta(N,\delta).
\end{equation}
\end{theorem}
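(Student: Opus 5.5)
The plan is to reproduce the argument behind Theorem~\ref{thm:dual-learn} with the multiplier domain restricted to $\Lambda_\gamma$ (statistically) and $[0,\gamma]^N$ (empirically). Existence comes first. For the empirical problem, $\hat g$ is a pointwise minimum of functions affine in $\lambda$, so it is concave and upper semicontinuous on $\mathbb{R}^N_+$; the box $[0,\gamma]^N$ is compact, so a maximizer $\hat\lambda^*_\gamma$ exists. For the statistical problem, $\Lambda_\gamma$ is bounded in $L^\infty(\mathcal D)$, hence weak-$*$ compact by Banach--Alaoglu, because $L^\infty$ is the dual of $L^1$. The map $\lambda\mapsto L(f_\theta,\lambda)$ is weak-$*$ continuous since $\ell(f_\theta,\cdot)\in L^1$, as losses are bounded. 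So $g$ is weak-$*$ upper semicontinuous as an infimum of continuous functions, and the supremum over $\Lambda_\gamma$ is attained at some $\lambda^*_\gamma$. (Alternatively one can appeal to Assumption~\ref{ass:dual_exis}, applied to the restricted problem, or to Proposition~\ref{prop:duality_res}.)

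Next I would establish uniform convergence of the Lagrangian at $\lambda^*_\gamma$, in the spirit of Lemma~\ref{lemma:dual-learn}. Set $\tilde\lambda_n=\lambda^*_\gamma(\bz_n)\in[0,\gamma]^N$, which is feasible for \eqref{P:dual_res_emp}. The function $\lambda^*_\gamma\ell$ has values in $[0,\gamma B]$. To get a rate of $\gamma\,\zeta$ uniformly over $\mathcal H$ without relying on a loss-specific UC rate, I would use a contraction-type argument. Multiplying the class $\{\ell(f_\theta,\cdot)\}$ by a fixed function bounded by $\gamma$ scales its Rademacher complexity by at most $\gamma$ (Appendix~\ref{apx:radam}). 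Together with the objective term this yields, with probability $1-2\delta$,
\[
\sup_{f_\theta\in\mathcal H}\big|L(f_\theta,\lambda^*_\gamma)-\hat L(f_\theta,\tilde\lambda)\big|\le \zeta_0(N,\delta)+\gamma\zeta(N,\delta).
\]

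Then I would sandwich $\hat D^*_\gamma$ around $D^*_\gamma$, following Lemma~\ref{lemma:suff-cond-lags}. For the lower bound, $\hat D^*_\gamma\ge \hat g(\tilde\lambda)=\min_f \hat L(f,\tilde\lambda)\ge \min_f L(f,\lambda^*_\gamma)-\zeta_L=D^*_\gamma-\zeta_L$.

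For the upper bound, I need $\hat D^*_\gamma\le D^*_\gamma+\text{error}$. Take any $\lambda\in[0,\gamma]^N$ and a statistical solution $f^\dagger$ of a suitable primal problem. Under strong duality of \eqref{P:primal_res}–\eqref{P:dual_res} (Proposition~\ref{prop:duality_res}), I would use the relaxed primal solution $(f^\dagger,u^\dagger)$. Then
\[
\hat g(\lambda)\le \hat L(f^\dagger,\lambda)\le \tfrac1N\textstyle\sum_n \ell_0(f^\dagger,\bz_n)+\gamma\tfrac1N\sum_n |u^\dagger(\bz_n)|,
\]
since $\lambda_n\ell(f^\dagger,\bz_n)\le \lambda_n u^\dagger(\bz_n)_+$. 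Concentration of these two averages with a single fixed function would give $P^*_\gamma+\zeta_0+\gamma\zeta$.

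The nonconvex case, where strong duality may fail, is the main obstacle. There I would instead argue directly with the saddle structure. The empirical Lagrangian at $\hat\lambda^*_\gamma$, evaluated at $\hat f^*_\theta$, is close to $L(\hat f^*_\theta,\bar\lambda)$ for some $\bar\lambda\in\Lambda_\gamma$ that interpolates $\hat\lambda^*_\gamma$. That quantity is at most $\max_{\Lambda_\gamma}L(\hat f^*_\theta,\cdot)$, which must then be compared with $D^*_\gamma$.

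Finally, $\hat L(\hat f^*_\theta,\hat\lambda^*_\gamma)=\hat D^*_\gamma$, so the two-sided bound transfers directly. A union bound over the two uniform-convergence events and the two fixed-function concentration events gives probability $1-4\delta$. Collecting the terms, with the objective UC term appearing once on each side, gives $2\zeta_0+\gamma\zeta$.
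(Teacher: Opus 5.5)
\textbf{Gap: the bound $\hat D^\star_\gamma \le D^\star_\gamma + \text{error}$ is not established under the theorem's hypotheses.} Your existence argument is correct and matches the paper's. So is the other direction, $\hat D^\star_\gamma \ge \hat g(\tilde\lambda) \ge D^\star_\gamma - \zeta_0 - \gamma\zeta$ with $\tilde\lambda_n = \lambda^\star_\gamma(\bz_n)$; the contraction step is legitimate there because $\lambda^\star_\gamma$ is a fixed, sample-independent function.

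Your primal argument only gives $\hat D^\star_\gamma \le P^\star_\gamma + \zeta_0 + \gamma\zeta$. To turn this into a bound involving $D^\star_\gamma$ you need $P^\star_\gamma = D^\star_\gamma$. Proposition~\ref{prop:duality_res} does not give this; it only identifies \eqref{P:dual_res} as the Lagrange dual of \eqref{P:primal_res}. The theorem also assumes no convexity of $\mathcal H$. The duality gap can be strictly positive: take $\mathcal Z=\{a,b\}$ uniform, a constant objective, and two hypotheses with constraint slacks $(1,-1)$ and $(-1,1)$. Then $P^\star_\gamma=\gamma/2$ but $D^\star_\gamma=0$.

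Your nonconvex fallback does not close this. The quantity
$\max_{\lambda\in\Lambda_\gamma}L(\hat f^\star_\theta,\lambda)=\E[\ell_0(\hat f^\star_\theta,\bz)]+\gamma\,\E[\ell(\hat f^\star_\theta,\bz)_+]$
is a primal value, hence at least $P^\star_\gamma\ge D^\star_\gamma$. It sits on the wrong side of the duality gap; in the example it equals $\gamma/2$. The claimed closeness of $\hat L(\hat f^\star_\theta,\hat\lambda^\star_\gamma)$ to $L(\hat f^\star_\theta,\bar\lambda)$ is also unsupported, since both $\bar\lambda$ and $\hat f^\star_\theta$ depend on the sample.

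The paper treats this direction without any primal value. It takes $\tilde\lambda\in\Lambda_\gamma$ with $\tilde\lambda(\bz_n)=(\hat\lambda^\star_\gamma)_n$ and a \emph{statistical} Lagrangian minimizer $\tilde f_\theta\in\arg\min_f L(f,\tilde\lambda)$. It then chains $D^\star_\gamma\ge g(\tilde\lambda)=L(\tilde f_\theta,\tilde\lambda)$ with $\hat D^\star_\gamma\le\hat L(\tilde f_\theta,\hat\lambda^\star_\gamma)$, so only a Lagrangian difference at a common $\tilde f_\theta$ must be controlled. Note, however, that the paper then applies Lemma~\ref{lemma:rademacher_bound}, which is stated for a fixed weight, to this sample-built $\tilde\lambda$, so that step needs more care than it receives.

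A cleaner way to complete your own argument is to replace $f^\dagger$ by a randomized primal point. Let $p$ range over finitely supported distributions on $\mathcal H$. The map $(p,\lambda)\mapsto\sum_f p_f L(f,\lambda)$ is affine in each argument and weak-$*$ continuous in $\lambda$ on the convex, weak-$*$ compact set $\Lambda_\gamma$. Ky Fan's minimax theorem then gives, with no convexity of $\mathcal H$,
\[
D^\star_\gamma=\inf_{p}\Big\{\textstyle\sum_f p_f\,\E[\ell_0(f,\bz)]+\gamma\,\E\big[\big(\sum_f p_f\,\ell(f,\bz)\big)_+\big]\Big\}.
\]
Take a sample-independent $\epsilon$-optimal $p$. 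For any $\lambda\in[0,\gamma]^N$,
$\hat g(\lambda)\le\sum_f p_f\hat L(f,\lambda)\le\sum_f p_f\frac1N\sum_n\ell_0(f,\bz_n)+\frac{\gamma}{N}\sum_n\big(\sum_f p_f\,\ell(f,\bz_n)\big)_+$.
Your fixed-function concentration step now applies verbatim, and letting $\epsilon\to 0$ yields $\hat D^\star_\gamma\le D^\star_\gamma+\zeta_0+\gamma\zeta$.
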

\begin{proof}
    See Appendix~\ref{apx:res_learnability}.
\end{proof}
Theorem~\ref{thm:dual_learnability_res} shows that, for the restricted dual problem, uniform convergence of the objective and constraint losses is sufficient to establish dual learnability. This introduces a tradeoff controlled by \(\gamma\): smaller values improve the statistical bound, but also lower the cost of the slack variable, allowing larger relaxations in the primal problem. Figure~\ref{fig:polyreg} illustrates one case where introducing the relaxation is beneficial.  Clipping the dual variables prevents overly restrictive constraints and leads to smoother polynomial fits with better generalization. Moreover, from an implementation standpoint, the relaxation is especially simple. There is no need to introduce the slack variable \(u\) explicitly; it is only necessary to enforce the bound \(\lambda_n \leq \gamma\) through clipping.

\subsection{Discussions}
\label{sec:ass-disc}

Earlier in this section, we argued that bounding \(\|\lambda^*\|_\infty\) is a natural condition for dual learnability. This subsection justifies that claim.  Appendix~\ref{apx:distr_space} formalizes the arguments presented here under the lens of measure theory.

\vspace{0.2cm}
\noindent
\textbf{Multipliers induce a reweighting measure.} Let  \(P_\lambda\) be the nonnegative measure induced by the multiplier $\lambda$, defined by
\begin{equation}\label{eq:dist-lambda-def}
    P_\lambda(\bz)
    :=
    \lambda(\bz)\, \mathcal D(\bz).
\end{equation}
Under this representation, the Lagrangian in~\eqref{E:lagrangian_csl_param} can be written equivalently as
\begin{equation}\label{E:lagrangian_csl_measure}
	L(f_{\theta}, P_\lambda)
    :=
    \E_{\bz \sim \mathcal D}
    \!\Big[
        \ell_0\big( f_{\theta},\bz \big)
    \Big]
    +
    \E_{\bz \sim P_\lambda}\Big[
        \ell(f_\theta,\bz)\Big].
\end{equation}
The measure \(P_\lambda\) can be interpreted as a reweighting of the sample space \(\mathcal Z\) in the constraint term of~\eqref{E:lagrangian_csl_measure}. For a fixed measure \(P_\lambda\), the Lagrangian function in \eqref{E:lagrangian_csl_measure} is independent of \(\mathcal D\). Thus, \(P_\lambda\) assigns penalty mass to constraint violations without accounting for the probability of the region where they occur.

\vspace{0.2cm}
\noindent
\textbf{The reweighting measure quantifies sensitivity.}
Optimal dual multipliers are known to quantify the sensitivity of an optimization problem to its constraints~\cite[Chapter 5.6]{boyd2004convex}. To make this interpretation precise, let \(\bu \in L_1(\mathcal D)\) be an absolutely integrable function that assigns to each sample \(\bz\) a perturbation \(\bu(\bz)\) of the constraint \(\ell(f_\theta,\bz)\). We define the perturbation function \(P(\bu)\) as the optimal value of problem~\eqref{P:primal_statistical} after perturbing the constraints by \(\bu\):
\begin{equation}\label{P:primal_perturbed}\tag{$\text{P}_\text{u}$}
\begin{aligned}
	P(\bu) = \min_{f_\theta \in \mathcal{H}}&
		&&\E_{\bz \sim \mathcal D_0} \!\Big[ \ell_0\big( f_{\theta},\bz \big) \Big]
	\\
	\text{s. to}& &&  \ell \big( f_{\theta},\bz \big)  \leq u(\bz) \quad \mathcal D\text{-a.e.}
\end{aligned}
\end{equation}
For each fixed perturbation \(\bu\), problem~\eqref{P:primal_perturbed} is a constrained optimization problem and therefore has an associated dual problem. We define the dual perturbation function \(D(\bu)\) as the optimal value of this dual problem. When \(\bu=0\), we recover the original constrained problem. The following lemma formalizes the sense in which the optimal multiplier measures the sensitivity of the optimal value to perturbations of the constraints.

\begin{lemma}\label{lemma:dual-are-subg}
The optimal dual measure \(P_\lambda^* = \lambda^*(\bz)\,\mathcal D(\bz)\) is a subgradient of the dual perturbation function \(D(\bu)\) at \(\bu=0\). Equivalently, for every perturbation \(\bu\),
\begin{equation}
    D(\bu) \geq D(0) - \E_{\bz \sim P_\lambda^*}\Big[u(\bz)\Big].
\end{equation}
\end{lemma}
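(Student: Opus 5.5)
The plan is to use the fact that a dual function is a pointwise infimum of affine functions in the multiplier, so its perturbation is concave in $\bu$, and $\lambda^*$ acts as a supporting hyperplane. First I will write the dual of \eqref{P:primal_perturbed} explicitly. Its Lagrangian is
\begin{equation*}
L_{\bu}(f_\theta,\lambda) = \E_{\bz\sim\mathcal D_0}\big[\ell_0(f_\theta,\bz)\big] + \E_{\bz\sim\mathcal D}\big[\lambda(\bz)\big(\ell(f_\theta,\bz)-u(\bz)\big)\big] = L(f_\theta,\lambda) - \E_{\bz\sim\mathcal D}\big[\lambda(\bz)u(\bz)\big],
\end{equation*}
so the associated dual function is $g_{\bu}(\lambda) = g(\lambda) - \E_{\bz\sim\mathcal D}[\lambda(\bz)u(\bz)]$, where $g$ is the unperturbed dual function from Section~\ref{sec:dual}. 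The subtracted term does not depend on $f_\theta$, so it factors out of the minimization over $\mathcal H$. Hence
\begin{equation*}
D(\bu) = \sup_{\lambda\in L^1_+(\mathcal D)} \Big\{ g(\lambda) - \E_{\bz\sim\mathcal D}\big[\lambda(\bz)u(\bz)\big]\Big\},
\end{equation*}
and $D(0)=D^\star = g(\lambda^\star)$ by Assumption~\ref{ass:dual_exis}.

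The inequality then follows by evaluating the supremum at the feasible point $\lambda=\lambda^\star$:
\begin{equation*}
D(\bu) \ge g(\lambda^\star) - \E_{\bz\sim\mathcal D}\big[\lambda^\star(\bz)u(\bz)\big] = D(0) - \E_{\bz\sim P^*_\lambda}\big[u(\bz)\big].
\end{equation*}
In the last equality I rewrite $\E_{\mathcal D}[\lambda^\star u]$ as an integral against the measure $P^*_\lambda = \lambda^\star\mathcal D$ from \eqref{eq:dist-lambda-def}. This is exactly the supergradient inequality for the concave function $D$ at $0$, with slope $-P^*_\lambda$, which matches the stated sign convention. I will also note that $D$ is concave, since it is a supremum of functions affine in $\bu$, so calling $P^*_\lambda$ a (super)gradient is meaningful. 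This uses no convexity of $\mathcal H$, because it concerns only the dual perturbation function and not $P(\bu)$.

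The main obstacle is checking that $\E_{\mathcal D}[\lambda^\star u]$ is well defined. The multiplier $\lambda^\star$ is only in $L^1$ and $u$ is only in $L_1(\mathcal D)$, so their product need not be integrable. I would handle this in one of three ways. The first is to restrict perturbations to $u\in L^\infty(\mathcal D)$, the natural dual pairing with $L^1$. The second is to invoke Assumption~\ref{ass:bounded-inf-norm} when it is available. The third is to allow the value $-\infty$ on the right-hand side, in which case the inequality is trivial. I would state the first choice explicitly and defer the general measure-theoretic version to Appendix~\ref{apx:distr_space}.
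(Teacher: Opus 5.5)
Your argument is correct and is essentially the paper's proof (Lemma~\ref{lemma:app-dual-sensitivity} in Appendix~\ref{apx:distr_space}). The perturbation term $-\E_{\mathcal D}[\lambda u]$ does not depend on $f_\theta$, so it factors out of the infimum, and evaluating the supremum defining $D(\bu)$ at the optimal multiplier gives the bound directly. Your integrability caveat ($u\in L^\infty(\mathcal D)$) is a sensible extra precaution that the paper sidesteps by working with measures.

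One correction to your side remark: a pointwise supremum of functions affine in $\bu$ is \emph{convex}, not concave. So $D$ is convex, and $D(\bu)\geq D(0)+\langle -P_\lambda^*,\bu\rangle$ is the ordinary subgradient inequality with subgradient $-P_\lambda^*$, which is how the appendix phrases it. A supergradient inequality for a concave function would point the other way.
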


Since \(D(\bu)\) is monotonically decreasing in \(\bu\) (see Appendix~\ref{}), any relaxation of the constraints, corresponding to positive \(\bu\), decreases the value with respect to \(D(0)\). Lemma~\ref{lemma:dual-are-subg} bounds the magnitude of this decrease. The larger the value of \(P_\lambda^*(\bz)\) on a sample or region, the larger the effect of relaxing the corresponding constraint. In the limiting case where \(P_\lambda^*(\bz)=0\), relaxing the constraint at that sample has no effect on the optimal value.

\vspace{0.2cm}
\noindent
\textbf{The empirical problem is a relaxation of \eqref{P:primal_statistical}.}
The statistical constraints are required to hold \(\mathcal D\)-almost everywhere, whereas the empirical problem enforces them only on the sampled points. Thus, passing from the statistical problem to the empirical problem can be viewed as relaxing the constraints on regions that are not sampled. From this perspective, \(P_\lambda^*\) measures the sensitivity of the problem to each region in the space, since it controls how much the optimal value can change when that is not represented in the sample set.

\begin{remark}
    The optimal measure \(P_\lambda^*\) quantifies the sensitivity of the problem to each region of the space, because it bounds the effect that not sampling that region can have on the optimal value.
\end{remark}

\vspace{0.2cm}
\noindent
\textbf{Learnability depends on the mismatch between $P_\lambda^*$ and $\mathcal{D}$.} 
Since the empirical problem only enforces constraints on sampled points, the effect of missing a region depends on two quantities: the sensitivity of the optimum to relaxing the constraints on that region, and the probability that the region is sampled. The first quantity is captured by \(P_\lambda^*\), while the second is determined by \(\mathcal D\). Therefore, the difficulty of learning the dual problem is naturally governed by the mismatch between them. This mismatch is precisely what the functional multiplier \(\lambda^*\) measures. Applying~\eqref{eq:dist-lambda-def} to the optimal dual solution gives
\begin{equation}\label{eq:dist-lambda-optimum}
    \lambda^*(\bz)
    :=
    \frac{P_\lambda^*(\bz)}{\mathcal D(\bz)}.
\end{equation}
Thus, \(\lambda^*(\bz)\) measures the sensitivity assigned to a sample \(\bz\) relative to the probability of sampling it. Large values of \(\lambda^*(\bz)\) indicate regions where \(P_\lambda^*\) assigns much more mass than \(\mathcal D\). These are regions that have high impact on the optimum but are unlikely to be observed in the empirical sample. 

\begin{remark}
    Bounding \(\|\lambda^*\|_\infty\) limits the sensitivity that any region can have relative to its probability of being sampled.
\end{remark}

Figure~\ref{fig:distr-learning} illustrates this effect by comparing cases where \(P_\lambda^*\) and \(\mathcal D\) are well aligned or poorly aligned.

\begin{figure*}[t]
    \centering
    \includegraphics[width=0.65\linewidth]{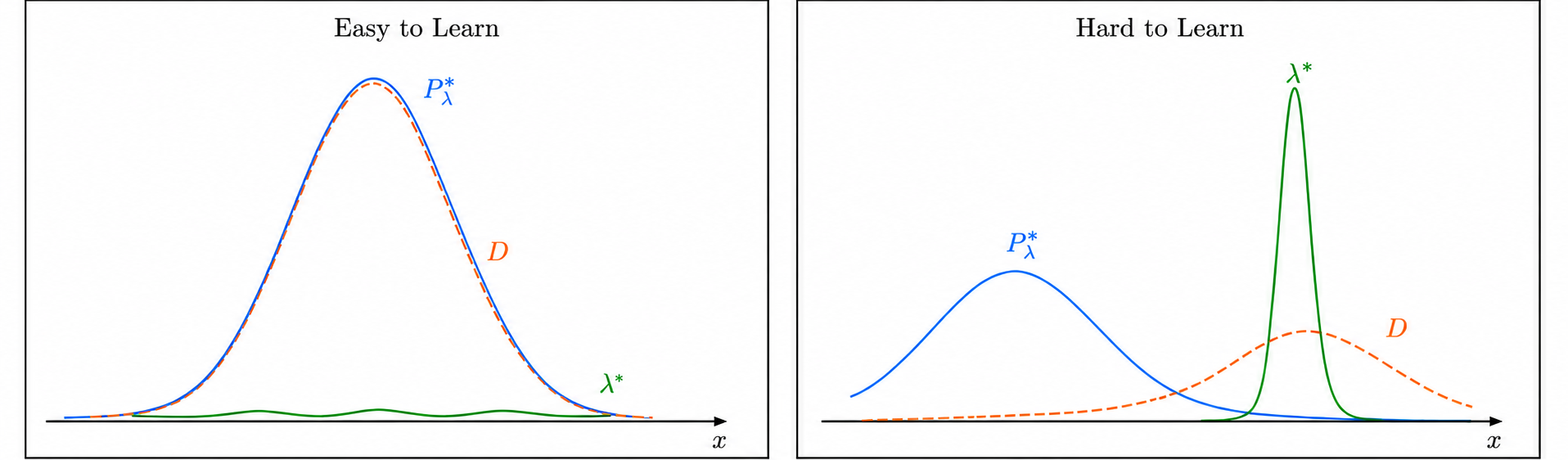}
    
    \caption{
    Illustration of the relationship between the overlap of \(P_\lambda^\star\) and \(\mathcal D\), and the magnitude of the induced optimal multiplier \(\lambda^\star\).
    When \(P_\lambda^\star\) is well aligned with the data distribution \(\mathcal D\) (left), the corresponding multiplier remains small across the domain, indicating that no region requires a large sensitivity weight.
    In contrast, when \(P_\lambda^\star\) and \(\mathcal D\) are poorly aligned (right), the multiplier concentrates around regions of high data density, leading to large values of \(\lambda^\star\).
    This illustrates that poor overlap can induce large optimal multipliers, making the problem harder to learn.
    }
    \label{fig:distr-learning}
\end{figure*}

\vspace{0.2cm}
\noindent
\textbf{Dual learnability is not equivalent to uniform convergence.}
We close the discussion with an example in which \(P_\lambda^*\) concentrates all its mass at a single point, so that the corresponding functional multiplier \(\lambda^*\) is unbounded. This shows that, without additional control on \(\lambda^*\), uniform convergence of the objective and constraint losses alone does not guarantee dual learnability.

\begin{example}\label{ex:counter-uc-lambda}
Let \(\theta \in \mathbb R\) be the decision variable, and let \(x \in [0,1]\) be a random variable with cumulative distribution function \(F\). Consider the statistical constrained problem
\begin{equation}\label{eq:counterexample_inf_norm}
\begin{aligned}
    & \min_{\theta \in \mathbb R} \quad  (\theta-1)^2 \\
    &\text{s.t.} \quad \theta - x \leq 0, \quad \mathcal{D}\text{-a.e.}.
\end{aligned}
\end{equation}
This problem can be solved by inspection. Since the constraint requires \(\theta \leq x\) \(\mathcal D\)-almost everywhere, the binding requirement is imposed by the smallest value of \(x\), namely \(x=0\). Hence, the feasible set reduces to \(\theta \leq 0\), and the minimizer of \((\theta-1)^2\) over this set is \(\theta^\star=0\), with optimal value \(P^\star=1\).

Although the primal solution is immediate by inspection, solving the example through the dual problem makes the role of the multiplier explicit. The dual problem of~\eqref{eq:counterexample_inf_norm} is
\begin{equation}\label{eq:counterexample_dual}
    g(P_\lambda)
    =
    - \E_{P_\lambda} \big[1\big]
    \left(
        \frac{\E_{P_\lambda} \big[1\big]}{4}
        -1
    \right)
    -
    \E_{P_\lambda} \big[x\big].
\end{equation}
The solution of~\eqref{eq:counterexample_dual} is \(P_\lambda^* = 2\delta_0\),\footnote{\(\delta_0\) denotes a Dirac delta at \(0\).} which yields the value \(D^*=1\).

To construct the empirical problem, let \(S=\{x_i\}_{i=1}^N\) be a set of \(N\) i.i.d.\ samples from \(\mathcal D\), ordered increasingly, so that \(x_1\) is the smallest sample. The optimal empirical value is \(\hat D^* = (1-x_1)^2\). Hence, the learnability gap is
\begin{equation}
    |D^* - \hat D^*|
    =
    1-(1-x_1)^2
    =
    2x_1-x_1^2
    \leq
    2x_1 .
\end{equation}

Given the cdf \(F(x)\), with probability \(1-\delta\) over the sample set, the smallest sample satisfies
\begin{equation}\label{eq:x1_rate}
    x_1
    \leq
    F^{-1}(1-\delta^{1/N})
    \approx
    F^{-1}\Big(\frac{\log (1/\delta)}{N}\Big).
\end{equation}

If \(F\) grows linearly near \(0\), as in the uniform distribution, then \(F^{-1}\) is also linear and the rate is of order \(1/N\). By contrast, if \(F(x)\approx e^{-c/x}\) near \(0\), then \(F^{-1}(u)\approx c/\log(1/u)\), yielding a rate of order \(1/\log N\). Therefore, when Assumption~\ref{ass:bounded-inf-norm} fails, the learning rate can be arbitrarily slow, even in this simple one-dimensional example.

We refer to Appendix~\ref{apx:example_2} for a full derivation.
\end{example}

Example~\ref{ex:counter-uc-lambda} shows that, even when uniform convergence holds for both the objective and constraint losses, dual learnability can fail in the sense that no distribution-free convergence rate can be guaranteed. Indeed, the objective loss \(\ell_0(\theta,x)=(\theta-1)^2\) does not depend on \(x\), so its empirical and statistical averages coincide. Moreover, for the affine constraint loss \(\ell(\theta,x)=\theta-x\), the empirical-statistical convergence rate reduces to the standard \(N^{-1/2}\) rate from the law of large numbers. Therefore, the usual uniform convergence condition holds. However, since \(P_\lambda^*=2\delta_0\), all dual mass is concentrated at the single point \(x=0\). Thus, the corresponding functional multiplier \(\lambda^*\) is unbounded. As a result, the dual approximation error depends on how close the empirical sample set gets to \(0\), which is quantified by the inverse distribution function \(\mathcal{D}^{-1}\) near \(0\).

This conclusion is consistent with the solution obtained by inspection. Since feasibility is determined by the smallest value \(x=0\), it is natural that the optimal dual measure assigns all its mass to that point. Consequently, when the problem is solved empirically, the quality of the solution depends on how easily the sample set approaches \(x=0\), and this is precisely what the learnability rate in~\eqref{eq:x1_rate} captures.

\section{Primal Near Learnability }
\label{sec:dual-imply-pac}

Section~\ref{sec:dual} established sufficient conditions for learnability of the dual problem. In this section, we relate dual learnability back to primal learnability. We first show that, when~\eqref{P:primal_statistical} is convex, learnability of the dual problem directly implies PACC learnability of the primal problem; in this case, both the empirical dual rule~\eqref{P:dual_empirical} and the constrained empirical rule~\eqref{P:primal_empirical} can be used as PACC learners. Modern parametrizations, such as neural networks and transformers, are typically nonconvex, and therefore~\eqref{P:primal_statistical} is generally nonconvex as well. For this setting, we show that the learning rule~\eqref{P:primal_empirical} remains PACC feasible and near PACC optimal, up to an approximation term that depends on the richness of the parametrization class.

\subsection{Learnability for convex problems}

A constrained optimization problem is strongly convex when its objective function is strongly convex and its feasible set is convex. In~\eqref{P:primal_statistical}, convexity of the feasible set follows from convexity of the hypothesis class \(\mathcal H\) together with convexity of the constraint loss.

\begin{assumption}\label{ass:convx-H}[Convexity of the parametric class]
    The hypothesis set $\mathcal{H}$ is convex.
\end{assumption}

\begin{assumption}\label{ass:strong-convx-loss}[Strong convexity of the losses]
    The loss functions $\ell$ and $\ell_0$ are convex and strongly convex, respectively.
\end{assumption}

Assumption~\ref{ass:strong-convx-loss} is standard in many learning problems, holding for losses such as the squared loss or negative log-likelihood. Assumption~\ref{ass:convx-H}, however, is satisfied only by classical parametrizations, such as linear predictors or kernel methods. Modern parametrizations, including neural networks and transformers, typically violate this convexity assumption. In Subsection~\ref{subsec:primal-nonconvex}, we replace Assumption~\ref{ass:convx-H} with a universality condition on \(\mathcal H\), which is more appropriate for such rich parametrization classes.

In constrained problems, convexity together with dual attainability (Assumption~\ref{ass:dual_exis}) implies strong duality, so the primal and dual optimal value are the same. As a result, dual learnability guarantees can be transferred directly to PACC learnability for the empirical primal rule~\eqref{P:primal_empirical}.

\begin{theorem}[PACC learnability for convex \(\mathcal{H}\)]
\label{thm:pacc}
Let Assumptions~\ref{ass:unif_conv}--\ref{ass:strong-convx-loss} hold, and let \(\bar f_\theta\) be the estimator returned by \eqref{P:primal_empirical}. Then, with probability at least \(1-5\delta\),
\begin{align}
\left|
\mathbb{E}_{\bz \sim \mathcal D}
\big[
\ell_0(\bar f_\theta,\bz)
\big]
-
P^\star
\right|
&\le
 3\zeta_0 + \gamma \zeta,
\label{eq:pacc_obj_thm}
\\
\Pr\big[\ell(\bar f_\theta,\bz) \le 0\big]
&\geq
1-\zeta_I.
\label{eq:pacc_const_thm}
\end{align}
Therefore, Problem~\eqref{P:primal_statistical} is PACC-learnable.
\end{theorem}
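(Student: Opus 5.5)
The plan has two parts. Feasibility \eqref{eq:pacc_const_thm} follows from uniform convergence of the indicator. Optimality \eqref{eq:pacc_obj_thm} comes from sandwiching the statistical risk of \(\bar f_\theta\) between quantities controlled by Theorem~\ref{thm:dual-learn}, using strong duality at both the statistical and empirical level.

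\emph{Feasibility.} By construction \(\bar f_\theta\) satisfies \(\ell(\bar f_\theta,\bz_n)\le 0\) for every \(\bz_n\in S\). Hence the empirical average of \(\mathbb{I}\{\ell(\bar f_\theta,\bz)\le 0\}\) equals \(1\). Assumption~\ref{ass:unif_conv}, applied to the indicator loss, holds uniformly over \(\mathcal H\) and therefore also at the data-dependent \(\bar f_\theta\). This gives \(\Pr[\ell(\bar f_\theta,\bz)\le 0]\ge 1-\zeta_I\) with probability \(1-\delta\).

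\emph{Strong duality.} Under Assumptions~\ref{ass:dual_exis}, \ref{ass:convx-H} and \ref{ass:strong-convx-loss}, the statistical problem is a convex program with attained multiplier, so \(P^\star=D^\star\). The empirical problem \eqref{P:primal_empirical} is a finite convex program with an attained multiplier, so \(\hat P^\star=\hat D^\star\). Theorem~\ref{thm:dual-learn} then yields \(|P^\star-\hat P^\star|\le 2\zeta_0+\gamma\zeta\) with probability \(1-2\delta\).

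\emph{Optimality.} Since \(\bar f_\theta\) attains \(\hat P^\star\), uniform convergence of \(\ell_0\) gives
\[
\bigl|\E_{\mathcal D_0}[\ell_0(\bar f_\theta,\bz)]-\hat P^\star\bigr|\le\zeta_0 .
\]
Combined with the previous step via the triangle inequality, this gives \eqref{eq:pacc_obj_thm} with constant \(3\zeta_0+\gamma\zeta\). A union bound over the events collects the failure probability: \(2\delta\) from Theorem~\ref{thm:dual-learn}, \(\delta\) for the objective, \(\delta\) for the indicator, and possibly one more \(\delta\) for the constraint loss. The total is at most \(5\delta\).

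\emph{Main obstacle.} The delicate step is justifying strong duality with the infinite-dimensional \(L^1_+\) multiplier. We must ensure that the abstract statistical dual attains \(P^\star\) rather than merely lower-bounding it. Convexity of the perturbation function \(P(\bu)\) together with existence of \(\lambda^\star\) (a subgradient, in the spirit of Lemma~\ref{lemma:dual-are-subg}) should suffice: the existence of a subgradient at \(0\) gives \(P(0)=D(0)\). Strong convexity ensures the primal minimizer is unique, so the empirical primal solution is well defined and coincides with the Lagrangian minimizer at \(\hat\lambda^\star\).

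An alternative route, if needed, exploits that uniqueness. We show \(\bar f_\theta\) equals the dual-rule estimator \(\hat f_\theta(\hat\lambda^\star)\), and then use complementary slackness \(\hat\lambda^\star_n\ell(\bar f_\theta,\bz_n)=0\). This reduces \(\hat L(\bar f_\theta,\hat\lambda^\star)\) to the empirical objective, so that Lemma~\ref{lemma:suff-cond-lags} applies directly.
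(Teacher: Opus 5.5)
Your proof is correct to the same standard of rigor as the paper's, and most of its structure coincides. Feasibility comes from uniform convergence of the indicator. The lower deviation comes from uniform convergence of $\ell_0$, strong duality at both levels, and Theorem~\ref{thm:dual-learn}.

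Where you differ is the upper deviation. The paper does not use duality there. It invokes Lemma~\ref{thm:empirical_is_relaxation}: a solution of \eqref{P:primal_statistical} is feasible for \eqref{P:primal_empirical}, so uniform convergence of $\ell_0$ alone gives $\E[\ell_0(\bar f_\theta,\bz)]\le P^\star+2\zeta_0$. That bound needs neither Assumption~\ref{ass:bounded-inf-norm} nor strong duality. Your symmetric use of Theorem~\ref{thm:dual-learn} is a single triangle inequality and reaches the same stated constant $3\zeta_0+\gamma\zeta$. However, it routes both directions through duality and the multiplier bound. The paper's split makes visible that $\gamma$ only controls how far the relaxed empirical problem can undercut $P^\star$, which is exactly the phenomenon of Example~\ref{ex:counter-uc-lambda}, while overshooting costs only $2\zeta_0$.

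A caution on your ``main obstacle'' paragraph. Lemma~\ref{lemma:dual-are-subg} concerns the \emph{dual} perturbation function $D(\bu)$. It holds for any attained dual optimum whether or not there is a duality gap, so it cannot yield $P(0)=D(0)$. What closes the gap is the primal inequality $P(\bu)\ge P(0)-\E_{\bz\sim\mathcal D}[\lambda^\star(\bz)u(\bz)]$ for all $\bu$, and dual attainment alone does not give it. For example, in $\mathbb{R}^2$, minimizing $e^{-x_2}$ subject to $\sqrt{x_1^2+x_2^2}-x_1\le 0$ has every $\mu\ge 0$ dual optimal with value $0$, while the primal value is $1$. One normally obtains the primal inequality from a constraint qualification, such as a Slater-type condition like Assumption~\ref{ass:strict-feasible}. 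The paper simply asserts $P^\star=D^\star$ and $\hat P^\star=\hat D^\star$ from convexity and Assumption~\ref{ass:dual_exis}. Your argument is therefore complete to the same extent if you cite strong duality that way, but the justification you sketch does not establish it.
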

\begin{proof}
    See Appendix~\ref{apx:pacc_convex}.
\end{proof}

Theorem~\ref{thm:pacc} shows that, in the convex setting, solving the empirical primal problem yields an estimator that is PACC learnable. The optimality guarantee follows from dual learnability and strong duality. Consequently, its rate matches the dual learnability rate in Theorem~\ref{thm:dual-learn}, up to an additional \(\zeta_0\) term. The feasibility guarantee follows because the estimator in~\eqref{eq:estimator_pacc} is feasible for the empirical constrained problem~\eqref{P:primal_empirical}; uniform convergence of the constraint indicator \(\mathbb{I}\{\ell\leq 0\}\) then transfers empirical feasibility to statistical feasibility, with rate \(\zeta_I\).

In many applications, however, projection onto the feasible set may be prohibitively costly, and thus solving~\eqref{P:primal_empirical} directly is impractical. This motivates solving the empirical dual problem~\eqref{P:dual_empirical} instead. The following proposition shows that, under the convexity assumptions used above, this dual rule returns the same estimator as the primal rule.

\begin{proposition}\label{prop:primal-est-same-dual}
Let Assumptions~\ref{ass:unif_conv}--\ref{ass:strong-convx-loss} hold, and let $\bar f_\theta$ be the estimator returned by~\eqref{P:dual_empirical},
\begin{equation}\label{eq:estimator_pacc}
    \bar f_\theta
    =
    \arg \min_{f_\theta \in \mathcal{H}}
    \hat L(f_\theta, \hat \lambda^*).
\end{equation}
The estimator \(\bar f_\theta\) is the same as the estimator returned by~\eqref{P:primal_empirical}.
\end{proposition}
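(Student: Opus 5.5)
We argue through finite-dimensional Lagrangian duality for the empirical problem, which is convex under Assumptions~\ref{ass:convx-H}--\ref{ass:strong-convx-loss}. The goal is to show that the empirical dual rule recovers the unique solution $\tilde f_\theta$ of~\eqref{P:primal_empirical}. The plan has three steps: establish strong duality and existence of $\hat\lambda^*$; use the saddle-point property to show $\tilde f_\theta$ minimizes $\hat L(\cdot,\hat\lambda^*)$; and conclude from uniqueness of that minimizer.

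\textbf{Step 1 (convexity, uniqueness, strong duality).} The empirical objective $\frac{1}{N_0}\sum \ell_0(f_\theta,\bz_i)$ is strongly convex in $f_\theta$ over the convex set $\mathcal H$. The constraints $\ell(f_\theta,\bz_n)\le 0$ are convex, so the empirical feasible set is convex. Hence~\eqref{P:primal_empirical} is a strongly convex program and has a unique minimizer $\tilde f_\theta$.

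By Assumption~\ref{ass:dual_exis}, a maximizer $\hat\lambda^*\in\mathbb R_+^N$ of~\eqref{P:dual_empirical} exists. For a finite-dimensional convex program with an attained dual, strong duality $\hat P^\star=\hat D^\star$ holds. I expect this step to be the main obstacle. Dual attainment alone does not give zero gap in general, so I would either appeal to the same convexity-plus-attainability argument the paper uses for strong duality in the statistical problem, or add a Slater-type condition. Without that, I would fall back on the measure-space formulation of Appendix~\ref{apx:distr_space}.

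\textbf{Step 2 (saddle point).} Strong duality gives the chain
\[
\hat P^\star=\hat g(\hat\lambda^*)\le \hat L(\tilde f_\theta,\hat\lambda^*)=\hat P^\star+\frac1N\sum_n\hat\lambda^*_n\,\ell(\tilde f_\theta,\bz_n)\le \hat P^\star .
\]
The last inequality holds because $\hat\lambda^*\ge0$ and $\tilde f_\theta$ is feasible, so every term $\hat\lambda^*_n\ell(\tilde f_\theta,\bz_n)$ is nonpositive. All inequalities are therefore equalities. This yields complementary slackness, and it shows that $\tilde f_\theta$ attains $\min_{f_\theta\in\mathcal H}\hat L(f_\theta,\hat\lambda^*)$.

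\textbf{Step 3 (uniqueness).} The function $f_\theta\mapsto\hat L(f_\theta,\hat\lambda^*)$ is a strongly convex empirical objective plus a nonnegative combination of convex constraint losses. It is therefore strongly convex on the convex set $\mathcal H$ and has a unique minimizer. That minimizer is exactly $\bar f_\theta$ in~\eqref{eq:estimator_pacc}, and it is well defined as a single element rather than a set. Since $\tilde f_\theta$ is also a minimizer by Step 2, $\bar f_\theta=\tilde f_\theta$.

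Strong convexity is essential here. Without it, $\arg\min_{f_\theta}\hat L(\cdot,\hat\lambda^*)$ could contain infeasible points, which is precisely the primal-recovery issue that arises later in the nonconvex setting.
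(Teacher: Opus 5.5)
Your proposal is correct and follows essentially the same route as the paper. Both arguments take the unique empirical primal solution (by strong convexity), show that it minimizes $\hat L(\cdot,\hat\lambda^*)$ (the paper cites Bertsekas' optimality conditions, while you derive this explicitly through the complementary-slackness chain), and conclude by uniqueness of the strongly convex Lagrangian minimizer. The strong-duality caveat you raise is legitimate, and it applies equally to the paper, which obtains $\hat P^\star=\hat D^\star$ from convexity plus dual attainment in exactly the way you propose to appeal to.
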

\begin{proof}
    See Appendix~\ref{apx:pacc_dual}.
\end{proof}

Proposition~\ref{prop:primal-est-same-dual} implies that the dual rule~\eqref{P:dual_empirical} is also a PACC learner. Indeed, since the empirical primal and dual rules return the same predictor, the guarantees of Theorem~\ref{thm:pacc} apply directly to ~\ref{prop:primal-est-same-dual}, and thus both formulations yields the same statistical guarantees.

\subsection{Near learnability for modern AI problems}\label{subsec:primal-nonconvex}

For modern machine learning parameterizations, such as neural networks, Assumption~\eqref{ass:convx-H} does not hold. We show that primal learnability can still be established up to the duality gap of the problem, which is controlled by the richness of the parametrization. To make this precise, let
\begin{equation}
\hat{\mathcal{H}} = \overline{\mathrm{conv}}(\mathcal{H})
\end{equation}
be the closed convex hull of the hypothesis class.\footnote{The closure is taken with respect to the \(\|\cdot\|_\infty\) norm, so that uniform limits preserve continuity and boundedness.} We consider the functional problem resulting from replacing in \eqref{P:primal_statistical} the domain $\mathcal H$ for its closed convex hull $\hat{\mathcal H}$.
\begin{equation}
\label{P:csl_variational}\tag{$\tilde{\textup{P}}$}
\begin{aligned}
\widetilde{P}^\star = \min_{\phi \in \hat{\mathcal{H}}}&\;
   \E_{\bz \sim \mathcal D} \!\Big[ \ell_0\big( \phi,\bz \big) \Big] \\
\text{s.t.}\quad &  \ell \big( \phi, \bz\big) 
    \leq 0 \quad \mathcal{D}\text{-a.e.}
\end{aligned}
\end{equation}



Under Assumption~\ref{ass:strong-convx-loss}, problem~\eqref{P:csl_variational} is a convex relaxation of problem~\eqref{P:primal_statistical}. Together with a dual attainability assumption, this yields a convex relaxation for which strong duality holds. If, in addition, the hypothesis class \(\mathcal H\) is close to the convexified class \(\hat{\mathcal H}\), then the original problem~\eqref{P:primal_statistical} and its convex relaxation~\eqref{P:csl_variational} should have similar optimal values. Thus, the duality gap of~\eqref{P:primal_statistical} can be bounded in terms of the approximation error between \(\mathcal H\) and \(\hat{\mathcal H}\).

\begin{assumption}[Near universality]\label{ass:near-univ}
There exist constants \(\nu,\nu_0 \geq 0\) such that, for every \(\phi \in \hat{\mathcal{H}}\), there exists a \(f_\theta \in \mathcal{H}\) satisfying
\begin{align} 
\mathbb{E}_{\mathcal{D}}\left[\left|\phi(\bx)-f_{\theta}(\bx)\right|\right] \leq \nu_0 ,
\\
|\phi(\bx)-f_{\theta}(\bx)| \leq \nu \quad \mathcal{D}\text{-a.e.}
\label{eqn_pointwise_near_univ}
\end{align}
\end{assumption}

\begin{assumption}[Lipschitz losses]\label{ass:lip_out}nd the distan
The loss functions \(\ell(\hat \by, \by)\) and \(\ell_0(\hat \by, \by)\) are \(M\)-Lipschitz continuous in their first argument\footnote{For this assumption, we refer to the original definition of the losses \(\ell_i:\mathcal{Y}\times\mathcal{Y}\) rather than to the reparameterized form \(\ell_i:\mathcal{H}\times\mathcal{Z}\).}, that is,
\begin{equation}
    \big| \ell(\hat \by_1,\by) - \ell(\hat \by_2,\by)\big|
    \leq
    M \big\| \hat\by_1 - \hat \by_2 \big\|.
\end{equation}
\end{assumption}

Let \((\tilde{\textup{P}}_{M\nu})\) denote the problem obtained by tightening the constraints in~\eqref{P:csl_variational} by \(M\nu\), and let \((\tilde{\textup{D}}_{M\nu})\) be its dual problem. We assume that the optimal value of \((\tilde{\textup{D}}_{M\nu})\) is attained by a multiplier \(\tilde \lambda^*_{M\nu}\).

\begin{figure*}[t]
    \centering
    \includegraphics[width=0.55\linewidth]{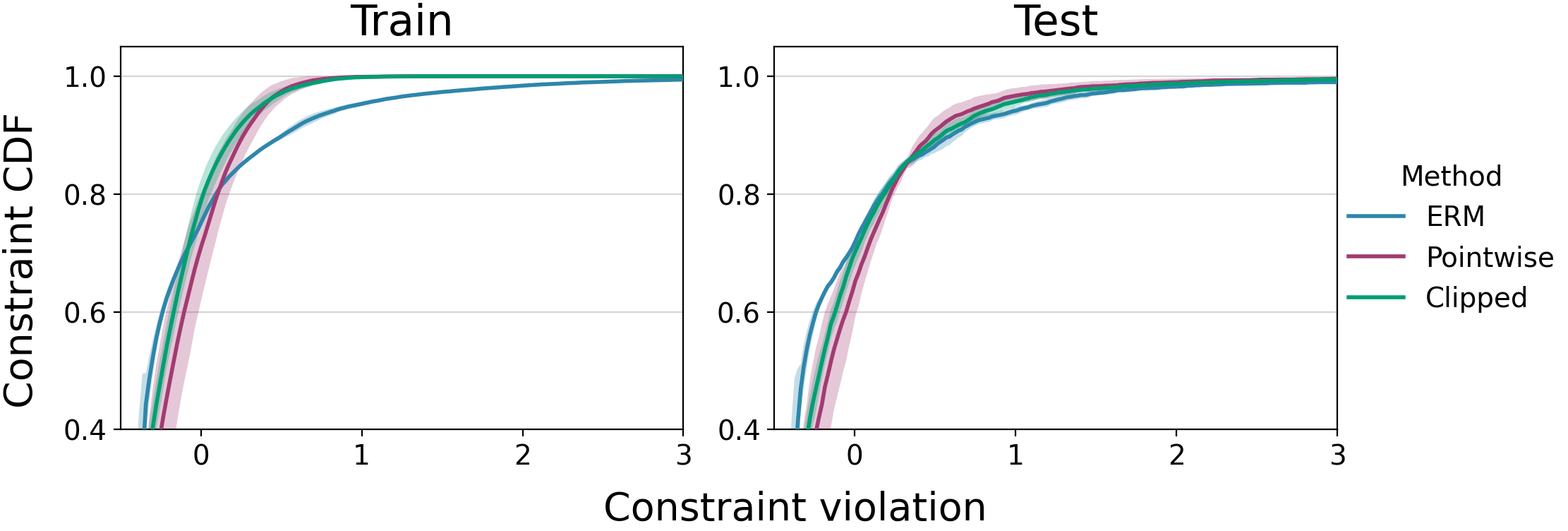}
\caption{Empirical cumulative distribution function (CDF) of the sample-wise constraint slacks \((\ell_{\mathrm{BCE}}(f_\theta(G,T),y) - c)\) over the train and test splits of the coding-AF task in FloraBench. ERM corresponds to minimizing the average objective, pointwise to the empirical dual, and clipped to the clipped empirical dual with \(\Gamma=5\). Curves and shaded regions denote the mean and standard deviation across 10 random seeds. Positive values indicate constraint violations. \emph{Pointwise constraints produce more concentrated losses, while dual clipping reduces variance.}}
\label{fig:ecdf}
\end{figure*}
\begin{assumption}[Dual attainability]\label{ass:constraint-qual-tightened}
The dual solution $\tilde \lambda^*_{M\nu}$ exists.
\end{assumption}

Assumption~\ref{ass:near-univ} quantifies how well predictors in \(\mathcal H\) approximate elements of the convexified class \(\hat{\mathcal H}\), both in expectation and pointwise. Classical universal approximation results ensure that this type of condition holds for sufficiently wide neural networks; see, for example,~\cite[Theorem~2.1]{hornik1989multilayer}. Assumption~\ref{ass:lip_out} is satisfied by many standard losses, including the squared loss and negative log-likelihood. Assumption~\ref{ass:constraint-qual-tightened} strengthens Assumption~\ref{ass:dual_exis}, since it imposes attainability for the tightened problem. As with Assumption~\ref{ass:dual_exis}, this condition can be replaced by a strict feasibility assumption when working with dual variables in distribution space, as shown in Appendix~\ref{apx:distr_space}.

The next result shows that the approximation quality of \(\mathcal H\) controls the duality gap of~\eqref{P:primal_statistical}.


%
\begin{lemma}[Duality gap of the parametric problem]
\label{lemma:dual-gap}
Under Assumptions~\ref{ass:strong-convx-loss}--\ref{ass:constraint-qual-tightened}, it holds that
\begin{equation}
    |P^\star - D^\star|
    \;\leq\;
    M\nu \E_{\mathcal{D}}\big[\lambda^\star_{M\nu}(\bz)\big]+M\nu_0 := \Delta_{\mathrm{gap}}.
\end{equation}
\end{lemma}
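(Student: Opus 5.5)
We sandwich \(D^\star\) and \(P^\star\) between the values of convex surrogate problems built on \(\hat{\mathcal H}\). Weak duality gives \(D^\star\le P^\star\), so it suffices to prove the upper bound \(P^\star \le D^\star + \Delta_{\mathrm{gap}}\). To do so, we compare \(P^\star\) with the tightened convex problem \((\tilde{\textup{P}}_{M\nu})\) from above, and compare \(D^\star\) with its dual \((\tilde{\textup{D}}_{M\nu})\) from below. Strong duality of the tightened convex problem then closes the chain.

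\emph{Step 1 (primal side).} Let \(\tilde\phi\) solve \((\tilde{\textup{P}}_{M\nu})\), so that \(\ell(\tilde\phi,\bz)\le -M\nu\) \(\mathcal D\)-a.e. By Assumption~\ref{ass:near-univ}, pick \(f_\theta\in\mathcal H\) with \(|\tilde\phi-f_\theta|\le\nu\) a.e. and \(\E|\tilde\phi-f_\theta|\le\nu_0\). Assumption~\ref{ass:lip_out} gives \(\ell(f_\theta,\bz)\le \ell(\tilde\phi,\bz)+M\nu\le 0\) a.e., so \(f_\theta\) is feasible for \eqref{P:primal_statistical}. The objective moves by at most \(M\nu_0\), hence
\[
P^\star\le \tilde P^\star_{M\nu}+M\nu_0 .
\]
(Here \(\nu_0\) is the expectation under the relevant distribution; I would treat \(\mathcal D_0=\mathcal D\), as in \eqref{P:csl_variational}.)

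\emph{Step 2 (strong duality).} \((\tilde{\textup{P}}_{M\nu})\) is convex by Assumption~\ref{ass:strong-convx-loss} and convexity of \(\hat{\mathcal H}\), and its dual is attained by Assumption~\ref{ass:constraint-qual-tightened}. I would invoke strong duality, as was done for the convex case in Theorem~\ref{thm:pacc}, to get \(\tilde P^\star_{M\nu}=\tilde D^\star_{M\nu}=\tilde g_{M\nu}(\tilde\lambda^\star_{M\nu})\), where
\[
\tilde g_{M\nu}(\lambda)=\min_{\phi\in\hat{\mathcal H}} \E[\ell_0(\phi,\bz)]+\E[\lambda(\ell(\phi,\bz)+M\nu)].
\]

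\emph{Step 3 (dual side).} For any \(\lambda\ge0\), \(\hat{\mathcal H}\supseteq\mathcal H\) gives \(\min_{\hat{\mathcal H}}\le\min_{\mathcal H}\). Therefore
\[
\tilde g_{M\nu}(\lambda)\le g(\lambda)+M\nu\,\E[\lambda] .
\]
Evaluating at \(\lambda=\tilde\lambda^\star_{M\nu}\) and using \(g(\tilde\lambda^\star_{M\nu})\le D^\star\) yields
\[
\tilde D^\star_{M\nu}\le D^\star+M\nu\,\E_{\mathcal D}[\tilde\lambda^\star_{M\nu}] .
\]
Chaining Steps 1–3 gives
\[
P^\star\le D^\star+M\nu\,\E[\lambda^\star_{M\nu}]+M\nu_0 .
\]
Together with weak duality, this proves the claim.

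\emph{Main obstacle.} The delicate step is Step 2: justifying strong duality for the infinite-dimensional tightened problem over \(\hat{\mathcal H}\) with \(L^1\) multipliers. I would rely on convexity plus the assumed dual attainability, as the paper does for the convex case, and I would defer measure-theoretic details to the distribution-space treatment in Appendix~\ref{apx:distr_space}. A second point of care is that the minimum in \eqref{P:csl_variational} and in the tightened problem is attained. If it is not, I would run Step 1 with an \(\epsilon\)-optimal \(\tilde\phi\) and let \(\epsilon\to0\).
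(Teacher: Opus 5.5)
Your proposal is correct and follows the paper's proof essentially step for step. Weak duality gives $D^\star\le P^\star$, and the chain $P^\star\le \tilde P^\star_{M\nu}+M\nu_0=\tilde D^\star_{M\nu}\le D^\star+M\nu\,\E_{\mathcal D}[\lambda^\star_{M\nu}]$ is built exactly as in the paper: near-universality plus Lipschitzness on the primal side, strong duality of the tightened problem over $\hat{\mathcal H}$ from convexity and Assumption~\ref{ass:constraint-qual-tightened}, and $\mathcal H\subseteq\hat{\mathcal H}$ evaluated at the tightened multiplier on the dual side. The strong-duality step you flag rests on the same footing in the paper (dual attainment alone is not a standard sufficient condition for a zero gap), and your handling of the Lipschitz bound and of $\mathcal D_0$ versus $\mathcal D$ is cleaner than the paper's, which writes the pointwise bound with the wrong sign and silently identifies the two distributions.
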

\begin{proof}
    See Appendix~\ref{apx:duality_gap_proof}.
\end{proof}
Lemma~\ref{lemma:dual-gap} shows that the duality gap is controlled by how well elements of \(\hat{\mathcal H}\) can be approximated by predictors in \(\mathcal H\). Thus, richer parametrizations yield smaller duality gaps. The bound is scaled by the Lipschitz constant \(M\), which measures how perturbations in the predictor affect the loss values. In addition, the pointwise approximation error \(M\nu\) is weighted by the total mass of the optimal multiplier for the tightened problem, reflecting its sensitivity to perturbations of the constraints.

When the duality gap is bounded, the PACC-learnability guarantee of Theorem~\ref{thm:pacc} extends to the nonconvex parametric setting, up to the additional approximation term \(\Delta_{\mathrm{gap}}\) in the optimality bound.

\begin{theorem}[Near-PACC learnability of \ref{P:primal_statistical}]
\label{thm:near-pacc}
Let Assumptions~\ref{ass:unif_conv}--\ref{ass:bounded-inf-norm} and Assumptions~\ref{ass:strong-convx-loss}--\ref{ass:constraint-qual-tightened} hold, and let \(\bar f_\theta\) be the estimator returned by~\eqref{P:primal_empirical}. Then, it holds with probability at least \(1-5\delta\) that
\begin{align}\label{eq:near_pacc_obj_thm}
\left|
\mathbb{E}_{\bz \sim \mathcal D}\big[\ell_0(\bar f_{\theta},\bz)\big] - P^\star
\right|
&\le
  \Delta_{\mathrm{gap}} + 3\zeta_0 + \gamma\,\zeta, \\
    \Pr\big[\ell(\bar f_{\theta},\bz) \le 0\big] &\geq 1-\zeta_I.
\end{align}
Therefore,~\eqref{P:primal_empirical} is a near-PACC learner for~\eqref{P:primal_statistical}.
\end{theorem}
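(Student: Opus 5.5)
The plan is to follow the proof of Theorem~\ref{thm:pacc} and replace the strong-duality step with the duality-gap bound of Lemma~\ref{lemma:dual-gap}. The feasibility guarantee does not use convexity. The estimator \(\bar f_\theta\) returned by~\eqref{P:primal_empirical} satisfies \(\ell(\bar f_\theta,\bz_n)\le 0\) for every \(\bz_n\in S\), so the empirical average of \(\mathbb{I}\{\ell(\bar f_\theta,\cdot)\le 0\}\) equals one. Uniform convergence of the indicator (Assumption~\ref{ass:unif_conv}, rate \(\zeta_I\)) then gives \(\Pr[\ell(\bar f_\theta,\bz)\le 0]\ge 1-\zeta_I\) with probability at least \(1-\delta\). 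Most of the work is the two-sided optimality bound.

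For the upper bound, I use \(\hat P^\star=\frac{1}{N_0}\sum \ell_0(\bar f_\theta,\bz_i)\) and uniform convergence of \(\ell_0\) to get \(\E[\ell_0(\bar f_\theta,\bz)]\le \hat P^\star+\zeta_0\). Weak duality for the empirical problem gives \(\hat D^\star\le \hat P^\star\), which is the wrong direction for bounding \(\hat P^\star\) above. Instead I note that every statistically feasible \(f_\theta\) is empirically feasible, because a.e.\ constraints hold on the samples almost surely. Taking \(f^\star\) optimal for~\eqref{P:primal_statistical}, I get \(\hat P^\star\le \frac{1}{N_0}\sum\ell_0(f^\star,\bz_i)\le P^\star+\zeta_0\). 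Hence \(\E[\ell_0(\bar f_\theta)]-P^\star\le 2\zeta_0\), which is within the stated bound.

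The lower bound is where the dual and the gap enter. By weak duality for~\eqref{P:primal_empirical}, \(\hat P^\star\ge \hat D^\star\). Theorem~\ref{thm:dual-learn}, which uses Assumption~\ref{ass:bounded-inf-norm}, gives \(\hat D^\star\ge D^\star-2\zeta_0-\gamma\zeta\). Lemma~\ref{lemma:dual-gap} gives \(D^\star\ge P^\star-\Delta_{\mathrm{gap}}\). Combining these with \(\E[\ell_0(\bar f_\theta)]\ge \hat P^\star-\zeta_0\) yields \(\E[\ell_0(\bar f_\theta)]\ge P^\star-\Delta_{\mathrm{gap}}-3\zeta_0-\gamma\zeta\). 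A union bound over the events used collects the failure probability: uniform convergence of \(\ell_0\) on \(S_0\) for \(\bar f_\theta\) and for \(f^\star\) (one uniform event suffices), the two events inside Theorem~\ref{thm:dual-learn}, and the indicator event. The total is at most \(5\delta\).

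The main obstacle is verifying that Lemma~\ref{lemma:dual-gap} applies. It requires the statistical dual \(D^\star\) over \(\mathcal H\) to be compared with \(P^\star\) through the tightened convex relaxation, and it requires the multiplier \(\lambda^\star\) used in Theorem~\ref{thm:dual-learn} to be the dual of the nonconvex problem, which is the object both results refer to. I would also check that Theorem~\ref{thm:dual-learn} nowhere invoked convexity. Its proof only uses uniform convergence of the Lagrangian at \(\lambda^\star\) and the minimization over \(\mathcal H\), so it carries over directly. The only remaining care is the sign convention: Lemma~\ref{lemma:dual-gap} states an absolute value, but only \(P^\star-D^\star\le\Delta_{\mathrm{gap}}\) is needed, and weak duality supplies the other side.
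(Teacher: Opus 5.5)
Your proposal is correct and follows essentially the same argument as the paper: feasibility and the upper bound of $2\zeta_0$ come from \eqref{P:primal_empirical} being a relaxation of \eqref{P:primal_statistical}, which the paper gets by citing Lemma~\ref{thm:empirical_is_relaxation} and you rederive inline. The lower bound chains uniform convergence, empirical weak duality $\hat P^\star\ge\hat D^\star$, Theorem~\ref{thm:dual-learn}, and Lemma~\ref{lemma:dual-gap} exactly as in the appendix, and your union bound, which uses a single uniform-convergence event to cover both $\bar f_\theta$ and $f^\star$, fits within the stated $5\delta$.
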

\begin{proof}
    See Appendix~\ref{apx:nearPACC}.
\end{proof}

The optimality bound in Theorem~\ref{thm:near-pacc} consists of two terms. A statistical term inherited from dual learnability, which vanishes with the number of samples, and the approximation term \(\Delta_{\mathrm{gap}}\), which is independent of the sample size. Therefore, the theorem does not establish exact PACC learnability in general, but rather near-PACC learnability. Nevertheless, as discussed above, Lemma~\ref{lemma:dual-gap} shows that this residual error is controlled by the richness of the parametrization.

In the non convex case, establishing learnability guarantees for \eqref{P:dual_empirical} is not as direct. In particular, it is not true anymore that the estimator returned by \eqref{P:dual_empirical} coincides with the estimator returned by \eqref{P:primal_empirical}. We provide a partial result that shows that \eqref{P:dual_empirical} yields a primal minimizer that is PACC feasible, and yields a Lagrangian value which is near-PACC optimal.

\begin{proposition}\label{prop:dual_near_pacc}
Let Assumptions~\ref{ass:unif_conv}--\ref{ass:bounded-inf-norm} and Assumptions~\ref{ass:strong-convx-loss}--\ref{ass:constraint-qual-tightened} hold. Then there exists a primal minimizer associated with the empirical dual solution~\eqref{P:dual_empirical},
\begin{equation}\label{eq:estimator_pacc}
    \bar f_\theta
    \in
    \arg \min_{f_\theta \in \mathcal{H}}
    \hat L(f_\theta, \hat \lambda^*),
\end{equation}
such that, with probability at least \(1-5\delta\),
\begin{align}\label{eq:near_pacc_obj_thm}
\left|
\mathbb{E}_{\bz \sim \mathcal D}
\big[
\ell_0(\bar f_{\theta},\bz)
\big]
-
P^\star
\right|
&\le
\max \{ \Delta_{\mathrm{gap}}, \Delta_{\mathrm{slack}}\}
+
3\zeta_0
+
\gamma\,\zeta,
\\
\Pr\big[\ell(\bar f_{\theta},\bz) \le 0\big]
&\geq
1-\zeta_I,
\end{align}
where
\begin{equation}
    \Delta_{\mathrm{slack}}
    =
    -\frac{1}{N}
    \sum_{n=1}^N
    \hat \lambda^*_n
    \ell(\bar f_\theta, \bz_n).
\end{equation}
\end{proposition}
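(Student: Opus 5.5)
The proof splits into a feasibility part and an optimality part. Both rest on one structural fact: the empirical dual function is concave, so at $\hat\lambda^*$ there is a minimizer of $\hat L(\cdot,\hat\lambda^*)$ that satisfies the empirical constraints.

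\textbf{Choice of $\bar f_\theta$.} The superdifferential of $\hat g$ at $\lambda$ contains the convex hull of the constraint-value vectors $(\ell(f,\bz_n))_{n}$, taken over the minimizers $f\in\arg\min_{f_\theta}\hat L(f_\theta,\lambda)$. Optimality of $\hat\lambda^*$ over $\mathbb R_+^N$ gives a supergradient $s$ with $s_n\le 0$ for all $n$.

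I will use a Danskin/Carathéodory argument to upgrade this. The aim is a single minimizer $\bar f_\theta$ with $\ell(\bar f_\theta,\bz_n)\le 0$ for every $n$. I expect this selection step to be the main obstacle. In the nonconvex case $s$ is only a convex combination of the vectors coming from several minimizers, so a single feasible minimizer is not automatic. What I would try first is to exploit the fact that $\Delta_{\mathrm{slack}}$ is evaluated at a chosen minimizer: select $\bar f_\theta$ among the minimizers supporting the supergradient so that it is empirically feasible. If that fails, I would weaken feasibility to approximate empirical feasibility and absorb the error into $\zeta_I$.

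\textbf{Feasibility.} Given empirical feasibility of $\bar f_\theta$, uniform convergence of the indicator $\mathbb I\{\ell\le0\}$ (Assumption~\ref{ass:unif_conv}) gives $\Pr[\ell(\bar f_\theta,\bz)\le0]\ge 1-\zeta_I$ with probability $1-\delta$. This is the same step as in Theorem~\ref{thm:pacc}.

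\textbf{Optimality.} Empirical feasibility and $\hat\lambda^*\ge0$ give $\Delta_{\mathrm{slack}}\ge0$. Expanding the empirical Lagrangian yields the identity
\[
\frac1{N_0}\sum_{S_0}\ell_0(\bar f_\theta,\bz)=\hat L(\bar f_\theta,\hat\lambda^*)+\Delta_{\mathrm{slack}}=\hat D^*+\Delta_{\mathrm{slack}}.
\]
Uniform convergence of $\ell_0$ moves the left side to the statistical risk at cost $\zeta_0$. Theorem~\ref{thm:dual-learn} gives $|\hat D^*-D^*|\le 2\zeta_0+\gamma\zeta$, and Lemma~\ref{lemma:dual-gap} gives $D^*\le P^*\le D^*+\Delta_{\mathrm{gap}}$ (weak duality supplies the first inequality). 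Combining these,
\[
\E[\ell_0(\bar f_\theta,\bz)]-P^*\in\big[-\Delta_{\mathrm{gap}}-3\zeta_0-\gamma\zeta,\ \Delta_{\mathrm{slack}}+3\zeta_0+\gamma\zeta\big].
\]
The lower deviation is controlled by $\Delta_{\mathrm{gap}}$ and the upper by $\Delta_{\mathrm{slack}}$, both nonnegative, so the absolute deviation is at most $\max\{\Delta_{\mathrm{gap}},\Delta_{\mathrm{slack}}\}+3\zeta_0+\gamma\zeta$.

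\textbf{Probability accounting.} A union bound over the events used collects the failure probability $5\delta$. These are the two events in Theorem~\ref{thm:dual-learn}, the events for uniform convergence of $\ell_0$ on $S_0$ and of the indicator, and the event in Lemma~\ref{lemma:dual-learn}, following the pattern of the proof of Theorem~\ref{thm:near-pacc}.
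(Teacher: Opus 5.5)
Your optimality chain is the paper's. It uses the identity $\frac{1}{N_0}\sum_{S_0}\ell_0(\bar f_\theta,\bz)=\hat D^\star+\Delta_{\mathrm{slack}}$, Theorem~\ref{thm:dual-learn}, weak duality for the upper bound, and Lemma~\ref{lemma:dual-gap} for the lower one. The gap is the step you flagged yourself. Both the feasibility guarantee and the lower bound (which discards $\Delta_{\mathrm{slack}}\ge 0$) need a \emph{single} minimizer of $\hat L(\cdot,\hat\lambda^\star)$ that is empirically feasible, and neither of your remedies produces one. (To extract minimizers that ``support'' $s$ you also need the reverse Danskin inclusion $\partial\hat g(\hat\lambda^\star)\subseteq\operatorname{conv}\{s(f)/N\}$, which requires compactness; you stated only the easy inclusion.)

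Choosing among the supporting minimizers fails in general. Take $N=2$ and a finite class in which $f_1,f_2,f_3$ have slack vectors $s_1=(1,-2)$, $s_2=(-2,1)$, $s_3=(1,1)$ and empirical objectives $3,3,0$, and $f_4$ has slacks $(-1,-1)$ and objective $5$. At $\lambda=(2,2)$ the first three tie with Lagrangian value $2$, while $f_4$ gives $3$. Since $0=\tfrac13(s_1+s_2+s_3)$ and $s_1,s_2$ span $\mathbb R^2$, $(2,2)$ is the \emph{unique} empirical dual maximizer, yet every Lagrangian minimizer violates a constraint. Your fallback of absorbing approximate feasibility into $\zeta_I$ does not help: the violated fraction here is at least $1/2$, a constant rather than a vanishing term, and that would change the statement anyway.

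The paper does not close this step either. Its proof takes $0\in\partial\hat g(\hat\lambda^\star)$ as the optimality condition; on $\mathbb R^N_+$ the correct condition is a supergradient $s\le 0$ with $s_n\hat\lambda^\star_n=0$, essentially what you wrote. It then argues by contradiction that if every minimizer's slack vector has a positive entry, then $0\notin\operatorname{conv}\{s(f)\}$. The example above refutes exactly this implication. So you have found a real hole, not missed an idea. For nonconvex $\mathcal H$, the existence claim, and with it the feasibility bound, can fail under the stated assumptions, which restrict the losses and the approximation quality of $\mathcal H$ but not the set of Lagrangian minimizers. Concretely, a two-atom population version of this example gives $\Pr[\ell(\bar f_\theta,\bz)\le 0]\le 1/2$ for every minimizer at every empirical dual maximizer once both atoms are sampled.

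An extra hypothesis is needed, for instance that all minimizers of $\hat L(\cdot,\hat\lambda^\star)$ share one slack vector (e.g.\ a unique minimizer). Then, by Danskin, $\hat g$ is differentiable at $\hat\lambda^\star$, optimality forces $s(\bar f_\theta)\le 0$, and your argument goes through verbatim. Without such a hypothesis, your own algebra still gives $|\E[\ell_0(\bar f_\theta,\bz)]-P^\star|\le\Delta_{\mathrm{gap}}+|\Delta_{\mathrm{slack}}|+3\zeta_0+\gamma\zeta$ for every minimizer, but no feasibility guarantee.
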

\begin{proof}
    See Appendix~\ref{apx:nearPACC-dual}.
\end{proof}

Proposition~\ref{prop:dual_near_pacc} provides a near-optimality guarantee for the empirical dual learning rule~\eqref{P:dual_empirical}. We emphasize, however, that the guarantee ensures that at least one Lagrangian minimizer satisfies the near-PACC bounds, but not necessarily every minimizer. This is a standard primal-recovery issue in nonconvex duality. Moreover, the constant term in the optimality bound is now the maximum between the duality gap and the empirical complementary-slackness error of the selected minimizer. The duality-gap term is controlled by Lemma~\ref{lemma:dual-gap}, as discussed above. The slackness term is not bounded by our analysis, but it is directly computable after solving the empirical dual problem and therefore provides a practical diagnostic for the quality of primal recovery.

We remark that the same strategy which bound the duality gap using \eqref{P:csl_variational} could be applied to bound the slackness term and provide guarantees over all Lagrangian minimizers. Such results have been established for expected-value constraints in~\cite{elenter2024nearoptimalsolutionsconstrainedlearning}. Extending this analysis to the present pointwise-constrained setting requires a separate treatment and is left for future work. We therefore turn to the numerical experiments, where we show that dual methods perform well in practice, even in nonconvex settings.


\section{Numerical Experiments}
\label{sec:num_exps}

\begin{figure*}[t]
    \centering
    \includegraphics[width=0.7\linewidth]{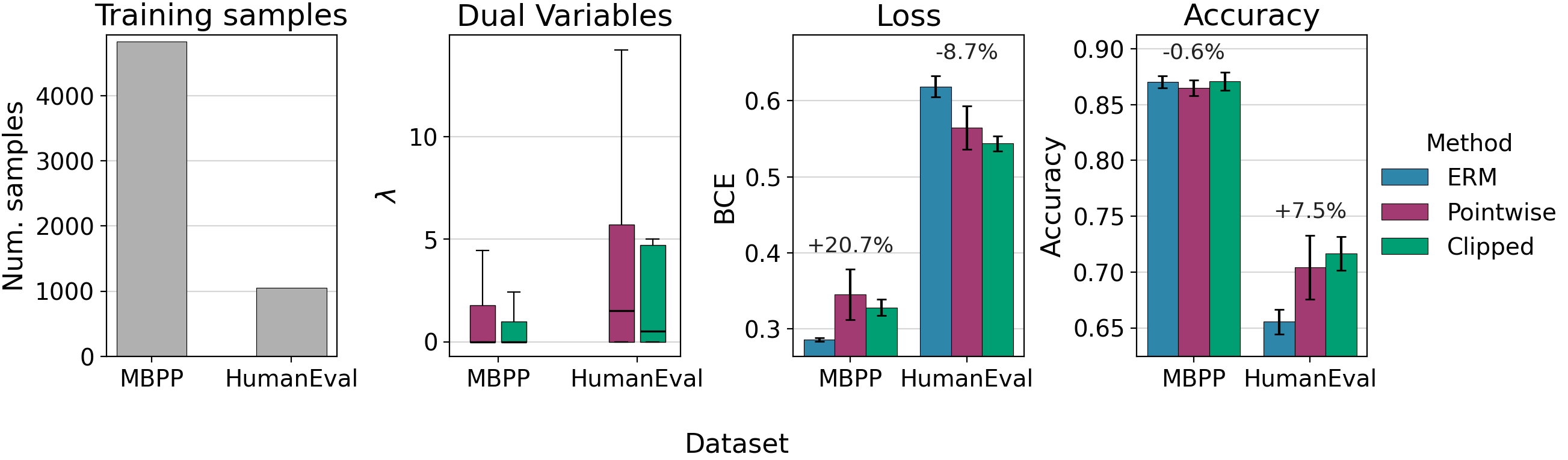}
    \caption{Number of samples, dual variables at the end of training, test loss and accuracy in the coding-AF task from FloraBench~\cite{flora-bench}, across both data sources: MBPP and Humaneval. We include the relative change in loss and accuracy for the pointwise method over ERM. \emph{Pointwise constraints lead to more homogeneous performance aross data sources, dual clipping reduces variance.}}
\label{fig:barplots}
\end{figure*}





We demonstrate the empirical behavior of everywhere-constrained learning using a binary classification task derived from the agentic workflow optimization benchmark FLORA-Bench~\cite{flora-bench}. In this dataset, each input sample is a workflow-task pair $(G,T)$, where $T$ denotes a coding problem prompt and $G$ is an agentic workflow. An agentic workflow is a structured multi-agent system in which specialized agents collaborate through task dependencies to solve a problem. We represent it as a directed graph whose nodes are agents, described by their prompts, and whose edges encode information flow or task dependencies. 
The binary label $y\in\{0,1\}$ indicates whether the workflow solves the task successfully. The coding tasks are sourced from the HumanEval~\cite{humaneval} and MBPP~\cite{mbpp} benchmarks, and workflows are obtained by running the AFLOW workflow generator~\cite{zhang2024aflow}.
 Our predictor follows the graph neural network and sentence-transformer hybrid architecture introduced in FLORA-Bench~\cite{flora-bench}.  Full architectural, optimization, and hyperparameter details are provided in Appendix~\ref{app:llm-workflow-classification}.

We instantiate our everywhere-constrained formulation using the binary cross-entropy loss as both an objective and constraint. That is, the objective minimizes the average binary cross-entropy over the training distribution, while the constraint enforces an upper bound on the \emph{per-sample} cross-entropy:
\begin{align}
\min_{\theta}\;& \mathbb{E}_{(G,T,y)\sim \mathcal{D}} \big[\ell_{\mathrm{BCE}}(f_\theta(G,T),y)\big] \\
\qquad
\text{s. to}&
\qquad
\ell_{\mathrm{BCE}}(f_\theta(G,T),y) \le c
\quad \mathcal{D}\text{-a.e.},
\end{align}
where
\begin{equation}
\ell_{\mathrm{BCE}}(\hat{y},y) = - y \log \hat{y} - (1-y)\log(1-\hat{y}).
\end{equation}
%
%

We first train the model with ERM and then set the constraint level $c$ to the third quantile of the losses over the training set for the ERM trained model. We repeat this experiment across 10 random seeds.

We observe that optimizing the average loss leads to better performance on ``easy'' samples, whereas pointwise constraints ensure more consistent performance across samples, especially in the tail. Figure~\ref{fig:ecdf} shows the empirical cumulative density of the constraint for the unconstrained and constrained models. For small values, the unconstrained CDF lies above, indicating that the unconstrained model has a higher proportion of low-loss samples. Conversely, for larger losses, the constrained model's CDF surpasses the unconstrained CDF, showing that pointwise constraints result in fewer samples with very high losses.

This improvement in tail losses, leads to more uniform performance across samples coming from different datasets. Figure~\ref{fig:barplots} shows that the imbalance in the number of samples from MBPP and Humaneval in the training dataset leads to disparate performance for the unconstrained model. That is, samples from the less represented tasks (Humaneval) are harder to learn, i.e. have a higher loss and lower accuracy. When training with pointwise constraints, these samples have larger associated dual variables at the end of training. As a result, the pointwise constrained model achieves an 8\% reduction in the loss for these samples and a 7.5\% improvement in accuracy, relative to the unconstrained model. 

In the empirical distribution of losses (Figure~\ref{fig:ecdf}) we observe that clipping dual variables reduces variance across seeds, and also in terms of accuracy across both datasets (Figure~\ref{fig:barplots}).

\section{Conclusions}

We introduced \emph{everywhere learning}, a constrained learning framework in which requirements are enforced almost everywhere over the data distribution. This formulation separates objectives from requirements and prevents violations in some regions of the input space from being compensated by improvements elsewhere. Since the data distribution is unknown and only accessible through samples, we studied when these statistical constrained problems can be approximated from finite data. Our results show that uniform convergence of the objective and constraint losses is not sufficient to guarantee learnability. The key quantity is instead the dual sensitivity of the problem. We formalized this through dual PAC learnability and showed that boundedness of the optimal multiplier in \(L_\infty(\mathcal D)\) is sufficient for learning the dual problem. This condition limits the extent to which regions that are unlikely under the data distribution can have disproportionate impact on the constrained optimum. When the condition does not hold, we showed that it can be enforced by restricting the dual domain, which is equivalent to an \(L^1\)-penalized sparse relaxation of the pointwise constraints and can be implemented empirically by clipping the dual variables. We then related dual learnability back to primal learnability. In convex settings, strong duality transfers the dual guarantees directly to PACC learnability. For nonconvex hypothesis classes, we obtained near-PACC guarantees controlled by the approximation quality of the parametrized class and by the primal recovery error of the dual rule. The experiments support these conclusions, showing that pointwise constraints and sparse dual regularization can improve generalization and recover high-quality solutions even in nonconvex settings.

\bibliographystyle{unsrtnat}
\bibliography{refs}

\clearpage

\appendices

\section{Radamacher Complexity and UC rate derivation} \label{apx:radam}
In Assumption~\ref{ass:unif_conv} we assume the existence of an uniform convergence rate for the constraint and objective losses. Rademacher complexity provides one standard way of verifying such an assumption by quantifying how rich the loss class induced by the hypothesis class is.

\begin{definition}[Rademacher complexity]
Let \(S=(z_1,\dots,z_N)\sim \mathcal{D}^N\) be \(N\) i.i.d. samples. Let \(\mathcal{L}\) be a class of real-valued functions on \(\mathcal{Z}\). The empirical Rademacher complexity of \(\mathcal{L}\) on \(S\) is
\begin{equation}
\hat{\mathcal{R}}(\mathcal{L}\circ S)
\;:=\;
\frac{1}{N}\,
\mathbb{E}_{\sigma}\left[
\sup_{h \in \mathcal{L}}
\sum_{i=1}^N \sigma_i h(z_i)
\right],
\end{equation}
where \(\sigma_1,\dots,\sigma_N\) are independent Rademacher random variables satisfying
\[
\mathbb{P}[\sigma_i=1]
=
\mathbb{P}[\sigma_i=-1]
=
\tfrac12 .
\]
The expected Rademacher complexity of \(\mathcal{L}\) is
\begin{equation}
\mathcal{R}_N(\mathcal{L})
\;:=\;
\E_S \left[
\hat{\mathcal{R}}(\mathcal{L}\circ S)
\right].
\end{equation}
\end{definition}

In our setting, the relevant function class is the class of losses generated by hypotheses in \(\mathcal{H}\). For a loss function \(\ell\), define
\begin{equation}
\mathcal{L}_{\ell}(\mathcal{H})
:=
\left\{
z=(\bx,y)\mapsto \ell\big(f_\theta(\bx),y\big)
\;:\;
f_\theta\in\mathcal{H}
\right\}.
\end{equation}
The Rademacher complexity of \(\mathcal{L}_{\ell}(\mathcal{H})\) measures how well the losses induced by hypotheses in \(\mathcal{H}\) can correlate with independent random signs on a sample. If this correlation is small, then the class cannot fit random noise well, which limits the gap between empirical and statistical averages uniformly over \(f_\theta\in\mathcal{H}\).

The following standard result makes this connection precise.

\begin{proposition}[Vanishing Rademacher complexity implies uniform convergence, Theorem 26.5~\cite{shalev2014understanding}]
\label{prop:rad-imply-uc}
Assume that every function in \(\mathcal{L}_{\ell}(\mathcal{H})\) is bounded in \([0,B]\). If
\[
\mathcal{R}_N\big(\mathcal{L}_{\ell}(\mathcal{H})\big)
\to 0
\]
for every distribution \(\mathcal{D}\), then \(\mathcal{H}\) satisfies the uniform convergence property in Definition~\ref{def:uniform-conv} with rate
\begin{equation}\label{apx:rate-rada}
\zeta_R(N,\delta)
:=
2\mathcal{R}_N\big(\mathcal{L}_{\ell}(\mathcal{H})\big)
+
B\sqrt{\frac{\log(2/\delta)}{2N}} .
\end{equation}
\end{proposition}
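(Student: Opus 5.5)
This is the classical symmetrization-plus-concentration argument for a bounded loss class. Fix \(\ell\) with values in \([0,B]\) and write \(\mathcal L=\mathcal L_\ell(\mathcal H)\). For a sample \(S\), let \(\hat{\E}_S h = \frac1N\sum_n h(z_n)\). The plan has two parts. First, bound the one-sided uniform deviation
\[
\Phi(S)=\sup_{h\in\mathcal L}\big(\E_{\mathcal D} h-\hat{\E}_S h\big)
\]
in expectation by \(2\mathcal R_N(\mathcal L)\). Second, show that \(\Phi\) concentrates around its mean. Repeating the argument with \(-h\) in place of \(h\) gives the other direction, and a union bound turns the two one-sided statements into the absolute-value bound of Definition~\ref{def:uniform-conv}.

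For the expectation bound, introduce an independent ghost sample \(S'\sim\mathcal D^N\) and write \(\E_{\mathcal D}h=\E_{S'}\hat{\E}_{S'}h\). Pulling the supremum inside the expectation over \(S'\) (Jensen) gives
\[
\E_S\Phi(S)\le \E_{S,S'}\sup_h \tfrac1N\textstyle\sum_n \big(h(z_n')-h(z_n)\big).
\]
Each summand is symmetric under swapping \(z_n\) and \(z_n'\), so inserting independent Rademacher signs \(\sigma_n\) leaves the distribution unchanged. Splitting the supremum of the sum into a sum of two suprema then yields \(\E_S\Phi(S)\le 2\mathcal R_N(\mathcal L)\).

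For concentration, note that replacing a single \(z_n\) changes \(\Phi\) by at most \(B/N\), since every \(h\) takes values in \([0,B]\). McDiarmid's bounded-differences inequality then gives, with probability at least \(1-\delta/2\),
\[
\Phi(S)\le \E\Phi + B\sqrt{\log(2/\delta)/(2N)}.
\]
The symmetric statement for \(\sup_h(\hat{\E}_S h-\E_{\mathcal D} h)\) holds with the same constants, because \(B-h\) is again bounded in \([0,B]\) and \(\{B-h\}\) has the same Rademacher complexity as \(\mathcal L\). A union bound over the two directions gives the rate \(\zeta_R(N,\delta)\) with probability at least \(1-\delta\), uniformly over \(f_\theta\in\mathcal H\).

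Monotonic decrease in \(N\) is immediate for the concentration term. For the complexity term it follows from the standard fact that \(N\mathcal R_N\) is subadditive. If one prefers to avoid that fact, one can take the running supremum \(\sup_{N'\ge N}\) of the bound, which is still a valid rate and still vanishes because \(\mathcal R_N\to0\).

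The main obstacle is the symmetrization step: justifying the exchange of supremum and expectation, and the sign-insertion argument, carefully enough to obtain exactly the constant \(2\). Measurability of the supremum over \(\mathcal L\) is a technical point that I would handle by assuming it, or by restricting to a countable dense subclass.
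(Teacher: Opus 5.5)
Your proposal is correct and matches the route the paper relies on. The paper gives no proof of its own and simply cites Theorem 26.5 of~\cite{shalev2014understanding}, whose proof is exactly your symmetrization bound by $2\mathcal{R}_N$ followed by McDiarmid. Your bounded differences of $B/N$ and two one-sided bounds at level $\delta/2$ reproduce the stated two-sided constant $B\sqrt{\log(2/\delta)/(2N)}$ exactly.

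One small correction concerns monotonicity: subadditivity of $N\mathcal{R}_N$ does not by itself imply that $\mathcal{R}_N$ is non-increasing. The right argument is the leave-one-out identity $\sum_{n=1}^{N+1}\sigma_n h(z_n)=\frac{1}{N}\sum_{j=1}^{N+1}\sum_{n\neq j}\sigma_n h(z_n)$. Moving the supremum inside the outer sum and taking expectations gives $\mathcal{R}_{N+1}\le\mathcal{R}_N$ directly. So the stated $\zeta_R$ is itself monotone, and your running-supremum fallback is not needed.
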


Thus, whenever the Rademacher complexity of the induced loss class vanishes with \(N\), the empirical risk converges uniformly to the statistical risk over all \(f_\theta\in\mathcal{H}\) with the rate given in \eqref{apx:rate-rada}. This is precisely the property required in Definition~\ref{def:uniform-conv}.

\section{Empirical problems are a relaxation}
\label{apx:emp_is_relax}
Given \(N\) samples \(S=\{(\bx_n,\by_n)\}_{n=1}^{N}\) drawn from \(\mathcal{D}\), the almost-everywhere nature of the constraint in ~\eqref{P:primal_statistical} ensures that any feasible point must satisfies the constraint for all samples \((\bx_n,\by_n)\in S\). Therefore, every feasible point of~\eqref{P:primal_statistical} is feasible for~\eqref{P:primal_empirical}. We refer to this property as the empirical problem being a \textit{relaxation} of the statistical problem. As a consequence, one can obtain an upper bound on the statistical optimum using only Assumption~\ref{ass:unif_conv}.
\begin{lemma}[Supra-optimality of \ref{P:primal_empirical}]\label{thm:empirical_is_relaxation}
Let Assumption~\ref{ass:unif_conv} hold. Then, for every \(\delta\in(0,1)\) and every distribution \(\mathcal{D}\), the rule \eqref{P:primal_empirical} returns an estimator \(f_{\theta}\in\mathcal{H}\) such that, with probability at least \(1-2\delta\),
\begin{enumerate}
\item \emph{Probably approximately supra-optimal:}
\begin{equation}\label{eq:pacc_obj}
\mathbb{E}_{(x,y)\sim \mathcal{D}}\big[\ell_0(f_{\theta}(x),y)\big]
\;\le\; P^\star + 2\zeta_0(N,\delta),
\end{equation}
\item \emph{Probably approximately feasible:}
\begin{equation}
Pr\big[\ell(f_{\theta}(\bx),\by) \le c_i\big] \geq 1-\zeta_{I}(N,\delta).
\end{equation}
\end{enumerate}
\end{lemma}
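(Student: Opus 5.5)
Let \(f^\star\in\mathcal H\) be a solution of \eqref{P:primal_statistical} and \(\bar f_\theta\) the estimator returned by \eqref{P:primal_empirical}. The plan is a standard ERM-style comparison in three links, using the relaxation property just established: \(f^\star\) is feasible for \eqref{P:primal_empirical}. First, because \(\ell(f^\star,\bz)\le 0\) holds \(\mathcal D\)-a.e., with probability one every sample \(\bz_n\in S\) lies in the set where the constraint holds. So \(f^\star\) is feasible for the empirical problem. Since \(\bar f_\theta\) minimizes the empirical objective over the empirical feasible set,
\begin{equation*}
\frac{1}{N_0}\sum_{\bz_i\in S_0}\ell_0(\bar f_\theta,\bz_i)\;\le\;\frac{1}{N_0}\sum_{\bz_i\in S_0}\ell_0(f^\star,\bz_i).
\end{equation*}

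Second, I would apply the uniform convergence of \(\ell_0\) (Assumption~\ref{ass:unif_conv}) with rate \(\zeta_0(N,\delta)\) on an event of probability at least \(1-\delta\). On that event it controls both the gap between the statistical and empirical risks of \(\bar f_\theta\) and that of \(f^\star\). Because the bound is uniform over \(\mathcal H\), it applies to the data-dependent \(\bar f_\theta\). Chaining the two deviations with the inequality above gives \(\E_{\mathcal D_0}[\ell_0(\bar f_\theta,\bz)]\le \E_{\mathcal D_0}[\ell_0(f^\star,\bz)]+2\zeta_0=P^\star+2\zeta_0\).

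Third, for feasibility I would use uniform convergence of the indicator \(\mathbb I\{\ell\le 0\}\), again from Assumption~\ref{ass:unif_conv}, on a second event of probability at least \(1-\delta\). Since \(\bar f_\theta\) satisfies every sampled constraint, its empirical average of the indicator equals \(1\). Hence \(\Pr[\ell(\bar f_\theta,\bz)\le 0]\ge 1-\zeta_I(N,\delta)\). A union bound over the two events yields probability at least \(1-2\delta\).

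The step needing the most care is the uniformity argument. \(\bar f_\theta\) depends on the sample, so a pointwise law of large numbers is insufficient, and the uniform statement of Definition~\ref{def:uniform-conv} must be used. A subtler point is that \(f^\star\)'s empirical feasibility holds only almost surely and not deterministically; I would note that this null event does not affect the probability accounting. I would also flag the convention mismatch: the threshold \(c_i\) in the feasibility claim equals \(0\) under the paper's single-constraint normalization. Finally, if the minimum in \eqref{P:primal_statistical} is not attained, I would replace \(f^\star\) by an \(\eta\)-optimal feasible point and let \(\eta\to 0\).
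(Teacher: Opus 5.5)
Your proposal is correct and follows essentially the same route as the paper: you show the statistical solution is feasible for \eqref{P:primal_empirical}, use the ERM comparison chained with two uses of uniform convergence of $\ell_0$, and apply uniform convergence of $\mathbb{I}\{\ell\le 0\}$ to an estimator whose empirical indicator average equals $1$. Your probability accounting is cleaner than the paper's write-up and matches the stated $1-2\delta$: you use one uniform event for $\ell_0$ that covers both $\bar f_\theta$ and $f^\star$, plus one event for the indicator, whereas the paper already spends $1-2\delta$ on the optimality chain alone.
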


\begin{proof}

\noindent
\textbf{Feasibility}. 
We first prove that $\hat f_\theta$ is feasible for \ref{P:primal_empirical}. Suppose it is not; then $\hat P = \infty$. But $P \leq \infty$ imply that $\exists f_\theta$ feasible for \ref{P:primal_statistical}. As the constraint holds $D-$a.e., $f_\theta$ must be feasible for \ref{P:primal_empirical}. Combined with the boundedness of $\ell_0$ it implies that $f_\theta$ is a feasible point with bounded objective value, violating the assumption that $\hat f_\theta$ is a solution of \ref{P:primal_empirical}. Therefore $\hat f_\theta$ is feasible for \ref{P:primal_empirical} and thus

\begin{equation}
\frac{1}{N}\sum_{n=1}^{N}\mathbb{I}\!\left\{\ell\!\big(f_\theta(x_{n}),y_{n}\big)\le c\right\} = 1.
\end{equation}

The uniform convergence of $\mathcal{H}$ with respect to $\mathbb{I}\!\left\{\ell\!\big(f_\theta(x),y\big)\le c\right\}$ ensures that with probability $1-\delta$,

\begin{align}
 Pr\big[\ell&(f_{\theta}(\bx),\by) \le c\big] =\mathbb{E}_{(x,y)\sim \mathcal{D}}\!\left[\mathbb{I}\!\left\{\ell\!\big(f_\theta(x),y\big)\le c\right\}\right] \\
 &\leq  \frac{1}{N}\sum_{n=1}^{N}\mathbb{I}\!\left\{\ell\!\big(f_\theta(x_{n}),y_{n}\big)\le c\right\} - \zeta_{I}(N,\delta) \\ 
 & = 1 - \zeta_{I}(N,\delta).
\end{align}

\noindent
\textbf{Supra-optimality}. Let $f_\theta^*$ be the solution of $\ref{P:primal_statistical}$. Then, with probability $1-2\delta$,

\begin{align}
    \E_{(\bx,y) \sim \mathcal D} \!\Big[ \ell_0\big(\hat f_{\theta}(\bx),y \big) \Big] &\leq  \frac{1}{N} \sum_{N = 1}^{N} \ell_0\big( \hat f_{\theta}(\bx_{n}), y_{n} \big) + \zeta_0(N,\delta) \label{eq:appxA-unif-conv-1}\\
    &\leq \frac{1}{N} \sum_{N = 1}^{N} \ell_0\big(f_{\theta}(\bx_{n}), y_{n} \big) + \zeta_0(N,\delta) \label{eq:appxA-suboptimal}\\
    &\leq \E_{(\bx,y) \sim \mathcal D} \!\Big[ \ell_0\big(f_{\theta}^*(\bx),y \big) \Big] + 2\zeta_0(N,\delta) \label{eq:appxA-unif-conv-2}\\
    &= P^* + 2\zeta_0(N,\delta)
\end{align}
where equations \eqref{eq:appxA-unif-conv-1} and \eqref{eq:appxA-unif-conv-2} result from uniform convergance of $\ell_0$ whereas \eqref{eq:appxA-suboptimal} result from the suboptimallity of $f_\theta^*$ for problem \ref{P:primal_empirical}.
\end{proof}

This result may appear satisfactory because it guarantees probable approximate feasibility while controlling how much worse the objective can be relative to the optimal value. However, it provides an incomplete characterization of the estimator as a solution to the statistical problem, since it does not bound the improvement in objective achieved solely due to the relaxation of the constraints. In particular, if no other assumption than uniform convergence holds, the solution may hold arbitrarily slow rates of convergence to the statistical optimum, as shown in Example~\ref{ex:counter-uc-lambda}, which prevent learnability of the problem.


\section{Proof of Proposition \ref{prop:dual_equiv}} \label{apx:dual_equiv}
Fix \(\epsilon,\delta\in(0,1)\). Let \(\hat\lambda^\star\in\mathbb{R}_+^N\) be an optimal solution of the empirical dual problem~\eqref{P:dual_empirical}, and let  $\hat f_\theta^\star$ be the corresponding minimizer.
Then, 
\begin{equation} 
\hat D^\star = \hat L(\hat f_\theta^\star,\hat\lambda^\star). 
\end{equation}

For each sample \(\bz_n\), let \(B_\rho(\bz_n)\) denote a ball of radius \(\rho\) centered at \(\bz_n\). Define 

\begin{equation} 
\kappa_{n,\rho}(\bz) := \frac{\mathbb{I}\{\bz\in B_\rho(\bz_n)\}} {\mathcal D(B_\rho(\bz_n))}. 
\end{equation} 

This function is nonnegative and normalized with respect to \(\mathcal D\), since 
\begin{equation} 
\mathbb{E}_{\bz\sim\mathcal D} \big[ \kappa_{n,\rho}(\bz) \big] = 1. 
\end{equation} 

It is therefore a \(\mathcal D\)-normalized bump around \(\bz_n\), whose mass concentrates at the sample point as \(\rho\to0\). 

Define the functional multiplier \begin{equation} \hat\lambda_\rho(\bz) := \frac{1}{N} \sum_{n=1}^N \hat\lambda_n^\star \kappa_{n,\rho}(\bz). \end{equation}

As \(\rho\to0\), the bump \(\kappa_{n,\rho}\) concentrates around \(\bz_n\). Hence,
\begin{equation}
\lim_{\rho\to0}
\mathbb{E}_{\bz\sim\mathcal D}
\big[
\kappa_{n,\rho}(\bz)\ell(\hat f_\theta^\star,\bz)
\big]
=
\ell(\hat f_\theta^\star,\bz_n).
\end{equation}
Therefore,
\begin{equation}
\lim_{\rho\to0}
L(\hat f_\theta^\star,\hat\lambda_\rho)
=
\mathbb{E}_{\bz\sim\mathcal D}
\big[
\ell_0(\hat f_\theta^\star,\bz)
\big]
+
\frac{1}{N}
\sum_{n=1}^N
\hat\lambda_n^\star
\ell(\hat f_\theta^\star,\bz_n).
\end{equation}
Comparing this expression with the empirical Lagrangian gives
\begin{multline}
\lim_{\rho\to0}
\left|
L(\hat f_\theta^\star,\hat\lambda_\rho)
-
\hat L(\hat f_\theta^\star,\hat\lambda^\star)
\right|
=
|
\mathbb{E}_{\bz\sim\mathcal D}
\big[
\ell_0(\hat f_\theta^\star,\bz)
\big]
- \\
\frac{1}{N}
\sum_{n=1}^N
\ell_0(\hat f_\theta^\star,\bz_n) |.
\end{multline}

By Assumption~\ref{ass:unif_conv}, with probability at least \(1-\delta/2\),
\begin{equation}
\left|
\mathbb{E}_{\bz\sim\mathcal D}
\big[
\ell_0(\hat f_\theta^\star,\bz)
\big]
-
\frac{1}{N}
\sum_{n=1}^N
\ell_0(\hat f_\theta^\star,\bz_n)
\right|
\leq
\zeta_0(N,\delta/2).
\end{equation}
Taking \(N\) large enough so that
\begin{equation}
\zeta_0(N,\delta/2)
\leq
\epsilon/3,
\end{equation}
and then taking \(\rho\) small enough, we obtain
\begin{equation}
\left|
L(\hat f_\theta^\star,\hat\lambda_\rho)
-
\hat L(\hat f_\theta^\star,\hat\lambda^\star)
\right|
\leq
2\epsilon/3.
\end{equation}

From the premise it holds for \(\epsilon/3\) and \(\delta/2\), that for \(N\) sufficiently large, with probability at least \(1-\delta/2\), 
\begin{equation} 
\left|D^\star-\hat D^\star\right| \leq \epsilon/3 . 
\end{equation}

Combining the last two equations yields, with probability at least $1-\delta$,
\begin{equation}
\left|
D^\star
-
L(\hat f_\theta^\star,\hat\lambda_\rho)
\right|
\leq
\left|
D^\star-\hat D^\star
\right|
+
\left|
\hat L(\hat f_\theta^\star,\hat\lambda^\star)
-
L(\hat f_\theta^\star,\hat\lambda_\rho)
\right|
\leq
\epsilon .
\end{equation}

As this holds for any $\epsilon$ and  $\delta$, dual PAC learnability holds for  the pair \((\hat f_\theta^\star,\hat\lambda_\rho)\) returned by \eqref{P:dual_empirical}.

\section{Proof of Lemma \ref{lemma:suff-cond-lags}} \label{apx:suff-cond}
Define the set of primal minimizers for any arbitrary $\lambda$ and $\hat \lambda$ as
\begin{equation}
	F_\theta^*(\lambda) = \arg \min_{f_\theta \in \mathcal{H}} L(f_\theta,\lambda)
	\;;\;
	\hat F_\theta (\hat{\lambda}) = \arg \min_{f_\theta \in \mathcal{H}} \hat{L}(f_\theta,\hat{\lambda}).
\end{equation}
for the Lagrangians defined in~\eqref{E:lagrangian_csl_param} and~\eqref{E:empirical_lagrangian} respectively.

\noindent
\textbf{Lower bound.}
Let \(\hat\lambda^\star\in\mathbb{R}_+^N\) be an optimal solution of the empirical dual problem, and let $\hat f_\theta^\star \in \hat F_\theta (\hat{\lambda}^*)$.

For each sample \(\bz_n\), let \(B_\rho(\bz_n)\) be a ball of radius \(\rho\) centered at \(\bz_n\), and define
\begin{equation}
\kappa_{n,\rho}(\bz)
:=
\frac{\mathbb{I}\{\bz\in B_\rho(\bz_n)\}}
{\mathcal D(B_\rho(\bz_n))}.
\end{equation}
Thus, \(\kappa_{n,\rho}\) is nonnegative, supported on \(B_\rho(\bz_n)\), and satisfies
\begin{equation}
\mathbb{E}_{\bz\sim\mathcal D}
\big[
\kappa_{n,\rho}(\bz)
\big]
=
1.
\end{equation}
Define
\begin{equation}
\hat\lambda_\rho(\bz)
:=
\frac{1}{N}
\sum_{n=1}^N
\hat\lambda_n^\star
\kappa_{n,\rho}(\bz).
\end{equation}
Let $\hat f_{\theta,\rho}
\in  F_\theta (\hat\lambda_\rho)$. Suboptimality of  \(\hat\lambda_\rho\) yields,
\begin{equation}
D^\star-\hat D^\star
\geq
L(\hat f_{\theta,\rho},\hat\lambda_\rho)
-
\hat L(\hat f_\theta^\star,\hat\lambda^\star).
\end{equation}
Since \(\hat f_\theta^\star\) minimizes \(\hat L(\cdot,\hat\lambda^\star)\), we have
\begin{equation}
D^\star-\hat D^\star
\geq
L(\hat f_{\theta,\rho},\hat\lambda_\rho)
-
\hat L(\hat f_{\theta,\rho},\hat\lambda^\star).
\end{equation}
By definition of the Lagrangians we get
\begin{multline}
L(\hat f_{\theta,\rho},\hat\lambda_\rho)
-
\hat L(\hat f_{\theta,\rho},\hat\lambda^\star) \\
= 
\mathbb{E}_{\bz\sim\mathcal D}
\big[
\ell_0(\hat f_{\theta,\rho},\bz)
\big]
-
\frac{1}{N}
\sum_{n=1}^N
\ell_0(\hat f_{\theta,\rho},\bz_n)
\\
\quad
+
\frac{1}{N}
\sum_{n=1}^N
\hat\lambda_n^\star
\left(
\mathbb{E}_{\bz\sim\mathcal D}
\big[
\kappa_{n,\rho}(\bz)
\ell(\hat f_{\theta,\rho},\bz)
\big]
-
\ell(\hat f_{\theta,\rho},\bz_n)
\right).
\end{multline}

By uniform convergence of the objective loss, with probability at least \(1-\delta\),
\begin{equation}
\mathbb{E}_{\bz\sim\mathcal D}
\big[
\ell_0(\hat f_{\theta,\rho},\bz)
\big]
-
\frac{1}{N}
\sum_{n=1}^N
\ell_0(\hat f_{\theta,\rho},\bz_n)
\geq
-\zeta_0(N,\delta).
\end{equation}

By construction, as \(\rho\to0\) 
\begin{equation}
    \mathbb{E}_{\bz\sim\mathcal D}
\big[
\kappa_{n,\rho}(\bz)
\ell(\hat f_{\theta,\rho},\bz)
\big]
\to
\ell(\hat f_{\theta,\rho},\bz_n)
\end{equation}

and thus the term 
\begin{equation}
\eta_\rho = \frac{1}{N}
\sum_{n=1}^N
\hat\lambda_n^\star
\left(
\mathbb{E}_{\bz\sim\mathcal D}
\big[
\kappa_{n,\rho}(\bz)
\ell(\hat f_{\theta,\rho},\bz)
\big]
-
\ell(\hat f_{\theta,\rho},\bz_n)
\right)
\end{equation}
can be made arbitrarely small.

Hence,
\begin{equation}
D^\star-\hat D^\star
\geq
-\zeta_0(N,\delta)-\eta_\rho.
\end{equation}
Taking the limit \(\rho\to0\) yields
\begin{equation}\label{eq:apx-low-bound-final}
D^\star-\hat D^\star
\geq
-\zeta_0(N,\delta).
\end{equation}

\noindent
\textbf{Upper Bound.} Let $\tilde f_{\theta} \in  \hat F^*_\theta(\tilde \lambda)$, with $\tilde \lambda$ defined in the hypothesis of the theorem.
Sub-optimality of $\tilde \lambda$ yields
\begin{align}\label{eq:apx-low-bound-t2-0}
	L(f_\theta^*,\lambda^*) - \hat L(\hat f_\theta^*,\hat \lambda) &\leq  L(f_\theta^*,\lambda^*) - \hat L(\tilde f_\theta,\tilde \lambda),
\end{align}
whereas sub-optimality of $\tilde f_\theta$ yields
\begin{equation}
     L(f_\theta^*,\lambda^*) - \hat L(\tilde f_\theta,\tilde \lambda) \leq L(\tilde f_\theta,\lambda^*) - \hat L(\tilde f_\theta,\tilde \lambda).
\end{equation}
Under the hypothesis in \eqref{eq:unif-conv-lagrangians} it holds, with probability $1-\delta$,
\begin{equation}\label{eq:apx-low-bound-t2}
    L(\tilde f_\theta,\lambda^*) - \hat L(\tilde f_\theta,\tilde \lambda) \leq \zeta_L(N,\delta).
\end{equation}
Combining the inequalities in \eqref{eq:apx-low-bound-t2-0}-\eqref{eq:apx-low-bound-t2} yields, with probability at least $1-\delta$,
\begin{equation}\label{eq:apx-upper-bound-final}
D^* - \hat D^* = L(f_\theta^*,\lambda^*) - \hat L(\hat f_\theta^*,\hat \lambda) \leq \zeta_L(N,\delta)
\end{equation}
\noindent
\textbf{Union Bound}
Finally, applying a union bound argument between \eqref{eq:apx-low-bound-final} and  \eqref{eq:apx-upper-bound-final} gets, with probability $1-2\delta$,
\begin{equation}
    |L(f_\theta^*,\lambda^*) - \hat L(\hat f_\theta^*, \hat \lambda^*)|  \leq  \zeta_0(N,\delta) + \zeta_L(N,\delta),
\end{equation}
which completes the proof.



\section{Proof of Lemma \ref{lemma:dual-learn}} \label{apx:dual_learning_2}
\noindent\textbf{Preliminary lemma}. We first state and prove an auxiliary lemma.

\begin{lemma}\label{lemma:rademacher_bound}
Let $\mathcal{H}$ be a hypothesis class, let $\ell:\mathcal{H}\times\mathcal{Z}\to\mathbb{R}$ be a loss, and let  $g:\mathcal{Z}\to\mathbb{R}_+$ be such that
$\|g\|_\infty<\infty$ almost surely. Then
\begin{equation}\label{eq:apx-lemma-rad}
\mathcal{R}_N\big(\mathcal{H}\circ g\ell\big)
\;\le\;
\|g\|_\infty\;
\mathcal{R}_N\big(\mathcal{H}\circ \ell\big).
\end{equation}

Moreover, if $\ell$ has vanishing Rademacher complexity, then so does $g\ell$, and with probability $1-\delta$
\begin{align}\label{eq:apx-lemma-rad-2}
\left|\mathbb{E}\left[g(z)\ell(f, z)\right]-\frac{1}{n} \sum_{i=1}^n g(z_i)\ell(f, z_i)\right| \leq \|g\|_\infty \zeta_R(N),
\end{align}
where $\zeta_R(N)$ is the Rademacher rate for $\ell$ as provided in proposition \ref{prop:rad-imply-uc}.
\end{lemma}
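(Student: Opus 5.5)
The plan is the standard contraction argument for Rademacher complexity, followed by an application of Proposition~\ref{prop:rad-imply-uc} to the reweighted loss class $\{z\mapsto g(z)\ell(f,z): f\in\mathcal H\}$.

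\textbf{Bounding the complexity.} Fix a sample $S=(z_1,\dots,z_N)$ and write $a_i:=g(z_i)\in[0,\|g\|_\infty]$. The empirical complexity of the reweighted class is
\[
\frac{1}{N}\E_\sigma\Big[\sup_{f\in\mathcal H}\sum_{i=1}^N \sigma_i a_i \ell(f,z_i)\Big].
\]
For each $i$, the map $t\mapsto a_i t$ is $\|g\|_\infty$-Lipschitz. Talagrand's contraction lemma therefore bounds this quantity by $\|g\|_\infty\,\hat{\mathcal R}(\mathcal L_\ell\circ S)$. The lemma allows coordinate-dependent contractions $\phi_i$, each $L$-Lipschitz with $\phi_i(0)=0$; see e.g.\ Lemma 26.9 of~\cite{shalev2014understanding}, which is stated for a single vector map $\phi(v)=(\phi_1(v_1),\dots,\phi_N(v_N))$.

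Alternatively, I would prove the bound directly. Fix $\sigma_{2},\dots,\sigma_N$ and average over $\sigma_1$. The function $\sup_f[a_1\ell(f,z_1)+R(f)]-\sup_f[-a_1\ell(f,z_1)+R(f)]$ is nonnegative, and scaling $a_1\in[0,\|g\|_\infty]$ up to $\|g\|_\infty$ can only increase the supremum of the average, since the average is convex in $a_1$ and even. Iterating over coordinates gives the bound. Taking expectation over $S$ then yields~\eqref{eq:apx-lemma-rad}.

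\textbf{Uniform convergence.} If $\mathcal R_N(\mathcal L_\ell)\to0$, then $\mathcal R_N(g\ell)\le\|g\|_\infty\mathcal R_N(\mathcal L_\ell)\to0$. Each function $g\ell$ takes values in $[0,\|g\|_\infty B]$. Proposition~\ref{prop:rad-imply-uc} applied with range bound $\|g\|_\infty B$ then gives, with probability $1-\delta$ and uniformly over $f\in\mathcal H$,
\[
2\|g\|_\infty\mathcal R_N(\mathcal L_\ell)+\|g\|_\infty B\sqrt{\log(2/\delta)/(2N)}=\|g\|_\infty\zeta_R(N,\delta).
\]
This is~\eqref{eq:apx-lemma-rad-2}.

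\textbf{Main obstacle.} The delicate step is the contraction with sample-dependent, non-uniform scalings $a_i$ together with the measurability and a.s.\ finiteness of $g$. I would handle this by conditioning on $S$, because contraction is applied to the empirical complexity with $a_i$ fixed, and by noting that $\|g\|_\infty$ is a deterministic ($\mathcal D$-essential) bound. This lemma will then be applied with $g=\lambda^*$ and $\|g\|_\infty\le\gamma$ from Assumption~\ref{ass:bounded-inf-norm}.
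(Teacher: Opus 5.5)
Your proposal is correct and is essentially the paper's proof. The paper likewise conditions on $S$, applies the contraction lemma to the maps $\varphi_i(t)=g(\bz_i)\,t$ (each Lipschitz with constant at most $\|g\|_\infty$), takes the expectation over $S$, and invokes Proposition~\ref{prop:rad-imply-uc} for the class $g\ell$ with range bound $B\|g\|_\infty$; your optional coordinatewise convex-and-even argument is a valid elementary substitute for the contraction lemma in this linear case.
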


\begin{proof}

Fix a sample $S=(\bz_1,\dots,z_N)$ and define the functions $\varphi_i: \mathbb{R} \rightarrow \mathbb{R}$ by $\varphi_i(t):= g(\bz_i) t$. Then it follows from the definition that that
\begin{align}
    \hat{\mathcal{R}}_N \big( \mathcal{H} \circ g\ell \circ S \big) &= \,\mathbb{E}_{\sigma}\left[\frac{1}{N} \sup_{f_\theta \in\mathcal{H}} \sum_{i=1}^N \sigma_i g(\bz_i)\ell(f_\theta, z_i)\right] \\
    &= \,\mathbb{E}_{\sigma}\left[\frac{1}{N}\sup_{f_\theta\in\mathcal{H}} \sum_{i=1}^N \sigma_i \varphi_i(\ell(f_\theta,z_i))\right].
\end{align}
As each $\varphi_i$ is Lipschitz continuous with constants $L_i \leq  \|g\|_{\infty}$, Ledoux-talagrand contraction theorem~\cite[Theorem 4.12]{ledoux1991probability} yields
\begin{align}
\hat{\mathcal{R}}_N \big( \mathcal{H} \circ g\ell \circ S \big) &= \,\mathbb{E}_{\sigma}\left[ \frac{1}{N} \sup_{f_\theta \in\mathcal{H}} \sum_{i=1}^N \sigma_i \varphi_i(\ell(f_\theta, z_i))\right] \\
&\leq \|g\|_\infty\,\mathbb{E}_{\sigma}\left[\frac{1}{N}\sup_{f_\theta \in\mathcal{H}} \sum_{i=1}^N \sigma_i \ell(f_\theta, z_i)\right] \\
&= \|g\|_\infty \hat{\mathcal{R}}_N \big( \mathcal{H} \circ \ell \circ S \big)
\end{align}

Taking expectation over $S\sim\mathcal{D}^N$ on both sides results in \eqref{eq:apx-lemma-rad}. It is clear that a vanishing complexity for $\ell$ results in a vanishing complexity for $g\ell$. Finally, $\ell$ being bounded by $B$ implies $g\ell$ being bounded by $B \|g\|_\infty$. Then, proposition \ref{prop:rad-imply-uc} yields 
\begin{multline}
\left|\mathbb{E}\left[g(z)\ell(f, z)\right]-\frac{1}{n} \sum_{i=1}^n g(z_i)\ell(f, z_i)\right| \leq \\ 
2\mathcal{R}_N\big(\mathcal{H}\circ g\ell\big) + B\|g\|_\infty\sqrt{\frac{\log 2 / \delta}{2 N}} \leq \|g\|_\infty \zeta_R(N).
\end{multline}

\end{proof}

\noindent\textbf{Proof of Lemma \ref{lemma:dual-learn}}. Assumption \ref{ass:dual_exis} ensures that $\lambda^*\in L_1^+(\mathcal{D})$ and $\hat{\lambda}^*\in \reals^N$ exist. Let $\tilde \lambda \in \reals^N$ be the evaluation of $\lambda^*$ on the sample set $S$,
\begin{equation}
    \tilde \lambda_n = \lambda^*(\bz_n).
\end{equation}
By Assumption~\ref{ass:bounded-inf-norm}, $|\lambda^*|_\infty = \Gamma < \infty$ and thus  $\tilde \lambda \in [0,\Gamma]^N$. The Lagrangian difference is, by definition
\begin{multline}\label{apx:lag_diffrences_up_bound-2}
L(f_\theta,\lambda^*) - \hat L(f_\theta,\tilde \lambda) = \\
\Big[ \E_{\bz \sim \mathcal D} \!\Big[ \ell_0\big(f_\theta, \bz \big) \Big] - \frac{1}{N} \sum_{N = 1}^{N} \ell_0\big( f_\theta,\bz_{N}\big) \Big] \\
- \Big[ \E_{\bz \sim \mathcal D } [\lambda^*(\bz) \ell\big( f_\theta, \bz \big)]  - \frac{1}{N} \sum_{n = 1}^{N} \lambda^*(\bz_n) \ell\big( f_\theta,\bz_n \big) \Big].
\end{multline}
Assumption \ref{ass:unif_conv} ensures that uniform convergence with respect to $\ell_0$ holds, and thus with probability $1-\delta$,
\begin{equation}\label{eq:apx-obj-bound-2}
\big| \E_{\bz \sim \mathcal D} \!\Big[ \ell_0\big(f_\theta, \bz \big) \Big] - \frac{1}{N} \sum_{N = 1}^{N} \ell_0\big( f_\theta,\bz_{N}\big)  \big| \leq \zeta_0(N,\delta).
\end{equation}
For the second term in \eqref{apx:lag_diffrences_up_bound-2}, $|\lambda^*|_\infty = \Gamma < \infty$ combined with vanishing Radamacher complexity of $\ell$, consequence of Assumption~\ref{ass:unif_conv},  we are under the conditions of Lemma \ref{lemma:rademacher_bound}. Thus, with probability $1-\delta$,
\begin{equation}\label{eq:apx-const-bound-2}
\Big| \E_{\bz \sim \mathcal D } [\lambda^*(\bz) \ell\big( f_\theta, \bz \big)]  - \frac{1}{N}\sum_{n = 1}^{N} \lambda^*(\bz_n) \ell\big( f_\theta,\bz_n \big)\Big| \leq \Gamma \zeta(N,\delta).
\end{equation}
Triangle inequality and a union bound argument for \eqref{eq:apx-obj-bound-2}-\eqref{eq:apx-const-bound-2}, yields, with probability $1-2\delta$,
\begin{equation}
    |L(f_\theta^*,\lambda^*) - L(\hat f_\theta^*,\lambda^*)|  \leq  \zeta_0(N,\delta) + \Gamma \zeta(N,\delta).
\end{equation}

\section{Proof of Proposition \ref{prop:duality_res}} \label{apx:dual_res}
\begin{proof}
The Lagrangian \(L(f_\theta,u,\lambda): \mathcal{H}\times L_1(\mathcal{\mathcal{D}})\times L_1^+(\mathcal{\mathcal{D}}) \to \reals\) of problem \eqref{P:primal_res} is,
\begin{multline}\label{eq:lagrangian_res}
    L(f_\theta,u,\lambda) = 
    \E_{\bz\sim\mathcal D}\!\Big[\ell_0(f_\theta,\bz)\Big] + \gamma\E_{\bz\sim\mathcal D}\!\Big[|u(\bz)|\Big] \\
    + \E_{\bz\sim\mathcal D}\!\Big[\lambda(\bz)\big(\ell(f_\theta,\bz)-c-u(\bz)\big)\Big].
\end{multline}
Thus, the dual function $g_{\gamma}(\lambda): L_1^+(\mathcal{\mathcal{D}}) \to \reals$  is
\[
g_{\gamma}(\lambda)
=
\inf_{f_\theta\in\mathcal H,\;u\in L_1(\mathcal Z)}
L(f_\theta,u,\lambda).
\]
Rearranging the terms in \eqref{eq:lagrangian_res}, we see that the minimizations over \(u\) and $f_\theta$ are decoupled,
\begin{multline}\label{eq:lagrangian_res_rear}
    \inf_{f_\theta\in\mathcal H,\;u\in L_1(\mathcal Z)} L(f_\theta,u,\lambda) = \\
    \inf_{f_\theta\in\mathcal H}
    \E_{\bz\sim\mathcal D}\!\Big[\ell_0(f_\theta,\bz)+\lambda(\bz)\big(\ell(f_\theta,\bz)-c\big)\Big] \\ 
    + \inf_{\;u\in L_1(\mathcal Z)} \E_{\bz\sim\mathcal D}\!\Big[\gamma|u(\bz)|-\lambda(\bz)u(\bz)\Big].
\end{multline}
As the minimization over $u$ is separable in $\bz$,  it is enough to solve individually for each $u(\bz)$, 
\begin{equation}
\inf_{u(\bz)\in\mathbb R}
\left\{
\gamma|u(\bz)|-\lambda u(\bz)
\right\}
=
\begin{cases}
0, & |\lambda(\bz)| \leq \gamma,\\[1mm]
-\infty, & |\lambda(\bz)| > \gamma.
\end{cases}
\end{equation}
Therefore,
\begin{equation}
\inf_{u\in L_1(\mathcal Z)}
\E_{\bz\sim\mathcal D}\!\Big[\gamma|u(\bz)|-\lambda(\bz)u(\bz)\Big]
=
\begin{cases}
0,
& ||\lambda||_{\infty} \leq \gamma\\[1mm]
-\infty,
& \text{otherwise}.
\end{cases}
\label{eq:inf_u_res}
\end{equation}
Substituting~\eqref{eq:inf_u_res} into~\eqref{eq:lagrangian_res_rear}, we obtain
\[
g_{\gamma}(\lambda)
=
\begin{cases}
\displaystyle
g(\lambda)
& ||\lambda||_{\infty} \leq \gamma,
\\[3mm]
-\infty,
& \text{otherwise}.
\end{cases}
\]
Thus,
\begin{equation}
    \sup_{\lambda \in  L_1^+(\mathcal{\mathcal{D}})} g_{\gamma}(\lambda) = \sup_{\lambda \in \Lambda_\gamma} g(\lambda)
\end{equation}
which is exactly the claim of the Proposition.
\end{proof}

\section{Proof of Theorem \ref{thm:dual_learnability_res}} \label{apx:res_learnability}
Define the set of primal minimizers for any arbitrary $\lambda$ and $\hat \lambda$ as
\begin{equation}
	F_\theta^*(\lambda) = \arg \min_{f_\theta \in \mathcal{H}} L(\tilde f_\theta,\lambda)
	\;;\;
	\hat F_\theta (\hat{\lambda}) = \arg \min_{f_\theta \in \mathcal{H}} \hat{L}(\tilde f_\theta,\hat{\lambda}).
\end{equation}
for the Lagrangians defined in~\eqref{E:lagrangian_csl_param} and~\eqref{E:empirical_lagrangian} respectively. 

\noindent
\textbf{Existence of $\lambda_\gamma^*$ and $\hat \lambda_\gamma^*$}. 
In the case of  $\hat \lambda_\gamma^*$, for every fixed \(f_\theta\in\mathcal H\), the map
\begin{equation}
\lambda \mapsto \hat L(f_\theta,\lambda)
\end{equation}
is affine, and hence continuous. Therefore \(\hat g\),
being the pointwise infimum of continuous functions, is upper
semicontinuous~\cite[Lemma 2.41]{aliprantis2006infinite}.

The supremum is taken over $[0,\gamma]^N$, which  is compact by the Heine--Borel theorem. Therefore the restricted empirical dual problem \eqref{P:dual_res_emp} is the supremum of an upper semi-continuous function, and thus achieves its optimal solution~\cite[Theorem]{aliprantis2006infinite}.

In the case of $\lambda_\gamma^*$, the set
\[
    \Lambda_\gamma
    :=
    \{\lambda\in L^\infty_+(\mathcal D): \|\lambda\|_\infty\le \gamma\}
\]
is  weak-\(*\) compact by the Banach--Alaoglu theorem, given the  weak-\(*\) topology
\(\sigma(L^\infty,L^1)\).

Moreover, for every
fixed \(f_\theta\in\mathcal H\), if \(\ell(f,\cdot)\in L^1(\mathcal D)\), then
\[
    \lambda \mapsto L(f,\lambda)
    =
    \mathbb E_{\bz\sim\mathcal D}[\ell_0(f,\bz)]
    +
    \int \lambda(\bz)\ell(f,\bz)\,d\mathcal D(\bz)
\]
is weak-\(*\) continuous. Hence $g$
is weak-\(*\) upper semicontinuous. Thus, the supremum of a weak-\(*\) upper semicontinuous function over a weak-\(*\) compact set achieves its maximum.

\noindent
\textbf{Lower Bound.} 
By definition of the domain of \eqref{P:dual_res_emp}, $\hat \lambda^*_{\gamma} \in [0,\gamma]^N$. Thus, we can choose a $\tilde \lambda$ such that $||\tilde \lambda||_\infty \leq \gamma$ and
\begin{equation}\label{apx:def-dual-lower-res}
\tilde \lambda(\bz_n) = (\hat \lambda^*_{\gamma})_n.
\end{equation}

Let $\tilde f_{\theta} \in  F^*_\theta(\tilde \lambda)$. Sub-optimality of $\tilde \lambda$ yields
\begin{align}\label{eq:apx-low-bound-t1-0-res}
    D^*_{\gamma} - \hat L(\hat f_\theta^*,\hat \lambda_{\gamma}^*) &\geq  L(\tilde f_{\theta},\tilde \lambda) - \hat L(\hat f_\theta^*,\hat \lambda_{\gamma}^*).
\end{align}
whereas sub-optimality of $\tilde f_{\theta}$ yields
\begin{equation}
     L(\tilde f_{\theta},\tilde \lambda) - \hat L(\hat f_\theta^*,\hat \lambda_{\gamma}^*) \geq L(\tilde f_{\theta},\tilde \lambda) -\hat L( \tilde f_{\theta},\hat \lambda^*_{\gamma}).
\end{equation}
The Lagrangian difference is, by definition of the Lagrangians and equation \eqref{apx:def-dual-lower-res},
\begin{multline}\label{apx:lag_diffrences_up_bound-2-res}
L(\tilde f_\theta,\tilde \lambda) - \hat L(\tilde f_\theta,\hat \lambda^*_{\gamma}) = \\
\Big[ \E_{\bz \sim \mathcal D} \!\Big[ \ell_0\big(\tilde f_\theta, \bz \big) \Big] - \frac{1}{N} \sum_{N = 1}^{N} \ell_0\big(\tilde f_\theta,\bz_{N}\big) \Big] \\
+ \Big[ \E_{\bz \sim \mathcal{D}} [ \tilde\lambda(\bz) \ell\big(\tilde f_\theta, \bz \big)]  - \frac{1}{N}\sum_{n = 1}^{N} \tilde \lambda(\bz_n) \; \ell\big(\tilde f_\theta,\bz_n \big) \Big].
\end{multline}
Assumption \ref{ass:unif_conv} ensures that uniform convergence with respect to $\ell_0$ holds, and thus with probability $1-\delta$,
\begin{equation}\label{eq:apx-obj-bound-2-res}
\E_{\bz \sim \mathcal D} \!\Big[ \ell_0\big(\tilde f_\theta, \bz \big) \Big] - \frac{1}{N} \sum_{N = 1}^{N} \ell_0\big(\tilde f_\theta,\bz_{N}\big)   \geq -\zeta_0(N,\delta).
\end{equation}
For the second term in \eqref{apx:lag_diffrences_up_bound-2-res},  by construction $||\tilde \lambda||_\infty \leq \gamma$. Thus, we are under the conditions of Lemma \ref{lemma:rademacher_bound} and, with probability $1-\delta$,
\begin{align}\label{eq:apx-low-bound-t1-res}
 \E_{\bz \sim \mathcal{D}} [ \tilde\lambda(\bz) \ell\big(\tilde f_\theta, \bz \big)]  - \frac{1}{N}\sum_{n = 1}^{N} \tilde \lambda(\bz_n) \; \ell\big(\tilde f_\theta,\bz_n \big) \geq -\gamma \zeta(N,\delta).
\end{align}

\noindent
\textbf{Upper Bound.} By definition of the domain $\Lambda_\gamma$, we have that
\begin{equation}
    ||\lambda^*_{\gamma}||_{\infty} \leq \gamma.
\end{equation}
Let  $\tilde \lambda \in [0,\gamma]^N$ be the vector
\begin{equation}\label{apx:tilde-lambda-sampled}
    \tilde \lambda_n = \lambda^*_{\gamma}(\bz_n),
\end{equation}
and let $\tilde f_{\theta} \in  \hat F^*_\theta(\tilde \lambda)$.

Sub-optimality of $\tilde \lambda$ yields
\begin{align}\label{eq:apx-low-bound-t2-0-res}
	D_\gamma^* - \hat L(\hat f_\theta^*,\hat \lambda_{\gamma}^*) &\leq  D_\gamma^* - \hat L(\tilde f_\theta,\tilde \lambda),
\end{align}
whereas sub-optimality of $\tilde f_\theta$ yields
\begin{equation}
     D_\gamma^* - \hat L(\tilde f_\theta,\tilde \lambda) \leq L(\tilde f_\theta,\lambda^*_\gamma) - \hat L(\tilde f_\theta,\tilde \lambda).
\end{equation}
By definition of the Lagrangians and equation \eqref{apx:tilde-lambda-sampled},
\begin{multline}\label{apx:lag_diffrences_up_bound-2-res}
L(\tilde f_\theta,\lambda^*_\gamma) - \hat L(\tilde f_\theta,\tilde \lambda) = \\
\Big[ \E_{\bz \sim \mathcal D} \!\Big[ \ell_0\big(\tilde f_\theta, \bz \big) \Big] - \frac{1}{N} \sum_{N = 1}^{N} \ell_0\big(\tilde f_\theta,\bz_{N}\big) \Big] \\
+ \Big[ \E_{\bz \sim \mathcal{D}} [ \lambda_\gamma(\bz) \ell\big(\tilde f_\theta, \bz \big)]  - \frac{1}{N}\sum_{n = 1}^{N} \lambda_\gamma(\bz_n) \; \ell\big(\tilde f_\theta,\bz_n \big) \Big].
\end{multline}
Assumption \ref{ass:unif_conv} ensures that uniform convergence with respect to $\ell_0$ holds, and thus with probability $1-\delta$,
\begin{equation}\label{eq:apx-obj-bound-2-res}
\E_{\bz \sim \mathcal D} \!\Big[ \ell_0\big(\tilde f_\theta, \bz \big) \Big] - \frac{1}{N} \sum_{N = 1}^{N} \ell_0\big(\tilde f_\theta,\bz_{N}\big)   \leq \zeta_0(N,\delta).
\end{equation}
For the second term in \eqref{apx:lag_diffrences_up_bound-2-res},  as $||\tilde \lambda||_\infty \leq \gamma$ we are under the conditions of Lemma \ref{lemma:rademacher_bound} and, with probability $1-\delta$,
\begin{align}\label{eq:apx-low-bound-t2-res}
 \E_{\bz \sim \mathcal{D}} [ \lambda_\gamma(\bz) \ell\big(\tilde f_\theta, \bz \big)]  - \frac{1}{N}\sum_{n = 1}^{N} \tilde \lambda(\bz_n) \; \ell\big(\tilde f_\theta,\bz_n \big) \leq \gamma \zeta(N,\delta).
\end{align}

\noindent
\textbf{Union Bound}
Finally, combining the inequalities in \eqref{eq:apx-low-bound-t1-0-res}-\eqref{eq:apx-low-bound-t1-res}, the inequalities in \eqref{eq:apx-low-bound-t2-0-res}-\eqref{eq:apx-low-bound-t2-res} and applying a union bound argument, we get, with probability $1-5\delta$
\begin{equation}
    |D_\gamma^* - \hat L(\hat f_\theta^*, \hat \lambda^*_\gamma)|  \leq  2\zeta_0(N,\delta) + \gamma \zeta(N,\delta),
\end{equation}
which completes the proof.

\section{Derivations for example \ref{ex:counter-uc-lambda}} \label{apx:example_2}

Let \(x\) be a random variable supported on \([0,1]\) with cumulative distribution function \(\mathcal D\).

Consider
\begin{equation}\label{eq:apx_counterexample_primal}
\begin{aligned}
    & \min_{\theta \in \reals} \quad  (\theta-1)^2 \\
    &\text{s.t.} \quad \theta - x \leq 0, \quad \mathcal D\text{-a.e.}.
\end{aligned}
\end{equation}
The constraint requires \(\theta \leq x\). Since the lowest value of \(x\) is zero, the feasible set is
\begin{equation}
    \{\theta \in \mathbb R : \theta \leq 0\}.
\end{equation}
Therefore, the optimal primal solution is \(\theta^\star=0\), and the optimal value is
\begin{equation}
    P^\star = (\theta^\star-1)^2 = 1.
\end{equation}

We now derive the dual problem. Let \(P_\lambda\) be a nonnegative finite measure on \([0,1]\), representing the multiplier associated with the pointwise constraint \(\theta-x\leq 0\). The Lagrangian is
\begin{equation}\label{eq:apx_counterexample_lagrangian}
    L(\theta,P_\lambda)
    =
    (\theta-1)^2
    +
     \E_{P_\lambda} \big[\theta-x\big].
\end{equation}

For fixed \(P_\lambda\), minimizing over \(\theta\) gives
\begin{equation}
    \theta(P_\lambda)=1-\frac{\E_{P_\lambda} \big[1\big]}{2}.
\end{equation}

Substituting this value into the Lagrangian yields the dual function
\begin{align}
    g(P_\lambda)
    &=
   - \E_{P_\lambda} \big[1\big]
    \left(
        \frac{\E_{P_\lambda} \big[1\big]}{4}
        -1
    \right)
    -
    \E_{P_\lambda} \big[x\big].
\end{align}

For a fixed total mass \(\E_{P_\lambda} \big[1\big]\), the first term in \(g(P_\lambda)\) is fixed, while the last term is maximized by minimizing $\E_{P_\lambda} \big[x\big]$.

Because \(x\geq 0\), the best measure is to concentrate all the multiplier mass at \(x=0\). Thus, for fixed \(\E_{P_\lambda} \big[1\big]\), the optimal measure is \(P_\lambda=\E_{P_\lambda} \big[1\big]\delta_0\), and the dual objective reduces to
\begin{equation}
    g(\E_{P_\lambda} \big[1\big]\delta_0)
    =
    \E_{P_\lambda} \big[1\big]-\frac{\E_{P_\lambda} \big[1\big]^2}{4}.
\end{equation}
Maximizing over \(\E_{P_\lambda} \big[1\big]\geq 0\) gives
\begin{equation}
   \E_{P_\lambda} \big[1\big] = 2
\end{equation}
Hence,
\begin{equation}
    P_\lambda^\star = 2\delta_0,
    \qquad
    D^\star = g(P_\lambda^\star)=1.
\end{equation}

We now consider the empirical problem. Let $S=\{x_1,\ldots,x_N\}$ be \(N\) i.i.d. samples from \(\mathcal D\), ordered so that $x_1$ is the smallest.
The empirical constrained problem is
\begin{equation}\label{eq:apx_counterexample_emp_primal}
\begin{aligned}
    & \min_{\theta \in \mathbb R} \quad  (\theta-1)^2 \\
    &\text{s.t.} \quad \theta - x_n \leq 0,
    \qquad n=1,\ldots,N.
\end{aligned}
\end{equation}
The feasible set is now \(\theta\leq x_1\). Since \(x_1\in[0,1]\), the empirical primal optimizer is
\begin{equation}
    \hat\theta^\star = x_1,
\end{equation}
and the empirical optimal value is
\begin{equation}
    \hat P^\star = (x_1-1)^2 = (1-x_1)^2.
\end{equation}

The same value is obtained from the empirical dual. Let
\(\hat\lambda=(\hat\lambda_1,\ldots,\hat\lambda_N)\in\mathbb R_+^N\). The empirical Lagrangian is
\begin{equation}\label{eq:apx_counterexample_emp_lagrangian}
    \hat L(\theta,\hat\lambda)
    =
    (\theta-1)^2
    +
    \sum_{n=1}^N \hat\lambda_n(\theta-x_n).
\end{equation}
Define $\Gamma := \sum_{n=1}^N \hat\lambda_n$. Then
\begin{equation}
    \hat L(\theta,\hat\lambda)
    =
    (\theta-1)^2
    +
    \Gamma\theta
    -
    \sum_{n=1}^N \hat\lambda_n x_n.
\end{equation}
For fixed \(\hat\lambda\), minimizing over \(\theta\) gives
\begin{equation}
    2(\theta-1)+\Gamma=0,
    \qquad
    \theta(\hat\lambda)=1-\frac{\Gamma}{2}.
\end{equation}
Substituting this value into the empirical Lagrangian yields
\begin{align}
    \hat g(\hat\lambda)
    &=   -
    \Gamma\left(\frac{\Gamma}{4}-1\right)
    -
    \sum_{n=1}^N \hat\lambda_n x_n.
\end{align}
For a fixed total multiplier mass \(\Gamma\), the term
\begin{equation}
    \sum_{n=1}^N \hat\lambda_n x_n
\end{equation}
is minimized by assigning all the mass to the smallest sample \(x_1\). Hence, the optimal empirical multiplier has the form
\begin{equation}
    \hat\lambda_1=\Gamma,
    \qquad
    \hat\lambda_n=0
    \quad
    \text{for } n\neq 1.
\end{equation}
The empirical dual objective then reduces to
\begin{equation}
    \hat g(\Gamma)
    =
    \Gamma-\frac{\Gamma^2}{4}
    -
    \Gamma x_1
    =
    (1-x_1)\Gamma-\frac{\Gamma^2}{4}.
\end{equation}
Maximizing over \(\Gamma\geq 0\) gives
\begin{equation}
    \Gamma^\star = 2(1-x_1).
\end{equation}
Thus,
\begin{equation}
    \hat\lambda_1^\star = 2(1-x_1),
    \qquad
    \hat\lambda_n^\star = 0
    \quad
    \text{for } n\neq 1,
\end{equation}
and the empirical dual value is
\begin{equation}
    \hat D^\star
    =
    \hat g(\hat\lambda^\star)
    =
    (1-x_1)^2.
\end{equation}

Since \(D^\star=1\), the dual learnability gap is
\begin{align}
    |D^\star-\hat D^\star|
    &=
    1-(1-x_1)^2 \\
    &=
    2x_1-x_1^2 \\
    &\leq
    2x_1.
\end{align}
It remains to control the smallest value \(x_1\). For any \(t\in[0,1]\),
\begin{equation}
    \mathbb P(x_1>t)
    =
    \mathbb P(x_1>t,\ldots,x_N>t)
    =
    (1-F(t))^N.
\end{equation}
Therefore, with probability at least \(1-\delta\),
\begin{equation}
    x_1
    \leq
    F^{-1}\big(1-\delta^{1/N}\big),
\end{equation}
where \(F^{-1}\) denotes the inverse of \(F\). Since
\begin{equation}
    1-\delta^{1/N}
    =
    1-\exp\left(-\frac{\log(1/\delta)}{N}\right)
    \approx
    \frac{\log(1/\delta)}{N}.
\end{equation}

Consequently,
\begin{equation}
    |D^\star-\hat D^\star|
    \leq
    2F^{-1}\big(1-\delta^{1/N}\big)
    \approx
    2F^{-1}\left(\frac{\log(1/\delta)}{N}\right)
\end{equation}
with probability at least \(1-\delta\).

\section{Duality in distribution space} \label{apx:distr_space}
The main text formulates the dual problem using functional
multipliers \(\lambda \in L^1_+(\mathcal D)\). This choice keeps
the presentation close to standard learning notation and avoids
requiring measure-theoretic background in the main development.
In particular, it allows us to write the multiplier-weighted
constraint term as an expectation under \(\mathcal D\), evaluate
multipliers at sampled points, and compare statistical and
empirical Lagrangians directly. However, the mathematically
natural object underlying these functional multipliers is not a
function but a finite nonnegative measure on \(\mathcal Z\). Placing
the dual problem in this measure-theoretic framework provides
the appropriate language for proving dual attainability, describing
singular multipliers, and formalizing the sensitivity interpretation
used in Section~\ref{sec:ass-disc}.

First, we show that the dual attainability
assumption used in the main text can be replaced, in the
measure formulation, by a strict feasibility condition. This is
the technical statement behind the references to this appendix
after Assumption~\ref{ass:dual_exis} and Assumption~8.
Second, we show that the functional multiplier \(\lambda^*\)
appearing in Assumption~\ref{ass:bounded-inf-norm} is the
Radon--Nikodym derivative of the optimal dual measure with
respect to \(\mathcal D\), whenever such a derivative exists.
This formalizes the sense in which \(\|\lambda^*\|_\infty\) iss as a measure of how
much the optimal sensitivity measure can concentrate relative
to the data distribution.

Throughout this appendix, \(\mathcal Z\) is compact and, for
each \(f_\theta \in \mathcal H\), the functions
\(\ell_0(f_\theta,\cdot)\) and \(\ell(f_\theta,\cdot)\) are
continuous on \(\mathcal Z\). We denote by
\(\mathcal M_+(\mathcal Z)\) the cone of finite nonnegative
Radon measures on \(\mathcal Z\), equipped with the weak-*
topology inherited from the duality
\begin{equation}\label{eq:app-measure-duality}
    \mathcal M(\mathcal Z)=C(\mathcal Z)^* .
\end{equation}

\subsection{The measure-valued dual problem}

Given a positive measure
\(\mu \in \mathcal M_+(\mathcal Z)\), we define the measure-space
Lagrangian
\begin{equation}\label{eq:app-measure-lagrangian}
    L_{\mathcal M}(f_\theta,\mu)
    :=
    \E_{\bz\sim\mathcal D_0}
    \big[\ell_0(f_\theta,\bz)\big]
    +
    \int_{\mathcal Z} \ell(f_\theta,\bz)\,d\mu(\bz).
\end{equation}
The associated dual function is
\begin{equation}\label{eq:app-measure-dual-function}
    g_{\mathcal M}(\mu)
    :=
    \inf_{f_\theta\in\mathcal H}
    L_{\mathcal M}(f_\theta,\mu),
\end{equation}
and the measure-valued dual problem is
\begin{equation}\label{eq:app-measure-dual}
    D^\star_{\mathcal M}
    :=
    \sup_{\mu\in\mathcal M_+(\mathcal Z)}
    g_{\mathcal M}(\mu).
\end{equation}

The functional formulation used in the main text is recovered
as a special case. Indeed, if \(\mu\ll\mathcal D\), then the
Radon--Nikodym theorem gives a density
\begin{equation}\label{eq:app-rn-density-general}
    \lambda
    =
    \frac{d\mu}{d\mathcal D}
    \in L^1_+(\mathcal D),
\end{equation}
and \eqref{eq:app-measure-lagrangian} becomes
\begin{equation}\label{eq:app-functional-recovery}
    L_{\mathcal M}(f_\theta,\mu)
    =
    J(f_\theta)
    +
    \E_{\bz\sim\mathcal D}
    \big[
        \lambda(\bz)\ell(f_\theta,\bz)
    \big].
\end{equation}
Thus, the multiplier function \(\lambda\) used in the main text
is the density of a dual measure \(\mu\) with respect to the
data distribution. Conversely, the measure formulation is
slightly more general because it also allows singular dual
measures, such as Dirac masses, which need not admit an
\(L^1(\mathcal D)\) density.

For the empirical problem, a vector
\(\hat\lambda=(\hat\lambda_1,\ldots,\hat\lambda_N)\in\mathbb R^N_+\)
induces the empirical measure
\begin{equation}\label{eq:app-empirical-measure}
    \hat\mu_{\hat\lambda}
    :=
    \frac1N
    \sum_{n=1}^N
    \hat\lambda_n \delta_{\bz_n}.
\end{equation}
Therefore,
\begin{equation}\label{eq:app-empirical-integral}
    \int_{\mathcal Z}\ell(f_\theta,\bz)\,
    d\hat\mu_{\hat\lambda}(\bz)
    =
    \frac1N
    \sum_{n=1}^N
    \hat\lambda_n \ell(f_\theta,\bz_n),
\end{equation}
which is exactly the empirical Lagrangian term used in the
main text. Notice that
\begin{equation}\label{eq:app-empirical-tv-norm}
    \|\hat\mu_{\hat\lambda}\|_{\mathrm{TV}}
    =
    \frac1N
    \sum_{n=1}^N \hat\lambda_n.
\end{equation}
For this reason, we write
\begin{equation}\label{eq:app-empirical-scaled-norm}
    \|\hat\lambda\|_{1,N}
    :=
    \frac1N\|\hat\lambda\|_1
\end{equation}
when comparing empirical multipliers with finite measures.

\subsection{Dual attainability from strict feasibility}

We now show that working in distribution space allows the
dual attainability assumption in the main text to be replaced
by strict feasibility. The strict feasibility condition must hold
on the support on which the measure-valued dual variables may
place mass.

\begin{assumption}[Strict feasibility]\label{ass:strict-feasible}
There exist \(\bar f_\theta\in\mathcal H\) and \(c>0\) such that
\begin{equation}\label{eq:app-strict-feasibility}
    \ell(\bar f_\theta,\bz) \leq -c
    \qquad
    \text{for all } \bz\in\mathcal Z .
\end{equation}
\end{assumption}

If \(\mathcal Z=\operatorname{supp}(\mathcal D)\) and
\(\ell(\bar f_\theta,\cdot)\) is continuous, the same condition
is implied by the corresponding \(\mathcal D\)-almost
everywhere inequality with a uniform margin.

\begin{proposition}[Dual attainability in measure space]
\label{prop:measure-dual-attainability}
Suppose Assumption~\ref{ass:strict-feasible} holds. Then the
supremum in \eqref{eq:app-measure-dual} is attained. That is,
there exists \(\mu^\star\in\mathcal M_+(\mathcal Z)\) such that
\begin{equation}\label{eq:app-attained-statistical-dual}
    g_{\mathcal M}(\mu^\star)
    =
    D^\star_{\mathcal M}.
\end{equation}
Moreover,
\begin{equation}\label{eq:app-tv-bound-statistical}
    \|\mu^\star\|_{\mathrm{TV}}
    \leq
    \frac{
        J(\bar f_\theta)-D^\star_{\mathcal M}
    }{c}.
\end{equation}

Similarly, with probability one over the sampled set, the
empirical dual problem admits a maximizer
\(\hat\lambda^\star\in\mathbb R^N_+\). If
\begin{equation}\label{eq:app-empirical-objective}
    \hat J(\bar f_\theta)
    :=
    \frac1N\sum_{n=1}^N
    \ell_0(\bar f_\theta,\bz_n),
\end{equation}
then
\begin{equation}\label{eq:app-tv-bound-empirical}
    \|\hat\lambda^\star\|_{1,N}
    =
    \frac1N\|\hat\lambda^\star\|_1
    \leq
    \frac{
        \hat J(\bar f_\theta)-\hat D^\star
    }{c}.
\end{equation}
\end{proposition}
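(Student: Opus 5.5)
The proof follows the standard Slater-type argument, once for the statistical problem and once for the empirical one. For the statistical problem I will first obtain an a priori total-variation bound on near-optimal multipliers, and then extract a maximizer by weak-* compactness and upper semicontinuity.

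\textbf{TV bound.} Fix any $\mu\in\mathcal M_+(\mathcal Z)$. Since $g_{\mathcal M}(\mu)$ is an infimum over $\mathcal H$, evaluating the Lagrangian at the strictly feasible $\bar f_\theta$ of Assumption~\ref{ass:strict-feasible} gives
\begin{equation*}
g_{\mathcal M}(\mu)\le J(\bar f_\theta)+\int_{\mathcal Z}\ell(\bar f_\theta,\bz)\,d\mu(\bz)\le J(\bar f_\theta)-c\,\|\mu\|_{\mathrm{TV}}.
\end{equation*}
\begin{itemize}
\item Since $\ell_0\in[0,B]$, we have $J(\bar f_\theta)\le B$.
\item Taking $\mu=0$ gives $D^\star_{\mathcal M}\ge g_{\mathcal M}(0)\ge 0$, so $D^\star_{\mathcal M}$ is finite.
\end{itemize}
Now take a maximizing sequence $\mu_k$ with $g_{\mathcal M}(\mu_k)\to D^\star_{\mathcal M}$. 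The inequality above gives $\|\mu_k\|_{\mathrm{TV}}\le (J(\bar f_\theta)-g_{\mathcal M}(\mu_k))/c$. The right-hand side converges, so the sequence is bounded in total variation, with $\limsup_k\|\mu_k\|_{\mathrm{TV}}\le (J(\bar f_\theta)-D^\star_{\mathcal M})/c$.

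\textbf{Compactness and attainment.} Because $\mathcal Z$ is compact, $C(\mathcal Z)$ is separable. Banach--Alaoglu then makes the TV ball in $\mathcal M(\mathcal Z)=C(\mathcal Z)^*$ weak-* compact and metrizable. So, after passing to a subsequence, $\mu_k\to\mu^\star$ weak-*. The cone $\mathcal M_+$ is weak-* closed, since it is defined by $\int\varphi\,d\mu\ge0$ for each $\varphi\ge0$; hence $\mu^\star\ge 0$.

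For each fixed $f_\theta$, the map $\mu\mapsto L_{\mathcal M}(f_\theta,\mu)$ is weak-* continuous, because $\ell(f_\theta,\cdot)\in C(\mathcal Z)$. The dual function $g_{\mathcal M}$ is an infimum of such maps, so it is weak-* upper semicontinuous. Therefore $g_{\mathcal M}(\mu^\star)\ge\limsup_k g_{\mathcal M}(\mu_k)=D^\star_{\mathcal M}$, and $\mu^\star$ attains the supremum.

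For the TV bound on the limit, note that $\|\mu\|_{\mathrm{TV}}=\int 1\,d\mu$ on the positive cone. This makes the norm weak-* continuous there, since $1\in C(\mathcal Z)$. Alternatively, one can simply reapply the opening inequality to $\mu^\star$ with $g_{\mathcal M}(\mu^\star)=D^\star_{\mathcal M}$. Either way we obtain \eqref{eq:app-tv-bound-statistical}.

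\textbf{Empirical problem.} Here the argument is finite-dimensional and needs no topology. Strict feasibility holds for all $\bz\in\mathcal Z$, hence at every sample point, deterministically. So
\begin{equation*}
\hat g(\lambda)\le \hat J(\bar f_\theta)-c\,\tfrac1N\|\lambda\|_1 .
\end{equation*}
The function $\hat g$ is upper semicontinuous on $\mathbb R^N_+$, as an infimum of affine maps. We also have $\hat g(0)\ge0$. The superlevel set $\{\lambda\ge0:\hat g(\lambda)\ge \hat g(0)\}$ is therefore closed and contained in the compact simplex-like set $\{\lambda\ge0:\|\lambda\|_1\le N\hat J(\bar f_\theta)/c\}$. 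An upper semicontinuous function attains its maximum on a compact set, which gives a maximizer $\hat\lambda^\star$. Evaluating the displayed inequality at $\hat\lambda^\star$ gives \eqref{eq:app-tv-bound-empirical}; the statement "with probability one" then holds trivially.

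\textbf{Main obstacle.} The delicate step is justifying weak-* upper semicontinuity and the sequential extraction. I handle both with separability of $C(\mathcal Z)$ and continuity of $\ell(f_\theta,\cdot)$. If continuity of the losses in $\bz$ is unavailable, the argument must instead use nets and Banach--Alaoglu directly, and that step needs the most care.
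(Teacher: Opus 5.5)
Your proof is correct and follows the paper's argument essentially step for step. Both use the Slater-type bound $g_{\mathcal M}(\mu)\le J(\bar f_\theta)-c\|\mu\|_{\mathrm{TV}}$, obtained by evaluating the Lagrangian at $\bar f_\theta$; both then use Banach--Alaoglu and weak-* upper semicontinuity of $g_{\mathcal M}$ to get an attaining $\mu^\star$, and both finish with the finite-dimensional compactness argument for the empirical dual. You are in fact slightly more careful than the paper: you justify the sequential extraction via separability of $C(\mathcal Z)$ (this needs $\mathcal Z$ to be metrizable, which holds in this setting), you check that $\mathcal M_+(\mathcal Z)$ is weak-* closed, and you get empirical attainment from a compact superlevel set.
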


\begin{proof}
We prove the statistical statement first. By strict feasibility,
for any \(\mu\in\mathcal M_+(\mathcal Z)\),
\begin{equation}\label{eq:app-strict-feasibility-bound}
\begin{aligned}
    L_{\mathcal M}(\bar f_\theta,\mu)
    &=
    J(\bar f_\theta)
    +
    \int_{\mathcal Z}
    \ell(\bar f_\theta,\bz)\,d\mu(\bz)
    \\
    &\leq
    J(\bar f_\theta)
    -
    c\|\mu\|_{\mathrm{TV}} .
\end{aligned}
\end{equation}
Since
\(g_{\mathcal M}(\mu)
\leq
L_{\mathcal M}(\bar f_\theta,\mu)\), we obtain
\begin{equation}\label{eq:app-max-seq-bound}
    \|\mu\|_{\mathrm{TV}}
    \leq
    \frac{
        J(\bar f_\theta)-g_{\mathcal M}(\mu)
    }{c}.
\end{equation}

Let \((\mu^k)_{k\geq1}\) be a maximizing sequence, so that
\begin{equation}\label{eq:app-maximizing-sequence}
    g_{\mathcal M}(\mu^k)
    \to
    D^\star_{\mathcal M}.
\end{equation}
By \eqref{eq:app-max-seq-bound}, the sequence is eventually
bounded in total variation norm. Since
\(\mathcal M(\mathcal Z)=C(\mathcal Z)^*\), the
Banach--Alaoglu theorem implies that a subsequence, which we
denote by \((\mu^{k_j})_{j\geq1}\), converges weak-* to some
\(\mu^\star\in\mathcal M_+(\mathcal Z)\).

For fixed \(f_\theta\), the map
\begin{equation}\label{eq:app-weak-star-map}
    \mu
    \mapsto
    L_{\mathcal M}(f_\theta,\mu)
\end{equation}
is weak-* continuous because
\(\ell(f_\theta,\cdot)\in C(\mathcal Z)\). Therefore
\(g_{\mathcal M}\), being the infimum of weak-* continuous
functions, is weak-* upper semicontinuous. Hence
\begin{equation}\label{eq:app-upper-semicontinuity-attainment}
    g_{\mathcal M}(\mu^\star)
    \geq
    \limsup_{j\to\infty}
    g_{\mathcal M}(\mu^{k_j})
    =
    D^\star_{\mathcal M}.
\end{equation}
By definition of the supremum,
\(g_{\mathcal M}(\mu^\star)\leq D^\star_{\mathcal M}\).
Thus
\begin{equation}\label{eq:app-attainment-conclusion}
    g_{\mathcal M}(\mu^\star)
    =
    D^\star_{\mathcal M},
\end{equation}
so the supremum is attained. Substituting
\(\mu=\mu^\star\) in \eqref{eq:app-max-seq-bound} gives
\eqref{eq:app-tv-bound-statistical}.

The empirical statement follows from the same argument in
finite dimension. Since the sampled points belong to
\(\mathcal Z\), Assumption~\ref{ass:strict-feasible} implies
\begin{equation}\label{eq:app-sample-strict-feasibility}
    \ell(\bar f_\theta,\bz_n)\leq -c,
    \qquad n=1,\ldots,N.
\end{equation}
Therefore, for every \(\hat\lambda\in\mathbb R^N_+\),
\begin{equation}\label{eq:app-empirical-multiplier-bound}
    \hat g(\hat\lambda)
    \leq
    \hat L(\bar f_\theta,\hat\lambda)
    \leq
    \hat J(\bar f_\theta)
    -
    c\|\hat\lambda\|_{1,N}.
\end{equation}
Every empirical maximizing sequence is therefore bounded in
\(\|\cdot\|_{1,N}\). Since closed bounded subsets of
\(\mathbb R^N_+\) are compact and \(\hat g\) is upper
semicontinuous, the empirical supremum is attained. The bound
\eqref{eq:app-tv-bound-empirical} follows by evaluating the
previous inequality at \(\hat\lambda^\star\).
\end{proof}

Proposition~\ref{prop:measure-dual-attainability} is the
measure-space replacement for Assumption~\ref{ass:dual_exis}.
The main text states Assumption~\ref{ass:dual_exis} directly
to avoid introducing this machinery in Section~III. In the
measure formulation, however, strict feasibility is sufficient
to guarantee the existence of optimal dual measures.

The same argument also applies to the tightened dual problem
used in the near-learnability analysis. Let \(\rho\geq0\) and
consider the tightened problem
\begin{equation}\label{eq:app-tightened-primal}
    P^\star_\rho
    =
    \min_{f\in\mathcal C}
    J(f)
    \quad
    \text{s.t.}
    \quad
    \ell(f,\bz)\leq -\rho
    \quad
    \text{for all }\bz\in\mathcal Z,
\end{equation}
where \(\mathcal C\) may be either \(\mathcal H\) or its
convexified class \(\widehat{\mathcal H}\). If there exists
\(\bar f\in\mathcal C\) and \(c>0\) such that
\begin{equation}\label{eq:app-tightened-strict-feasibility}
    \ell(\bar f,\bz)\leq -\rho-c
    \qquad
    \text{for all }\bz\in\mathcal Z,
\end{equation}
then the measure-valued dual of
\eqref{eq:app-tightened-primal} attains its optimum. This is
the strict-feasibility replacement for the attainability
assumption imposed on the tightened dual problem in
Subsection~IV-B.

\subsection{Recovering functional multipliers}

The measure-space optimizer \(\mu^\star\) need not always be
representable by a function \(\lambda^\star\in L^1_+(\mathcal D)\).
The functional representation is available precisely when
\(\mu^\star\) is absolutely continuous with respect to the data
distribution.

\begin{definition}[Absolute continuity]
A measure \(\mu\in\mathcal M_+(\mathcal Z)\) is absolutely
continuous with respect to \(\mathcal D\), written
\begin{equation}\label{eq:app-absolute-continuity-notation}
    \mu\ll\mathcal D,
\end{equation}
if for every measurable set \(A\subseteq\mathcal Z\),
\begin{equation}\label{eq:app-absolute-continuity-definition}
    \mathcal D(A)=0
    \quad\Longrightarrow\quad
    \mu(A)=0.
\end{equation}
\end{definition}

Absolute continuity rules out dual measures that place
positive mass on sets that are invisible under the data
distribution. If \(\mu^\star\ll\mathcal D\), then the
Radon--Nikodym theorem gives a density
\begin{equation}\label{eq:app-optimal-rn-density}
    \lambda^\star
    :=
    \frac{d\mu^\star}{d\mathcal D}
    \in L^1_+(\mathcal D)
\end{equation}
such that, for every measurable \(A\subseteq\mathcal Z\),
\begin{equation}\label{eq:app-rn}
    \mu^\star(A)
    =
    \int_A
    \lambda^\star(\bz)\,d\mathcal D(\bz).
\end{equation}
This is the functional multiplier used in the main text.
Equivalently, the reweighting measure denoted by
\(P_{\lambda^\star}\) in Subsection~III-D is
\begin{equation}\label{eq:app-reweighting-measure}
    P_{\lambda^\star}
    =
    \lambda^\star \mathcal D
    =
    \mu^\star .
\end{equation}

The boundedness assumption on \(\lambda^\star\) has a direct
measure-theoretic meaning.

\begin{lemma}[Bounded density implies domination]
\label{lemma:app-bounded-density}
Suppose \(\mu^\star\ll\mathcal D\) and
\begin{equation}\label{eq:app-bounded-density-assumption}
    \lambda^\star
    =
    \frac{d\mu^\star}{d\mathcal D}
    \in L^\infty(\mathcal D).
\end{equation}
Then, for every measurable set \(A\subseteq\mathcal Z\),
\begin{equation}\label{eq:app-domination}
    \mu^\star(A)
    \leq
    \|\lambda^\star\|_\infty\,\mathcal D(A).
\end{equation}
\end{lemma}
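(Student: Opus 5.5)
The plan is to combine the Radon--Nikodym representation \eqref{eq:app-rn} with the essential-supremum bound on the density. Fix a measurable set \(A\subseteq\mathcal Z\). Since \(\mu^\star\ll\mathcal D\), \eqref{eq:app-rn} gives
\begin{equation*}
    \mu^\star(A)=\int_A \lambda^\star(\bz)\,d\mathcal D(\bz).
\end{equation*}
The rest of the argument bounds the integrand on a set of full \(\mathcal D\)-measure.

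By definition of the \(L^\infty(\mathcal D)\) norm, the set
\begin{equation*}
    E:=\{\bz\in\mathcal Z:\lambda^\star(\bz)>\|\lambda^\star\|_\infty\}
\end{equation*}
is \(\mathcal D\)-null. Split the integral over \(A\cap E\) and \(A\setminus E\).

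On \(A\cap E\), the contribution vanishes because integrals over \(\mathcal D\)-null sets are zero. Equivalently, \(\mu^\star(A\cap E)=0\) by absolute continuity.

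On \(A\setminus E\), we have \(0\le\lambda^\star\le\|\lambda^\star\|_\infty\) pointwise. Monotonicity of the integral for nonnegative functions then gives
\begin{equation*}
    \int_{A\setminus E}\lambda^\star\,d\mathcal D\le\|\lambda^\star\|_\infty\,\mathcal D(A\setminus E)\le\|\lambda^\star\|_\infty\,\mathcal D(A).
\end{equation*}

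Adding the two pieces yields \eqref{eq:app-domination}. Nonnegativity of \(\lambda^\star\), which holds since \(\mu^\star\in\mathcal M_+(\mathcal Z)\), ensures no sign issues arise.

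There is no real obstacle here. The only point requiring care is treating \(\|\cdot\|_\infty\) as the essential supremum rather than the pointwise supremum, which is exactly why the null set \(E\) must be discarded. I would also remark that the bound is tight in the sense that \(\|\lambda^\star\|_\infty\) is the smallest constant for which \eqref{eq:app-domination} holds for all \(A\). To see this, if \(\mu^\star(A)\le C\,\mathcal D(A)\) for all \(A\), take \(A=\{\lambda^\star>C\}\); if \(\mathcal D(A)>0\), the representation gives \(\mu^\star(A)>C\,\mathcal D(A)\), a contradiction. This connects the lemma to the sensitivity interpretation in Section~\ref{sec:ass-disc}.
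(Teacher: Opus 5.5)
Your proof is correct and follows the same route as the paper: write $\mu^\star(A)=\int_A\lambda^\star\,d\mathcal D$ via the Radon--Nikodym representation and bound the integrand by $\|\lambda^\star\|_\infty$. Your explicit removal of the $\mathcal D$-null set where $\lambda^\star$ exceeds its essential supremum makes rigorous the step the paper does in one line, and your tightness remark is a correct optional addition.
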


\begin{proof}
By \eqref{eq:app-rn},
\begin{equation}\label{eq:app-domination-proof}
\begin{aligned}
    \mu^\star(A)
    &=
    \int_A
    \lambda^\star(\bz)\,d\mathcal D(\bz)
    \\
    &\leq
    \|\lambda^\star\|_\infty
    \int_A 1\,d\mathcal D(\bz)
    \\
    &=
    \|\lambda^\star\|_\infty\,\mathcal D(A).
\end{aligned}
\end{equation}
\end{proof}

Lemma~\ref{lemma:app-bounded-density} is the formal version
of the intuition in Subsection~III-D. The measure
\(\mu^\star=P_{\lambda^\star}\) assigns sensitivity mass to
regions of \(\mathcal Z\). The density
\(\lambda^\star=d\mu^\star/d\mathcal D\) compares this
sensitivity mass with the probability of sampling the same
region. Therefore, the bound
\(\|\lambda^\star\|_\infty\leq\gamma\) says that no region can
carry more than \(\gamma\) times as much sensitivity mass as
sampling probability:
\begin{equation}\label{eq:app-gamma-domination}
    P_{\lambda^\star}(A)
    \leq
    \gamma\,\mathcal D(A).
\end{equation}
This is exactly the condition needed in the proof of the
Lagrangian uniform convergence bound in Section~III.

Equivalently, the restricted multiplier set
\begin{equation}\label{eq:app-functional-restricted-set}
    \Lambda_\gamma
    =
    \{\lambda\in L^1_+(\mathcal D):\|\lambda\|_\infty\leq\gamma\}
\end{equation}
can be written in measure form as
\begin{equation}\label{eq:app-measure-restricted-set}
    \mathcal M_\gamma(\mathcal D)
    =
    \left\{
        \mu\in\mathcal M_+(\mathcal Z):
        \mu\ll\mathcal D,\;
        \frac{d\mu}{d\mathcal D}\leq \gamma
        \ \mathcal D\text{-a.e.}
    \right\}.
\end{equation}
Thus, restricting the dual variables to have bounded
\(L^\infty\) norm is the same as restricting the dual
sensitivity measure to be dominated by \(\gamma\mathcal D\).

\subsection{Sensitivity and perturbations}

We finally formalize the sensitivity interpretation used in
Subsection~III-D. For a perturbation \(u:\mathcal Z\to\mathbb R\),
consider the perturbed constraint
\begin{equation}\label{eq:app-perturbed-constraint}
    \ell(f_\theta,\bz)\leq u(\bz).
\end{equation}
Positive values of \(u\) relax the constraint, while negative
values tighten it. The measure-valued dual perturbation
function is
\begin{equation}\label{eq:app-dual-perturbation}
    D_{\mathcal M}(u)
    :=
    \sup_{\mu\in\mathcal M_+(\mathcal Z)}
    \inf_{f_\theta\in\mathcal H}
    \left\{
        J(f_\theta)
        +
        \int_{\mathcal Z}
        \big(\ell(f_\theta,\bz)-u(\bz)\big)\,
        d\mu(\bz)
    \right\}.
\end{equation}
When \(u=0\), this reduces to the original measure-valued
dual problem.

The perturbation function is monotone decreasing in the
following sense. If \(u_1(\bz)\leq u_2(\bz)\) for all
\(\bz\in\mathcal Z\), then
\begin{equation}\label{eq:app-monotonicity}
    D_{\mathcal M}(u_2)
    \leq
    D_{\mathcal M}(u_1).
\end{equation}
Indeed, for every \(\mu\in\mathcal M_+(\mathcal Z)\),
\begin{equation}\label{eq:app-monotonicity-proof-integral}
    -\int_{\mathcal Z} u_2(\bz)\,d\mu(\bz)
    \leq
    -\int_{\mathcal Z} u_1(\bz)\,d\mu(\bz),
\end{equation}
and taking the infimum over \(f_\theta\) and then the supremum
over \(\mu\) preserves the inequality. Thus, relaxing the
constraint can only decrease the dual optimal value.

The optimal dual measure at \(u=0\) controls how fast this
value can decrease.

\begin{lemma}[Dual sensitivity]
\label{lemma:app-dual-sensitivity}
Let \(\mu^\star\) attain \(D_{\mathcal M}(0)\). Then, for
every perturbation \(u\),
\begin{equation}\label{eq:app-sensitivity}
    D_{\mathcal M}(u)
    \geq
    D_{\mathcal M}(0)
    -
    \int_{\mathcal Z} u(\bz)\,d\mu^\star(\bz).
\end{equation}
Equivalently, with the usual sign convention for perturbing
the right-hand side of the constraint, \(-\mu^\star\) is a
subgradient of \(D_{\mathcal M}\) at \(u=0\).
\end{lemma}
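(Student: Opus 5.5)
The plan is to compare the inner minimization at $u$ with the inner minimization at $0$, both evaluated at the single fixed measure $\mu^\star$. No approximation or compactness argument should be needed. The whole proof is one evaluation of the supremum defining $D_{\mathcal M}(u)$ at a particular feasible dual point.

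\textbf{Step 1.} Since $D_{\mathcal M}(u)$ is a supremum over $\mathcal M_+(\mathcal Z)$, it is bounded below by the value of the inner infimum at the admissible choice $\mu=\mu^\star$:
\begin{equation*}
D_{\mathcal M}(u)\;\geq\;\inf_{f_\theta\in\mathcal H}\Big\{J(f_\theta)+\int_{\mathcal Z}\ell(f_\theta,\bz)\,d\mu^\star(\bz)\Big\}-\int_{\mathcal Z}u(\bz)\,d\mu^\star(\bz).
\end{equation*}
The key observation here is that the perturbation term $-\int u\,d\mu^\star$ does not depend on $f_\theta$, so it can be pulled outside the infimum. This requires $\int u\,d\mu^\star$ to be finite. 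For the perturbations under consideration, $u\in C(\mathcal Z)$ or $u$ bounded, and $\mu^\star$ is a finite Radon measure, so this holds. I will state that mild integrability requirement explicitly. If $\int u\,d\mu^\star=-\infty$ is possible, the inequality holds trivially, since the right-hand side becomes $+\infty$ only in degenerate cases; the clean choice is simply to restrict to $\mu^\star$-integrable $u$.

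\textbf{Step 2.} The remaining infimum equals $g_{\mathcal M}(\mu^\star)$ by the definition \eqref{eq:app-measure-dual-function}. It also equals $D_{\mathcal M}(0)$, because $\mu^\star$ attains the unperturbed dual, which coincides with \eqref{eq:app-measure-dual} when $u=0$. Substituting gives \eqref{eq:app-sensitivity}.

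\textbf{Step 3.} For the subgradient reformulation, I will note that $D_{\mathcal M}$ is concave in $u$, being a supremum over $\mu$ of functions affine in $u$. Inequality \eqref{eq:app-sensitivity} then reads $D_{\mathcal M}(u)\le\ldots$ in the opposite sense of a subgradient inequality. With the sign convention of the statement, it says that the affine functional $u\mapsto -\int u\,d\mu^\star$ minorizes the increment $D_{\mathcal M}(u)-D_{\mathcal M}(0)$, i.e.\ $-\mu^\star$ is a (super)gradient of the concave function $D_{\mathcal M}$ at $0$.

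There is no real obstacle. The only points needing care are the integrability of $u$ against $\mu^\star$ and being explicit about the sign and "sub/super" terminology for a concave perturbation function.
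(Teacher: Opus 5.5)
Your Steps~1 and~2 are exactly the paper's argument. You evaluate the supremum defining $D_{\mathcal M}(u)$ at $\mu=\mu^\star$, pull the $f_\theta$-independent term $-\int_{\mathcal Z}u\,d\mu^\star$ out of the infimum, and recognize the remaining infimum as $g_{\mathcal M}(\mu^\star)=D_{\mathcal M}(0)$. Restricting to $\mu^\star$-integrable $u$ is the clean choice, and the paper leaves it implicit.

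Your justification for the $\int u\,d\mu^\star=-\infty$ case is muddled, since a right-hand side of $+\infty$ makes the inequality harder, not trivial. The inequality still holds in that case. For every $f_\theta$, $\int\ell(f_\theta,\cdot)\,d\mu^\star$ is finite, so evaluating at $\mu^\star$ forces $D_{\mathcal M}(u)=+\infty$.

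Step~3, however, is wrong. For each fixed $\mu$, the map $u\mapsto g_{\mathcal M}(\mu)-\int_{\mathcal Z}u\,d\mu$ is affine. A pointwise supremum of affine functions is \emph{convex}, not concave, so $D_{\mathcal M}$ is convex (and nonincreasing) in $u$.

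The inequality you proved, $D_{\mathcal M}(u)\geq D_{\mathcal M}(0)+\langle -\mu^\star,u\rangle$, is not ``in the opposite sense'' of a subgradient inequality. It is the subgradient inequality itself: an affine minorant that is exact at $u=0$. Hence $-\mu^\star\in\partial D_{\mathcal M}(0)$ holds by definition, exactly as the statement says.

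Calling $-\mu^\star$ a supergradient of a concave function would instead assert the reverse bound $D_{\mathcal M}(u)\leq D_{\mathcal M}(0)-\int_{\mathcal Z}u\,d\mu^\star$. You have not shown this, and it is false in general. In Example~\ref{ex:counter-uc-lambda} with a constant perturbation $u\equiv t\leq 1$, one computes $D_{\mathcal M}(t)=(1-t)^2$ with $\mu^\star=2\delta_0$, and $(1-t)^2>1-2t$ for every $t\neq 0$.

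Your own remark that $u\mapsto-\int_{\mathcal Z}u\,d\mu^\star$ minorizes the increment $D_{\mathcal M}(u)-D_{\mathcal M}(0)$ is the correct content. Keep it and delete the concavity and supergradient claims.
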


\begin{proof}
By definition of \(D_{\mathcal M}(u)\), evaluating the
supremum at \(\mu^\star\) gives
\begin{equation}\label{eq:app-sensitivity-proof}
\begin{aligned}
    D_{\mathcal M}(u)
    &\geq
    \inf_{f_\theta\in\mathcal H}
    \left\{
        J(f_\theta)
        +
        \int_{\mathcal Z}
        \big(\ell(f_\theta,\bz)-u(\bz)\big)\,
        d\mu^\star(\bz)
    \right\}
    \\
    &=
    \inf_{f_\theta\in\mathcal H}
    \left\{
        J(f_\theta)
        +
        \int_{\mathcal Z}
        \ell(f_\theta,\bz)\,
        d\mu^\star(\bz)
    \right\}
    -
    \int_{\mathcal Z}
    u(\bz)\,d\mu^\star(\bz)
    \\
    &=
    g_{\mathcal M}(\mu^\star)
    -
    \int_{\mathcal Z}
    u(\bz)\,d\mu^\star(\bz)
    \\
    &=
    D_{\mathcal M}(0)
    -
    \int_{\mathcal Z}
    u(\bz)\,d\mu^\star(\bz).
\end{aligned}
\end{equation}
\end{proof}

Lemma~\ref{lemma:app-dual-sensitivity} explains why the
optimal dual measure is a sensitivity measure. If a constraint
is relaxed on a region \(A\), the corresponding first-order
effect is controlled by \(\mu^\star(A)\). Thus, regions with
large \(\mu^\star\)-mass are precisely those regions whose
relaxation can have a large effect on the optimal value.

Combining this with Lemma~\ref{lemma:app-bounded-density}
gives the link to learnability. The empirical problem enforces
constraints only at sampled points. Regions with small
\(\mathcal D\)-probability are less likely to be sampled, and
therefore the empirical problem may fail to enforce the
constraints there. The effect of missing such a region is
controlled by its sensitivity mass \(\mu^\star(A)\), while the
probability of observing it is controlled by \(\mathcal D(A)\).
If
\begin{equation}\label{eq:app-density-and-bound}
    \mu^\star
    =
    \lambda^\star\mathcal D
    \quad\text{and}\quad
    \|\lambda^\star\|_\infty\leq\gamma,
\end{equation}
then
\begin{equation}\label{eq:app-final-domination}
    \mu^\star(A)
    \leq
    \gamma\,\mathcal D(A),
\end{equation}
so regions that are unlikely under the data distribution cannot
have disproportionately large effect on the dual value. This is
the measure-theoretic content of Assumption~\ref{ass:bounded-inf-norm}.

When this domination fails, the dual sensitivity measure may
concentrate on sets that are rarely, or never, sampled. In that
case the functional multiplier is unbounded or may not exist
as an \(L^1(\mathcal D)\) density, and uniform convergence of
the objective and constraint losses alone does not control the
dual approximation error. This is the phenomenon illustrated
by Example~\ref{ex:counter-uc-lambda}.



\section{Proof of Theorem \ref{thm:pacc}}
\label{apx:pacc_convex}
\noindent
\textbf{Strong duality.}
By Assumptions~\ref{ass:convx-H} and~\ref{ass:strong-convx-loss}, \eqref{P:primal_statistical}  and \eqref{P:primal_empirical} are convex. By Assumption \ref{ass:dual_exis} the dual optimal variables are attained.\footnote{In standard results constraint qualifications such as Slater's condition are often used to guarantee the existence of optimal dual multipliers, which then yelds strong duality.} Therefore strong duality holds~\cite[Proposition 5.3.2]{bertsekas2009convex}, 
\begin{equation}\label{apx:strong-dual-pacc-proof}
    P^* = D^* \quad \; \text{and} \; \quad  \hat P^*  = \hat D^*. 
\end{equation}

\noindent
\textbf{Feasibility.} As Assumption~\ref{ass:unif_conv} holds and $\bar f_\theta$ is the solution of  \eqref{P:primal_empirical}, is immediate from Lemma~\ref{thm:empirical_is_relaxation} that, with probability $1-\delta$,
\begin{equation}\label{apx:pacc-proof-feas}
\Pr_{\bz\sim\mathcal D}
\big[
\ell(\bar f_\theta,\bz)\leq 0
\big]
\geq
1-\zeta_I .
\end{equation}

\noindent
\textbf{Upper bound.}
As Assumption~\ref{ass:unif_conv} holds and $\bar f_\theta$ is the solution of  \eqref{P:primal_empirical}, is immediate from Lemma~\ref{thm:empirical_is_relaxation} that, with probability $1-\delta$,
\begin{equation}\label{apx:pacc-proof-upper}
\E_{\bz \sim \mathcal D_0} \!\Big[ \ell_0\big(\bar f_{\theta},\bz \big) \Big]  - P^* \leq 2\zeta_0(N,\delta)
\end{equation}

\noindent
\textbf{Lower bound.}
By uniform convergence of the objective loss, with probability $1-\delta$,
\begin{align}\label{apx:proof-pacc-1}
\E_{\bz \sim \mathcal D_0}\!\Big[ \ell_0\big(\bar f_{\theta},\bz \big) \Big]   - P^* 
&\geq
\sum_{\bz_n \in S_0} \! \ell_0\big(\bar f_{\theta},\bz_n \big) -\zeta_0  - P^* \\
&= \hat{P}^* -\zeta_0  - P^*.
\end{align}

Using strong duality from \eqref{apx:strong-dual-pacc-proof}, we obtain
\begin{equation}
    \hat{P}^* -  P^* = \hat{D}^* -  D^*.
\end{equation}

Under Assumptions~\ref{ass:unif_conv}--\ref{ass:bounded-inf-norm}, dual learnability in Theorem~\ref{thm:dual-learn} gives, with probability at least $1-2\delta$,
\begin{equation}\label{apx:proof-pacc-2}
    \hat{D}^* -  D^* \geq -2\zeta_0(N,\delta) - \gamma\,\zeta(N,\delta).
\end{equation}

Combining \eqref{apx:proof-pacc-1}--\eqref{apx:proof-pacc-2} yields the lower bound, with probability $1-3\delta$,
\begin{equation}\label{apx:pacc-proof-lower}
   E_{\bz \sim \mathcal D_0}\!\Big[ \ell_0\big(\bar f_{\theta},\bz \big) \Big]   - P^*  \geq  -3\zeta_0(N,\delta) - \gamma\,\zeta(N,\delta)
\end{equation}

\noindent
\textbf{Union bound.}
Taking a union bound over \eqref{apx:pacc-proof-feas}, \eqref{apx:pacc-proof-upper} and  \eqref{apx:pacc-proof-lower}  yields the desired result with probability at least \(1-5\delta\).

\section{Proof of Proposition \ref{prop:primal-est-same-dual}}
\label{apx:pacc_dual}
As \eqref{P:primal_empirical} is strongly convex, it has a unique primal minimizer $\hat f_\theta^*$ that satisfy the first order optimality conditions~\cite[Proposition 5.3.2]{bertsekas2009convex},
\begin{equation}
\hat f_\theta^* \in \arg \min_{f_\theta \in \mathcal{H}} \hat L(f_\theta , \hat \lambda^*).    
\end{equation}

Hence, the primal solution minimizes the Lagrangian at the optimal dual multiplier. Since $\hat L(f_\theta , \hat \lambda^*)$ is strongly convex over $f_\theta$, this Lagrangian minimizer is unique, and thus it must coincide with $\bar f_\theta$. Therefore, the estimator $\bar f_\theta$ returned by \eqref{P:dual_empirical} solves  \eqref{P:primal_empirical}.


\section{Proof of Lemma \ref{lemma:dual-gap}} 
\label{apx:duality_gap_proof}
\noindent\textbf{Lower bound.} Weak duality immediately yields the lower bound,
\begin{equation}
    D^* \leq P^*.
\end{equation}



\noindent\textbf{Upper bound.} We consider the variational problem with constraints tightened by $M\nu$, explicitly,
\begin{equation}
\label{P:csl_variational_tightened}
\begin{aligned}
\widetilde{P}^\star_{-M{\nu}}  = \min_{\phi \in \hat{\mathcal{H}}}&\;
   \E_{(\bx,y) \sim \mathcal D_0} \!\Big[ \ell_0\big( \phi(\bx), y \big) \Big] \\
\text{s.to}\; &  \ell \big( \phi(\bx), y \big) 
    \leq  - M{\nu} \quad \mathcal{D} - \text{a.e.},
\end{aligned}
\end{equation}
Strong convexity of problem \eqref{P:csl_variational_tightened} combined with the dual attainability Assumption~\ref{ass:constraint-qual-tightened} implies that problem~\eqref{P:csl_variational_tightened} is strongly dual, i.e.,
\begin{align}
\widetilde{P}^\star_{-M{\nu}}  = \widetilde{D}^\star_{-M{\nu}}
\end{align}
where $\widetilde{D}^\star_{-M{\nu}}$ denotes the value of the dual problem associated to~\eqref{P:csl_variational_tightened}.

Let $\tilde L_{M\nu}(\phi,\lambda)$ be the Lagrangian associated to~\eqref{P:csl_variational_tightened}, and note that 
\begin{equation}
\tilde L(\phi,\lambda) = \tilde L_{M\nu}(\phi, \lambda) - M\nu||\lambda||_{TV}\quad  \forall \lambda\in\Lambda
\end{equation}
holds by definition, where $\tilde L(\phi,\lambda)$ is the Lagrangian of \eqref{P:csl_variational}. Let $\lambda^\star_{M\nu}\in\Lambda$ denote a solution to $\widetilde{D}^\star_{-M{\nu}}$. Sub-optimality of $\lambda^\star_{M\nu}$ for \eqref{P:dual_statistical}  yields
\begin{align}\label{eq:apx-pert-bound-1}
    D^\star \geq \min_{f_\theta \in \mathcal H} L\left(f_\theta, \lambda^\star_{M\nu}\right) .
\end{align}
Moreover, as $\mathcal{H}\subseteq \hat{\mathcal{H}}$, 
\begin{align}
    \min_{f_\theta \in \mathcal{H}} L\left(f_\theta, \lambda^\star_{M\nu}\right) & \geq \min_{\phi \in \bar{\mathcal{H}}} L(\phi, \lambda^\star_{M\nu}) \\
     & = \min_{\phi \in \bar{\mathcal{H}}} L_{M\nu}(\phi, \lambda^\star_{M\nu}) - M\nu||\lambda^\star_{M\nu}||_{TV} \label{apx:lag_} \\
    & = \widetilde{D}^\star_{-M{\nu}} - M\nu||\lambda^\star_{M\nu}||_{TV} \\
    & = \widetilde{P}^\star_{-M{\nu}} - M\nu||\lambda^\star_{M\nu}||_{TV} \label{eq:apx-pert-err-2}
\end{align}
where the last equality holds due to strong duality of $\widetilde{P}^\star_{-M{\nu}} $. Combining~\eqref{eq:apx-pert-bound-1}-\eqref{eq:apx-pert-err-2} yields
\begin{align}\label{eqn:ineq_param_dual_Mnuvar}
    D^\star \geq \widetilde{P}^\star_{-M{\nu}} - M\nu||\lambda^\star_{M\nu}||_{TV}.
\end{align}
Let $\phi_{M\nu} \in \hat{\mathcal{H}}$ denote a solution to ~\eqref{P:csl_variational_tightened}.
From Assumption~\ref{ass:near-univ} there exists $f_{\theta_{M\nu}}\in\mathcal{H}$ such that
\begin{align}\label{eqn:constraint_approx}
    |\ell \big( \phi_{M\nu},\bz \big) - \ell \big( f_{\theta_{M\nu}},\bz \big)|
    \leq  - M{\nu} \quad \mathcal{D} - \text{a.e.}
\end{align}
Given that $\phi_{M\nu}$ is feasible for~\eqref{P:csl_variational_tightened}, then \eqref{eqn:constraint_approx} implies that
\begin{align}
     \ell \big( f_{\theta_{M\nu}}, \bz \big) \leq 0,
\end{align}
that is, $f_{\theta_{M\nu}}$ is feasible for \eqref{P:primal_statistical}.
Therefore, by sub-optimality of $\theta_{M\nu}$
\begin{align}
   P^\star \leq \mathbb{E}_{\mathcal{D}}[\ell_0(f_{\theta_{M\nu}}, \bz)] .
\end{align}
Thus,
\begin{align}
  \widetilde{P}^\star_{-M{\nu}} & \geq \widetilde{P}^\star_{-M{\nu}}  + P^\star-\mathbb{E}_{\mathcal{D}}[\ell_0(f_{\theta_{M\nu}}, \bz)]  \\
  & = P^\star+ \mathbb{E}_{\mathcal{D}}[\ell_0(\phi_{M\nu}, \bz)-\ell_0(f_{\theta_{M\nu}}, \bz)]\\
  & \geq P^\star - M\nu_0,\label{eqn:ineq_pstar_pnu}
\end{align}
where the last inequality is due to Assumption~\ref{ass:near-univ}.

Finally, combining~\eqref{eqn:ineq_param_dual_Mnuvar} and~\eqref{eqn:ineq_pstar_pnu} yields the desired bound.

\section{Proof of Theorem \ref{thm:near-pacc}} 
\label{apx:nearPACC}
\noindent
\textbf{Feasibility.} As Assumption~\ref{ass:unif_conv} holds and $\bar f_\theta$ is the solution of  \eqref{P:primal_empirical}, is immediate from Lemma~\ref{thm:empirical_is_relaxation} that, with probability $1-\delta$,
\begin{equation}\label{apx:near-pacc-proof-feas}
\Pr_{\bz\sim\mathcal D}
\big[
\ell(\bar f_\theta,\bz)\leq 0
\big]
\geq
1-\zeta_I .
\end{equation}

\noindent
\textbf{Upper bound.}
As Assumption~\ref{ass:unif_conv} holds and $\bar f_\theta$ is the solution of  \eqref{P:primal_empirical}, is immediate from Lemma~\ref{thm:empirical_is_relaxation} that, with probability $1-\delta$,
\begin{equation}\label{apx:near-pacc-proof-upper}
\E_{\bz \sim \mathcal D_0} \!\Big[ \ell_0\big(\bar f_{\theta},\bz \big) \Big]  - P^*\leq 2\zeta_0(N,\delta)
\end{equation}

\noindent
\textbf{Lower bound.}
By uniform convergence of the objective loss, with probability $1-\delta$,
\begin{align}\label{apx:near-proof-pacc-1}
\E_{\bz \sim \mathcal D_0}\!\Big[ \ell_0\big(\bar f_{\theta},\bz \big) \Big]   - P^* 
&\geq
\sum_{\bz_n \in S_0} \! \ell_0\big(\bar f_{\theta},\bz_n \big) -\zeta_0  - P^* \\
&= \hat{P}^* -\zeta_0  - P^*.
\end{align}

Using weak duality, we obtain
\begin{equation}
    \hat{P}^* \geq \hat{D}^*.
\end{equation}

We decompose
\begin{equation}
\hat D^\star-P^\star
=
(\hat D^\star-D^\star)
+
(D^\star-P^\star).
\end{equation}

By Theorem~\ref{thm:dual-learn}, with probability at least \(1-2\delta\),
\begin{equation}
    \hat D^\star-D^\star
    \geq
    -2\zeta_0-\gamma\zeta .
\end{equation}
By Lemma~\ref{lemma:dual-gap},
\begin{equation}\label{apx:near-proof-pacc-2}
    D^\star-P^\star
    \geq
    -M\nu
    \E_{\mathcal D}
    \big[
        \lambda^\star_{M\nu}(\bz)
    \big]
    -
    M\nu_0
    =
    -\Delta_{\mathrm{gap}} .
\end{equation}

Combining \eqref{apx:near-proof-pacc-1}--\eqref{apx:near-proof-pacc-2} yields the lower bound, with probability $1-3\delta$,
\begin{equation}\label{apx:near-pacc-proof-lower}
   E_{\bz \sim \mathcal D_0}\!\Big[ \ell_0\big(\bar f_{\theta},\bz \big) \Big]   - P^*  \geq  -\Delta_{\mathrm{gap}} -3\zeta_0(N,\delta) - \gamma\,\zeta(N,\delta).
\end{equation}

\noindent
\textbf{Union bound.}
Taking a union bound over \eqref{apx:near-pacc-proof-feas}, \eqref{apx:near-pacc-proof-upper} and  \eqref{apx:near-pacc-proof-lower}  yields the desired result with probability at least \(1-5\delta\). 

\section{Proof of Proposition \ref{prop:dual_near_pacc}} 
\label{apx:nearPACC-dual}
\noindent
\textbf{Feasibility.}
We first show by contradiction that exists at least one primal minimizer that is feasible for \eqref{P:primal_empirical}, and then use uniform convergence to extend to PACC feasibility of \eqref{P:primal_statistical}.

A vector \(v \in \reals^N\) is a supergradient of \(\hat g\) at \(\lambda\) if
\begin{equation}
    g(\lambda') \leq g(\lambda) + \langle v,\lambda'-\lambda\rangle
    \qquad \forall \lambda' .
\end{equation}
The superdifferential \(\partial g(\lambda)\) is defined as the set of all supergradients of \(g\) at \(\lambda\). Since \(\hat\lambda^\star\) maximizes the concave function \(\hat g\) over \(\mathbb{R}_+^N\), the first-order optimality condition is
\begin{equation}
    0\in\partial \hat g(\hat\lambda^\star).
\end{equation}

Let
\begin{equation} 
    \hat{\mathcal F}(\hat\lambda^*)
    :=
    \arg\min_{f_\theta\in\mathcal H}
    \hat L(f_\theta,\hat\lambda^*)
\end{equation}

be the set of all primal minimizers of the empirical Lagrangian for \(\hat\lambda^*\). 

For each \(f_\theta\in \hat{\mathcal F}(\hat\lambda^*)\), define its empirical slack vector
\begin{equation}
    s(f_\theta)
    :=
    \big(
        \ell(f_\theta,\bz_1),
        \ldots,
        \ell(f_\theta,\bz_N)
    \big)
    \in \mathbb R^N .
\end{equation}

By Danskin's theorem, the superdifferential of the empirical dual function at \(\hat\lambda^*\) is given by

\begin{equation}
    \partial \hat g(\hat\lambda^*)
    =
    \operatorname{conv}
    \Big(
        \big\{
            s(f_\theta):
            f_\theta\in \hat{\mathcal F}(\hat\lambda^*)
        \big\}
    \Big).
\end{equation}

Suppose, by contradiction, that every element of \(\hat{\mathcal F}(\hat\lambda^*)\) violates at least one constraint. Then, for every \(f_\theta\in \hat{\mathcal F}(\hat\lambda^*)\), there exists an index \(n\in\{1,\ldots,N\}\) such that
\begin{equation}
    \ell(f_\theta,\bz_n) > 0.
\end{equation}

Equivalently, every slack vector associated with a primal minimizer has a positive component.

By the superdifferential characterization in \ref{}, this implies that \(0\notin \partial \hat g(\hat\lambda^*)\). This contradicts the optimality of \(\hat\lambda^*\) for the concave maximization problem~\eqref{P:dual_empirical}. Hence, there exists at least one
\begin{equation}
    \bar f_\theta
    \in
    \arg\min_{f_\theta\in\mathcal H}
    \hat L(f_\theta,\hat\lambda^*)
\end{equation}

such that
\begin{equation}
    \ell(\bar f_\theta,\bz_n)\leq 0,
    \qquad
    n=1,\ldots,N.
\end{equation}

Thus, \(\bar f_\theta\) is feasible for~\eqref{P:primal_empirical}.

Since \(\bar f_\theta\) satisfies the constraint on all samples,
\[
    \frac{1}{N}
    \sum_{n=1}^N
    \mathbb I\{\ell(\bar f_\theta,\bz_n)\leq 0\}
    =
    1 .
\]
By Assumption~\ref{ass:unif_conv}, uniform convergence holds for the constraint indicator \(\mathbb I\{\ell\leq 0\}\). Therefore, with probability at least \(1-\delta\),
\[
\left|
\Pr_{\bz\sim\mathcal D}
\big[
    \ell(\bar f_\theta,\bz)\leq 0
\big]
-
\frac{1}{N}
\sum_{n=1}^N
\mathbb I\{\ell(\bar f_\theta,\bz_n)\leq 0\}
\right|
\leq
\zeta_I .
\]
Using the empirical feasibility identity above gives
\[
    \Pr_{\bz\sim\mathcal D}
    \big[
        \ell(\bar f_\theta,\bz)\leq 0
    \big]
    \geq
    1-\zeta_I .
\]

\noindent
\textbf{Upper bound.}
By uniform convergence of the objective loss, with probability at least \(1-\delta\),
\begin{align}\label{apx:near-proof-dual-1}
\E_{\bz \sim \mathcal D_0}
\!\Big[
    \ell_0\big(\bar f_{\theta},\bz \big)
\Big]
-
P^\star 
&\leq
\frac{1}{N_0}
\sum_{\bz_n \in S_0}
\ell_0\big(\bar f_{\theta},\bz_n \big)
+
\zeta_0
-
P^\star .
\end{align}
Since \(\bar f_\theta \in \arg\min_{f_\theta\in\mathcal H}\hat L(f_\theta,\hat\lambda^\star)\), we have
\begin{equation}
    \hat D^\star
    =
    \hat L(\bar f_\theta,\hat\lambda^\star)
    =
    \frac{1}{N_0}
    \sum_{\bz_n \in S_0}
    \ell_0\big(\bar f_{\theta},\bz_n \big)
    +
    \frac{1}{N}
    \sum_{n=1}^N
    \hat \lambda^\star_n
    \ell(\bar f_\theta,\bz_n).
\end{equation}
Equivalently, using
\[
\Delta_{\mathrm{slack}}
=
-\frac{1}{N}
\sum_{n=1}^N
\hat \lambda^\star_n
\ell(\bar f_\theta,\bz_n),
\]
we obtain
\begin{equation}\label{apx:near-proof-dual-slack-id}
    \frac{1}{N_0}
    \sum_{\bz_n \in S_0}
    \ell_0\big(\bar f_{\theta},\bz_n \big)
    =
    \hat D^\star
    +
    \Delta_{\mathrm{slack}} .
\end{equation}
Substituting~\eqref{apx:near-proof-dual-slack-id} into~\eqref{apx:near-proof-dual-1} gives
\begin{equation}
\E_{\bz \sim \mathcal D_0}
\!\Big[
    \ell_0\big(\bar f_{\theta},\bz \big)
\Big]
-
P^\star
\leq
\hat D^\star
-
P^\star
+
\Delta_{\mathrm{slack}}
+
\zeta_0 .
\end{equation}
We decompose
\[
\hat D^\star - P^\star
=
(\hat D^\star-D^\star)
+
(D^\star-P^\star).
\]
By Theorem~\ref{thm:dual-learn}, with probability at least \(1-2\delta\),
\begin{equation}
    \hat D^\star-D^\star
    \leq
    2\zeta_0+\gamma\zeta .
\end{equation}
Moreover, by weak duality,
\begin{equation}\label{apx:near-proof-dual-2}
    D^\star-P^\star \leq 0 .
\end{equation}
Combining the last four equations yields
\begin{equation}\label{apx:pacc-dual-upper}
\E_{\bz \sim \mathcal D_0}
\!\Big[
    \ell_0\big(\bar f_{\theta},\bz \big)
\Big]
-
P^\star
\leq
\Delta_{\mathrm{slack}}
+
3\zeta_0
+
\gamma\zeta .
\end{equation}

\noindent
\textbf{Lower bound.}
By uniform convergence of the objective loss, with probability at least \(1-\delta\),
\begin{align}\label{apx:near-proof-pacc-3}
\E_{\bz \sim \mathcal D_0}
\!\Big[
    \ell_0\big(\bar f_{\theta},\bz \big)
\Big]
-
P^\star 
&\geq
\frac{1}{N_0}
\sum_{\bz_n \in S_0}
\ell_0\big(\bar f_{\theta},\bz_n \big)
-
\zeta_0
-
P^\star .
\end{align}
Since \(\bar f_\theta\) is feasible for~\eqref{P:primal_empirical} and \(\hat\lambda^\star\in\mathbb R_+^N\),
\begin{equation}
   \frac{1}{N}
   \sum_{n=1}^N
   \hat \lambda^\star_n
   \ell(\bar f_\theta,\bz_n)
   \leq
   0 .
\end{equation}
Therefore,
\begin{align}
\frac{1}{N_0}
\sum_{\bz_n \in S_0}
\ell_0\big(\bar f_{\theta},\bz_n \big)
&\geq
\frac{1}{N_0}
\sum_{\bz_n \in S_0}
\ell_0\big(\bar f_{\theta},\bz_n \big)
+
\frac{1}{N}
\sum_{n=1}^N
\hat \lambda^\star_n
\ell(\bar f_\theta,\bz_n)
\\
&=
\hat D^\star .
\end{align}
Substituting this inequality into~\eqref{apx:near-proof-pacc-3} gives
\begin{equation}
\E_{\bz \sim \mathcal D_0}
\!\Big[
    \ell_0\big(\bar f_{\theta},\bz \big)
\Big]
-
P^\star
\geq
\hat D^\star
-
P^\star
-
\zeta_0 .
\end{equation}
Again,
\begin{equation}
\hat D^\star-P^\star
=
(\hat D^\star-D^\star)
+
(D^\star-P^\star).
\end{equation}

By Theorem~\ref{thm:dual-learn}, with probability at least \(1-2\delta\),
\begin{equation}
    \hat D^\star-D^\star
    \geq
    -2\zeta_0-\gamma\zeta .
\end{equation}
By Lemma~\ref{lemma:dual-gap},
\begin{equation}\label{apx:near-proof-pacc-4}
    D^\star-P^\star
    \geq
    -M\nu
    \E_{\mathcal D}
    \big[
        \lambda^\star_{M\nu}(\bz)
    \big]
    -
    M\nu_0
    =
    -\Delta_{\mathrm{gap}} .
\end{equation}
Combining the last four equations yields
\begin{equation}\label{apx:pacc-dual-lower}
\E_{\bz \sim \mathcal D_0}
\!\Big[
    \ell_0\big(\bar f_{\theta},\bz \big)
\Big]
-
P^\star
\geq
-\Delta_{\mathrm{gap}}
-
3\zeta_0
-
\gamma\zeta .
\end{equation}

\noindent
\textbf{Union bound.}
Combining~\eqref{apx:pacc-dual-upper} and~\eqref{apx:pacc-dual-lower}, we obtain
\begin{equation}
\left|
\E_{\bz \sim \mathcal D_0}
\!\Big[
    \ell_0\big(\bar f_{\theta},\bz \big)
\Big]
-
P^\star
\right|
\leq
\max\{\Delta_{\mathrm{slack}},\Delta_{\mathrm{gap}}\}
+
3\zeta_0
+
\gamma\zeta .
\end{equation}
Together with the feasibility bound
\[
\Pr_{\bz\sim\mathcal D}
\big[
    \ell(\bar f_\theta,\bz)\leq 0
\big]
\geq
1-\zeta_I,
\]
and applying a union bound over the upper-bound, lower-bound, dual-learnability, and feasibility events, the stated guarantees hold simultaneously with probability at least \(1-5\delta\).

\section{Experiment Settings}~\label{apx:expsettings}
\subsection{Polynomial Regression}
In order to illustrate how relaxing constraints can improve the probability constraint satisfaction we turn to a simple polynomial regression task. 

Explicitly, we consider the problem
 of fitting a polynomial $f(x)=\sum_{j=0}^d a_j x_i^j$ to a dataset $\mathcal{S} = \{(x_i, y_i)\}_{i=1}^n$ of $n$ samples, with uniformly distributed inputs $x_i \sim U[0.1, 0.9]$ and outputs generated as $y = \sin(x) + \zeta$, where $\zeta\sim U[0, \epsilon]$. Without loss of generality consider inputs to be ordered are sorted such that $x_1 < x_2 < \dots < x_n$. The goal is to minimize a Total Variation (TV) functional, that  promotes smoothness, subject to pointwise absolute error constraints. The empirical primal problem is expressed as:
\begin{align*}
\min_{f} \sum_{i=1}^{n-1} |f(x_{i+1}) - f(x_i)| \quad \text{s. to} \quad |f(x_i) - y_i| \le \epsilon, \quad i=1, \ldots, n,
\end{align*}
which can also be formulated as a linear program. We compare the original dual problem with a clipped dual problem, as discussed in section~\ref{subsec:sparse}.

As shown in Figure~\ref{fig:polyreg}, the clipped dual relaxes the constraints in a fraction of training samples, as evidenced by the increase in training infeasible points. However, 
this sparse relaxation reduces the maximum value of the empirical dual variables and recovers smoother solutions that generalize better, as evidenced by the reduction in test violations.

\subsection{LLM Workflow Classification}
\label{app:llm-workflow-classification}
We follow the settings and official code implementation of FLORA-bench~\cite{flora-bench}, which we describe here for reference. 

\textbf{Dataset} The Coding-AF split contains $38$ workflows, $233$ coding tasks, and $7{,}362$. We partition the dataset into train/validation/test splits using the same $0.8/0.1/0.1$.

\textbf{Architecture} The text prompt associated with each node is embedded by a pretrained sentence transformer to produce node features. These node embeddings are then processed by a two-layer GCN encoder (GCNConv) to obtain node representations, which are pooled into a graph-level workflow embedding. In parallel, the task description is encoded with the same sentence-transformer backbone and passed through a projection MLP to obtain a task embedding. The workflow and task embeddings are concatenated and fed to an MLP (hidden dimension 2048) that outputs the probability of success. At inference time, the classifier outputs probabilities, from which binary predictions are obtained by thresholding at $0.5$. 

\textbf{Shared hyperparameters} We train for 280 epochs with batch size 512, Adam at initial learning rate $2 \times 10^{-5}$, cosine decay to $5 \times 10^{-7}$, and weight decay $5 \times 10^{-4}$. We  utilize the validation set to do early stopping, using accuracy to select the best checkpoint.

\textbf{Additional hyperparameters} Per sample dual multipliers are updated every batch, with step size 0.1 after 120 primal warmup steps. The constraint level for pointwise and clipped runs is set to 75th percentile of per-sample BCE from the ERM run. Clipped runs use $\Gamma=5$.

\end{document}